\documentclass[twoside,11pt,preprint]{article}

\usepackage{jmlr2e}
\usepackage{lastpage}
\usepackage[T1]{fontenc}
\usepackage{microtype}
\usepackage{amsmath}
\usepackage{tikz}
\usepackage{algorithm}
\usepackage{algorithmic}
\usepackage{tabularx}
\usepackage{booktabs}
\usepackage{subcaption}
\usepackage{xcolor}

\definecolor{npg1}{HTML}{E64B35}
\definecolor{npg2}{HTML}{00A087}

\newtheorem{assumption}{Assumption}

\usetikzlibrary{arrows.meta,positioning,fit,backgrounds,calc}

\def\ndot{n_{\scriptscriptstyle\bullet}}

\hypersetup{colorlinks=true,citecolor=blue,urlcolor=blue}

\jmlrheading{23}{2026}{1-\pageref{LastPage}}{1/21; Revised 5/22}{9/22}{21-0000}{Ryan Thompson, Matt P. Wand, and Veerabhadran Baladandayuthapani}

\ShortHeadings{Structure Learning on Clustered Data}{Thompson, Wand, and Baladandayuthapani}
\firstpageno{1}

\begin{document}

\title{Structure Learning on Clustered Data}

\author{%
\name Ryan Thompson \email ryan.thompson-1@uts.edu.au \\
\addr School of Mathematical and Physical Sciences \\
University of Technology Sydney \\
Ultimo, NSW 2007, Australia
\AND
\name Matt P. Wand \email matt.wand@uts.edu.au \\
\addr School of Mathematical and Physical Sciences \\
University of Technology Sydney \\
Ultimo, NSW 2007, Australia
\AND
\name Veerabhadran Baladandayuthapani \email veerab@umich.edu \\
\addr Department of Biostatistics \\
University of Michigan \\
Ann Arbor, MI 48105, USA
}

\editor{My editor}

\maketitle

\begin{abstract}
Recent algorithmic advances have made directed acyclic graph (DAG) structure learning scalable for causal discovery. Yet, the currently available techniques assume a completely homogeneous population, precluding their application to clustered data where cluster-specific variations (e.g., patient-specific effects) are common. We address this issue by introducing a new approach that estimates a global structure while accounting for local cluster-level effects. The key idea is to extend the fixed- and random-effects framework of classical mixed models to the structure learning setting. Towards this end, we present a differentiable graph coupling mechanism that guarantees the union of the fixed- and random-effects graphs remains acyclic. Computationally, we provide a provably convergent first-order method and leverage efficient batched updates across clusters. Statistically, we establish identifiability of the model and show that our approach recovers the true structure asymptotically. In experiments on real and synthetic data, our proposal detects dependencies missed by alternative estimators, underscoring its value for structure learning in clustered settings.
\end{abstract}

\begin{keywords}
causal discovery, continuous acyclicity, directed acyclic graphs, heterogeneous data, mixed effects
\end{keywords}

\section{Introduction}
\label{sec:introduction}

Directed acyclic graphs (DAGs)---graphs with directed edges and no cycles---are a fundamental tool for probabilistic modeling and causal inference \citep{Spirtes2000,Pearl2009}. Their capacity to compactly encode conditional independence relations makes them useful in a wide range of domains, e.g., psychology \citep{Foster2010}, economics \citep{Imbens2020}, and epidemiology \citep{Tennant2021}. The first formal treatments in statistics and machine learning appeared in the late 1980s \citep{Lauritzen1988,Pearl1988}. Three decades later, \citet{Zheng2018} made a pivotal advance that transformed structure learning from a notoriously difficult combinatorial optimization problem to a tractable continuous optimization problem, opening the way to scalably learn graphs with hundreds of nodes using first-order algorithms. Surveys by \citet{Vowels2022} and \citet{Kitson2023} provide comprehensive overviews of this shift and its impact on modern structure learning.

\begin{figure}[t]
\centering
\begin{tikzpicture}[
  >=Latex,
  var/.style={circle, draw, inner sep=1pt, minimum size=9pt},
  dagedge/.style={-Latex, line width=0.6pt},
  Bedge/.style={dagedge},
  UedgeOne/.style={dagedge, loosely dashed, draw=npg1},
  WedgeOne/.style={dagedge, dashed, draw=npg1},
  UedgeThr/.style={dagedge, dashed, draw=npg2},
  WedgeThr/.style={dagedge, dashed, draw=npg2},
  WedgeMixOne/.style={dagedge, dotted, draw=npg1},
  WedgeMixThr/.style={dagedge, dotted, draw=npg2},
  symbol/.style={font=\Large, inner sep=1pt},
  graphtitle/.style={font=\small, anchor=east, align=center},
  graphbox/.style={draw=black!20, rounded corners=2pt, inner sep=3pt}
]

\def\xB{0}
\def\xPlus{2}
\def\xU{5.5}
\def\xEq{8}
\def\xW{11}

\def\yTop{1.4}
\def\yBot{-1.4}
\def\yEll{0.1} 

\newcommand{\drawDAG}[4]{%
  \node[var] (#11) at (#2-0.65, #3+0.55) {\scriptsize $X_1$};
  \node[var] (#12) at (#2+0.65, #3+0.55) {\scriptsize $X_2$};
  \node[var] (#13) at (#2-0.65, #3-0.55) {\scriptsize $X_3$};
  \node[var] (#14) at (#2+0.65, #3-0.55) {\scriptsize $X_4$};
  \node[inner sep=0pt, fit=(#11)(#12)(#13)(#14)] (#1box) {};
  \node[graphtitle] at ($(#1box.west)+(-2mm,0)$) {#4};
}

\drawDAG{B}{\xB}{0}{$\mathbf{B}$}
\draw[Bedge] (B1) -- (B2);
\draw[Bedge] (B1) -- (B3);
\draw[Bedge] (B2) -- (B4);
\draw[Bedge] (B3) -- (B4);

\node[symbol] at (\xPlus,\yTop) {+};
\drawDAG{Uone}{\xU}{\yTop}{$\mathbf{U}^{(1)}$}
\draw[UedgeOne] (Uone1) -- (Uone2);
\draw[UedgeOne] (Uone2) -- (Uone3);
\node[symbol] at (\xEq,\yTop) {=};
\drawDAG{Wone}{\xW}{\yTop}{$\mathbf{W}^{(1)}$}
\draw[dagedge] (Wone1) -- (Wone3);
\draw[dagedge] (Wone2) -- (Wone4);
\draw[dagedge] (Wone3) -- (Wone4);
\draw[WedgeMixOne] (Wone1) -- (Wone2);
\draw[WedgeOne]    (Wone2) -- (Wone3);

\node at (\xU,\yEll) {$\vdots$};
\node at (\xPlus,\yEll) {$\vdots$};
\node at (\xW,\yEll) {$\vdots$};
\node at (\xEq,\yEll) {$\vdots$};

\node[symbol] at (\xPlus,\yBot) {+};
\drawDAG{Uthr}{\xU}{\yBot}{$\mathbf{U}^{(m)}$}
\draw[UedgeThr] (Uthr1) -- (Uthr2);
\draw[UedgeThr] (Uthr2) -- (Uthr3);
\node[symbol] at (\xEq,\yBot) {=};
\drawDAG{Wthr}{\xW}{\yBot}{$\mathbf{W}^{(m)}$}
\draw[dagedge] (Wthr1) -- (Wthr3);
\draw[dagedge] (Wthr2) -- (Wthr4);
\draw[dagedge] (Wthr3) -- (Wthr4);
\draw[WedgeMixThr] (Wthr1) -- (Wthr2);
\draw[WedgeThr]    (Wthr2) -- (Wthr3);

\begin{scope}[on background layer]
  \node[graphbox, fit=(Bbox)] {};
  \node[graphbox, fit=(Uonebox)] {};
  \node[graphbox, fit=(Uthrbox)] {};
  \node[graphbox, fit=(Wonebox)] {};
  \node[graphbox, fit=(Wthrbox)] {};
\end{scope}

\end{tikzpicture}
\caption{Schematic of mixed DAGs. The fixed-effects graph $\mathbf{B}$, shown with solid black arrows, is added to cluster-specific random-effect deviations $\mathbf{U}^{(i)}$, shown with colored dashed arrows, to form $\mathbf{W}^{(i)}=\mathbf{B}+\mathbf{U}^{(i)}$. Dotted colored arrows indicate edges with both fixed- and random-effect contributions. Different colors correspond to different clusters.}
\label{fig:illustration}
\end{figure}

While recent advances have made structure learning scalable, most methods are confined to the iid setting, where the population is assumed to follow a single homogeneous causal model. This assumption is convenient but fails when causal effects vary within the population. Several strands of work have begun to relax homogeneity, including mixtures of DAGs \citep{Saeed2020}, DAGs adapted to distributional shifts \citep{Huang2020}, and DAGs that vary with modifying covariates \citep{Thompson2024}. However, one of the most typical sources of heterogeneity addressed by non-graphical models, clustered data, remains largely unexplored for DAGs. In clustered data, observations come with known cluster memberships, and while clusters may share a common causal structure, the strength or polarity of effects can differ across them. For example, in proteomic studies, repeated measures on the same patients form clusters. These clusters may exhibit protein interactions that broadly align across the population yet have effects that differ in detail across individuals or groups.

Drawing on the machinery of mixed-effects models \citep[e.g.,][]{Jiang2021}, we develop a scalable framework for learning DAGs on clustered data. Formally, let $\mathbf{X}^{(i)}\in\mathbb{R}^{n_i\times p}$ be $n_i$ observations on $p$ variables belonging to cluster $i=1,\dots,m$. A DAG on these variables can be represented by a \emph{sparse} matrix $\mathbf{W}^{(i)}\in\mathbb{R}^{p\times p}$, known as a weighted adjacency matrix, with $w_{jk}^{(i)}\neq 0$ if and only if a directed edge exists from node $j$ to node $k$. Hence, learning the DAG reduces to estimating an acyclic $\mathbf{W}^{(i)}$, which we decompose into population-level and cluster-specific components. This decomposition leads to a structural equation model:
\begin{equation}
\label{eq:sem}
\mathbf{X}^{(i)}=\mathbf{X}^{(i)}\mathbf{W}^{(i)}+\boldsymbol{\varepsilon}^{(i)},\quad\mathbf{W}^{(i)}=\mathbf{B}+\mathbf{U}^{(i)},\quad i=1,\dots,m,
\end{equation}
where $\boldsymbol{\varepsilon}^{(i)}\in\mathbb{R}^{n_i\times p}$ is stochastic noise. Here, the matrix $\mathbf{B}\in\mathbb{R}^{p\times p}$ represents \emph{fixed effects}, traditionally understood in linear mixed models as coefficients common to all clusters. Meanwhile, the matrices $\mathbf{U}^{(i)}\in\mathbb{R}^{p\times p}$ contain \emph{random effects}, which capture cluster-specific deviations. In analogy with linear mixed models, the random effects associated with node $k$ may be taken to follow $\mathbf{u}_k^{(i)}\sim\mathrm{N}(\mathbf{0},\operatorname{diag}(\boldsymbol{\gamma}_k))$, where $\boldsymbol{\gamma}_k$ is the $k$th column of a matrix $\boldsymbol{\Gamma}\in\mathbb{R}_+^{p\times p}$ collecting all random-effect variances. As Figure~\ref{fig:illustration} illustrates, unlike standard (purely fixed-effects) DAGs, where a single $\mathbf{W}$ applies uniformly across the population, mixed DAGs allow specific effects to vary with cluster membership under a global structure.

Under model \eqref{eq:sem}, the overall DAG is determined by the union of two sparse matrices: the fixed-effects matrix $\mathbf{B}$, whose nonzero entries capture population-level effects, and the random-effects variance matrix $\boldsymbol{\Gamma}$, whose nonzero entries indicate which cluster-specific deviations are active across clusters. Learning the model therefore amounts to estimating $\mathbf{B}$ and $\boldsymbol{\Gamma}$ subject to the constraint that their union remains acyclic. We enforce this constraint through a graph coupling mechanism based on existing differentiable characterizations of acyclicity, guaranteeing that the combined fixed- and random-effects graphs are cycle-free. On the computational side, we develop a first-order optimization method with provable convergence and leverage efficient batched updates across clusters, making it feasible to learn graphs with up to hundreds of nodes. We further derive statistical results establishing identifiability of the model and correct recovery of the true structure asymptotically. Synthetic experiments and an application to proteomics confirm that our proposed approach recovers both population-level and cluster-specific effects that alternative approaches fail to detect.

The remainder of the paper is organized as follows. Section~\ref{sec:related} situates our proposal in the context of the broader literature. Section~\ref{sec:mixed} introduces our framework for learning mixed DAGs. Section~\ref{sec:scalable} describes algorithms and strategies for scalable computation. Section~\ref{sec:statistical} establishes an asymptotic statistical guarantee. Section~\ref{sec:experiments} reports experiments on synthetic, semi-synthetic, and real data. Section~\ref{sec:final} closes the paper with some final remarks.

\section{Related Work}
\label{sec:related}

The breakthrough work by \citet{Zheng2018} reformulates the combinatorial acyclicity constraint as a continuous, differentiable constraint by exploiting the correspondence between cycles in a graph and traces of matrix powers. Subsequent work by \citet{Zhang2022} and \citet{Bello2022}, among others, develops alternative continuous acyclicity characterizations with improved numerical behavior, including polynomial and log-determinant formulations. \citet{Zhang2025} provide a unified analytic perspective of these characterizations. \citet{Zheng2020} devise extensions to non-parametric DAGs, \citet{Thompson2025a} extend the framework to variational inference, while \citet{Deng2023} and \citet{Deng2024} establish theoretical guarantees. While these approaches are nonconvex, \citet{Deng2023a} show how discrete combinatorics can be used to escape local minima if desired.

Though the above differentiable approaches assume iid observations, a few classical methods can handle clustered data. \citet{Bae2016} incorporate node-level random intercepts into DAGs, capable of capturing limited forms of cluster-level variation, while \citet{Li2018} allow edge strengths to consist of both fixed and random components. Both, however, rely on traditional discrete search and hence remain fundamentally combinatorial. More recent work considers DAGs that vary as a function of observed covariates rather than across clusters: \citet{Ni2019} use spline-based mapping and \citet{Thompson2024} employ a neural mapping with differentiable acyclicity. \citet{Oates2016} take a different route, jointly estimating observation-specific DAGs that are linked via a dependency network. These methods capture certain forms of heterogeneity, but none are designed specifically for clustered data.

Our paper also contributes to the growing literature on mixed-effects models adapted for machine learning. \citet{Xiong2019} and \citet{Simchoni2021,Simchoni2023,Simchoni2025} integrate random effects into neural networks---including feedforward, convolutional, and recurrent variants---to handle clustered data, with high-cardinality categorical features as an important special case. \citet{Simchoni2024} extend this perspective to variational autoencoders, introducing random effects directly into the latent representation. \citet{Sholokhov2024} and \citet{Thompson2025} develop algorithms for learning linear mixed models on high-dimensional data, enabling the selection of nonzero fixed and random effects across thousands of candidate predictors. These works, however, are confined to non-graphical models and so do not address multivariate relationships, let alone DAG structure learning.

\section{Mixed DAGs}
\label{sec:mixed}

\subsection{Objective Function}

To formulate our estimator, we begin by narrowing our focus to a single node $k$ within a single cluster $i$. The mixed-effects structural equation model \eqref{eq:sem} at this level takes the form
\begin{equation*}
\mathbf{x}_k^{(i)}=\mathbf{X}^{(i)}\boldsymbol{\beta}_k+\mathbf{X}^{(i)}\mathbf{u}_k^{(i)}+\boldsymbol{\varepsilon}_k^{(i)},
\end{equation*}
where $\mathbf{x}_k^{(i)}\in\mathbb{R}^{n_i}$ denotes the $k$th column of $\mathbf{X}^{(i)}$, the vectors $\boldsymbol{\beta}_k\in\mathbb{R}^p$ and $\mathbf{u}_k^{(i)}\in\mathbb{R}^p$ represent the $k$th columns of the fixed-effects matrix $\mathbf{B}$ and random-effects matrix $\mathbf{U}^{(i)}$, respectively, and $\boldsymbol{\varepsilon}_k^{(i)}\in\mathbb{R}^{n_i}$ represents the $k$th column of the noise matrix $\boldsymbol{\varepsilon}^{(i)}$. It is common in the linear mixed model literature to assume Gaussian noise and random effects, so that $\boldsymbol{\varepsilon}_k^{(i)}\sim\mathrm{N}(\mathbf{0},\mathbf{I})$ and $\mathbf{u}_k^{(i)}\sim\mathrm{N}(\mathbf{0},\operatorname{diag}(\boldsymbol{\gamma}_k))$, where $\boldsymbol{\gamma}_k\in\mathbb{R}_+^p$ is the $k$th column of $\boldsymbol{\Gamma}$. More generally, node-specific noise variances can be accommodated by taking $\boldsymbol{\varepsilon}_k^{(i)}\sim\mathrm{N}(\mathbf{0},\sigma_k^2\mathbf{I})$ and replacing $\mathbf{I}$ below by $\sigma_k^2\mathbf{I}$, while here we set $\sigma_k^2=1$ to simplify notation. The diagonal covariance for the random effects is a common simplification in high-dimensional settings, where allowing unrestricted covariance is impractical. Throughout, we impose $\operatorname{diag}(\mathbf{B})=\operatorname{diag}(\boldsymbol{\Gamma})=\mathbf{0}$, excluding self-loops. Conditional on the parent variables of node $k$, the vector $\mathbf{x}_k^{(i)}$ follows
\begin{equation*}
\mathbf{x}_k^{(i)}\sim\mathrm{N}(\mathbf{X}^{(i)}\boldsymbol{\beta}_k,\mathbf{V}^{(i)}(\boldsymbol{\gamma}_k)),
\end{equation*}
where the matrix $\mathbf{V}^{(i)}(\boldsymbol{\gamma}_k)$ captures correlations between observations within the $i$th cluster:
\begin{equation*}
\mathbf{V}^{(i)}(\boldsymbol{\gamma}_k):=\mathbf{I}+\mathbf{X}^{(i)}\operatorname{diag}(\boldsymbol{\gamma}_k)\mathbf{X}^{(i)\top}.
\end{equation*}

We now move from this single-node description to the full model by applying the same construction across nodes $k=1,\dots,p$ and clusters $i=1,\dots,m$. On the feasible DAG set, after integrating out the random effects, the resulting nodewise conditional densities factorize over nodes and independent clusters, yielding a negative log-likelihood of the form
\begin{equation}
\label{eq:likelihood}
l(\mathbf{B},\boldsymbol{\Gamma})=\sum_{k=1}^p\sum_{i=1}^m\left\{\log\det\{\mathbf{V}^{(i)}(\boldsymbol{\gamma}_k)\}+(\mathbf{x}_k^{(i)}-\mathbf{X}^{(i)}\boldsymbol{\beta}_k)^\top\mathbf{V}^{-(i)}(\boldsymbol{\gamma}_k)(\mathbf{x}_k^{(i)}-\mathbf{X}^{(i)}\boldsymbol{\beta}_k)\right\},
\end{equation}
where $\mathbf{V}^{-(i)}(\boldsymbol{\gamma}_k):=\{\mathbf{V}^{(i)}(\boldsymbol{\gamma}_k)\}^{-1}$, and additive and multiplicative constants are omitted to simplify exposition. The negative log-likelihood \eqref{eq:likelihood} has explicit gradients with respect to the fixed-effects $\mathbf{B}$ and the random-effects variances $\boldsymbol{\Gamma}$. For the fixed-effect $\beta_{jk}$, the gradient is
\begin{equation*}
\frac{\partial}{\partial\beta_{jk}}l(\mathbf{B},\boldsymbol{\Gamma})=-2\sum_{i=1}^m\mathbf{x}_j^{(i)\top}\mathbf{V}^{-(i)}(\boldsymbol{\gamma}_k)(\mathbf{x}_k^{(i)}-\mathbf{X}^{(i)}\boldsymbol{\beta}_k).
\end{equation*}
Likewise, the gradient with respect to the random-effect variance $\gamma_{jk}$ is of the form
\begin{equation*}
\frac{\partial}{\partial\gamma_{jk}}l(\mathbf{B},\boldsymbol{\Gamma})=\sum_{i=1}^m\left[\mathbf{x}_j^{(i)\top}\mathbf{V}^{-(i)}(\boldsymbol{\gamma}_k)\mathbf{x}_j^{(i)}-\left\{\mathbf{x}_j^{(i)\top}\mathbf{V}^{-(i)}(\boldsymbol{\gamma}_k)(\mathbf{x}_k^{(i)}-\mathbf{X}^{(i)}\boldsymbol{\beta}_k)\right\}^2\right].
\end{equation*}
These gradient expressions are useful for the algorithmic developments that follow later.

We take our estimator as a minimizer of the negative log-likelihood \eqref{eq:likelihood} over the fixed- and random-effects parameters while enforcing two key structural constraints. First, the graphs $\mathcal{G}(\mathbf{B})$ and $\mathcal{G}(\boldsymbol{\Gamma})$ induced by $\mathbf{B}$ and $\boldsymbol{\Gamma}$ must jointly remain acyclic. Here, $\mathcal{G}(\mathbf{W})$ denotes the directed graph on $\{1,\dots,p\}$ having an edge $j\to k$ if and only if $w_{jk}\neq 0$. In other words, the graph $\mathcal{G}(\mathbf{B})\cup\mathcal{G}(\boldsymbol{\Gamma})$ formed by the union of the individual edge sets must be a DAG. Second, sparsity is imposed to encourage parsimonious graphs with fewer edges and improve structure recovery. Together, these requirements lead to the constrained optimization problem
\begin{equation}
\label{eq:optcomb}
\underset{\mathbf{B}\in\mathbb{R}^{p\times p},\,\boldsymbol{\Gamma}\in\mathbb{R}_+^{p\times p}}{\min}\;l(\mathbf{B},\boldsymbol{\Gamma})+\lambda_1\|\mathbf{B}\|_1+\lambda_2\|\boldsymbol{\Gamma}\|_1\quad\operatorname{s.t.}\;\mathcal{G}(\mathbf{B})\cup\mathcal{G}(\boldsymbol{\Gamma})\in\mathrm{DAG}_p,
\end{equation}
where $\mathrm{DAG}_p$ is the set of all DAGs on nodes $\{1,\dots,p\}$. Here, $\lambda_1,\lambda_2\geq0$ are sparsity penalty parameters, and $\|\cdot\|_1$ is the sum of the absolute values of the entries of its matrix argument. The optimization problem \eqref{eq:optcomb} is nonconvex, owing both to the form of the negative log-likelihood and to the DAG constraint. We denote its minimizers by $\hat{\mathbf{B}}$ and $\hat{\boldsymbol{\Gamma}}$.

To report cluster-specific graphs, we recover the random-effects $\mathbf{U}^{(i)}$ via their so-called best linear unbiased predictors (BLUPs) $\hat{\mathbf{U}}^{(i)}$. The BLUP for the $k$th column $\mathbf{u}_k^{(i)}$ of $\mathbf{U}^{(i)}$ is
\begin{equation*}
\hat{\mathbf{u}}_k^{(i)}=\operatorname{diag}(\hat{\boldsymbol{\gamma}}_k)\mathbf{X}^{(i)\top}\mathbf{V}^{-(i)}(\hat{\boldsymbol{\gamma}}_k)(\mathbf{x}_k^{(i)}-\mathbf{X}^{(i)}\hat{\boldsymbol{\beta}}_k).
\end{equation*}
Combining $\hat{\mathbf{U}}^{(i)}$ with the estimated fixed-effects $\hat{\mathbf{B}}$ produces a DAG for cluster $i$ as
\begin{equation*}
\hat{\mathbf{W}}^{(i)}=\hat{\mathbf{B}}+\hat{\mathbf{U}}^{(i)}.
\end{equation*}

\subsection{Differentiable Reformulation}

As written, the DAG constraint in \eqref{eq:optcomb} constitutes a non-differentiable combinatorial restriction that precludes the use of scalable first-order optimization methods like gradient descent. Following the differentiable structure learning literature \citep[e.g.,][]{Zheng2018,Bello2022}, we reformulate the combinatorial DAG constraint as an \emph{equivalent} differentiable DAG constraint. This line of work established the existence of smooth functions $h(\mathbf{W})$ such that
\begin{equation*}
h(\mathbf{W})=0\iff\mathcal{G}(\mathbf{W})\in\mathrm{DAG}_p.
\end{equation*}
In other words, the differentiable function $h(\mathbf{W})$ vanishes exactly on the set of DAGs, so enforcing $h(\mathbf{W})=0$ is lossless relative to the original DAG constraint. We adopt the log-determinant form of $h$ first proposed by \citet{Bello2022}. For matrices $\mathbf{W}\in\mathbb{W}:=\{\mathbf{W}\in\mathbb{R}_+^{p\times p}:\rho(\mathbf{W})<1\}$, where $\rho(\mathbf{W})$ denotes the spectral radius of $\mathbf{W}$, define the function
\begin{equation*}
h(\mathbf{W}):=-\log\det(\mathbf{I}-\mathbf{W}).
\end{equation*}
Under the stated nonnegativity and spectral radius conditions, \citet{Bello2022} showed that $h(\mathbf{W})=0$ if and only if $\mathbf{W}$ is acyclic. In the real-valued case, $\mathbf{W}$ can be mapped to a nonnegative surrogate via the Hadamard product $\mathbf{W}\odot\mathbf{W}$ before using $h$, preserving the zeros and hence the graph. Crucially, $h$ is continuously differentiable on $\mathbb{W}$ with gradient
\begin{equation*}
\nabla h(\mathbf{W})=(\mathbf{I}-\mathbf{W})^{-\top}.
\end{equation*}

Our setting requires a stronger constraint than above: the \emph{union} of the graphs induced by the fixed-effects matrix $\mathbf{B}$ and random-effects matrix $\boldsymbol{\Gamma}$ must be acyclic. It is therefore insufficient for $\mathbf{B}$ and $\boldsymbol{\Gamma}$ to be individually acyclic, as their union can still contain cycles, leading to cluster-specific matrices $\hat{\mathbf{W}}^{(i)}$ that are not valid DAGs. Since $\boldsymbol{\Gamma}$ has nonnegative entries, while $\mathbf{B}$ has real-valued entries, we enforce acyclicity of the union by applying the log-determinant characterization to the combined nonnegative matrix $\mathbf{B}\odot\mathbf{B}+\boldsymbol{\Gamma}$:
\begin{equation}
\label{eq:acyclicity}
h(\mathbf{B},\boldsymbol{\Gamma}):=-\log\det(\mathbf{I}-\mathbf{B}\odot\mathbf{B}-\boldsymbol{\Gamma}).
\end{equation}
The following proposition formalizes the equivalence between acyclicity of the union graph induced by the fixed and random effects and the level-set of the log-determinant \eqref{eq:acyclicity} at zero.
\begin{proposition}
\label{prop:union_acyclicity}
Let $h(\mathbf{B},\boldsymbol{\Gamma})$ be given in \eqref{eq:acyclicity} and define the domain
\begin{equation}
\label{eq:acyclicitydomain}
\mathbb{D}:=\{(\mathbf{B},\boldsymbol{\Gamma})\in\mathbb{R}^{p\times p}\times\mathbb{R}_+^{p\times p}:\rho(\mathbf{B}\odot\mathbf{B}+\boldsymbol{\Gamma})<1\}.
\end{equation}
Then, for any $(\mathbf{B},\boldsymbol{\Gamma})\in\mathbb{D}$, it holds
\begin{equation*}
\mathcal{G}(\mathbf{B})\cup\mathcal{G}(\boldsymbol{\Gamma})\in\mathrm{DAG}_p\iff h(\mathbf{B},\boldsymbol{\Gamma})=0.
\end{equation*}
\end{proposition}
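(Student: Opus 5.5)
The plan is to reduce the statement to the single-matrix log-determinant characterization of \citet{Bello2022} quoted above, applied to the combined matrix $\mathbf{A}:=\mathbf{B}\odot\mathbf{B}+\boldsymbol{\Gamma}$. Two observations do this. First, $(\mathbf{B},\boldsymbol{\Gamma})\in\mathbb{D}$ means exactly that $\mathbf{A}\in\mathbb{W}$: the entries of $\mathbf{A}$ are nonnegative because $b_{jk}^2\geq0$ and $\gamma_{jk}\geq0$, and $\rho(\mathbf{A})<1$ by the definition of $\mathbb{D}$. Second, $h(\mathbf{B},\boldsymbol{\Gamma})$ in \eqref{eq:acyclicity} equals $h(\mathbf{A})$ for the single-matrix function $h(\mathbf{W})=-\log\det(\mathbf{I}-\mathbf{W})$. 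Bello et al.'s result then gives $h(\mathbf{B},\boldsymbol{\Gamma})=0$ if and only if $\mathcal{G}(\mathbf{A})\in\mathrm{DAG}_p$.

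The remaining step is the graph identity $\mathcal{G}(\mathbf{A})=\mathcal{G}(\mathbf{B})\cup\mathcal{G}(\boldsymbol{\Gamma})$. Both summands are nonnegative, so no cancellation can occur, and
\begin{equation*}
a_{jk}\neq0\iff b_{jk}^2+\gamma_{jk}>0\iff b_{jk}\neq0\ \text{or}\ \gamma_{jk}\neq0 .
\end{equation*}
This is precisely the edge set of the union. Combining the identity with the previous paragraph proves the claim.

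For completeness, I would also sketch why the cited characterization holds, in case a self-contained argument is wanted.
\begin{itemize}
\item If $\mathcal{G}(\mathbf{A})$ is acyclic, then $\mathbf{A}$ is permutation-similar to a strictly triangular matrix. Hence $\det(\mathbf{I}-\mathbf{A})=1$ and $h=0$.
\item Conversely, since $\rho(\mathbf{A})<1$, we have $h(\mathbf{A})=-\operatorname{tr}\log(\mathbf{I}-\mathbf{A})=\sum_{r\geq1}\operatorname{tr}(\mathbf{A}^r)/r$, a series with nonnegative terms. So $h(\mathbf{A})=0$ forces $\operatorname{tr}(\mathbf{A}^r)=0$ for all $r$.
\item Because $\mathbf{A}$ is nonnegative, $\operatorname{tr}(\mathbf{A}^r)$ is a sum over closed walks of length $r$ of positive edge-weight products. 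Its vanishing for all $r$ therefore rules out every cycle.
\end{itemize}

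I expect no real obstacle. The only point needing care is the nonnegativity of $\mathbf{A}$, which is what allows the union graph to be read off entrywise without cancellation; this is exactly why $\mathbf{B}\odot\mathbf{B}$ rather than $\mathbf{B}$ enters $h$. Nonnegativity also justifies the trace-series argument, and the spectral radius condition guarantees convergence of the logarithm series.
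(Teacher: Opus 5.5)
Your proposal is correct and follows the paper's proof: apply Bello et al.'s Theorem~1 to the nonnegative matrix $\mathbf{B}\odot\mathbf{B}+\boldsymbol{\Gamma}$, which lies in $\mathbb{W}$ exactly when $(\mathbf{B},\boldsymbol{\Gamma})\in\mathbb{D}$, and then use the absence of cancellation to identify its graph with $\mathcal{G}(\mathbf{B})\cup\mathcal{G}(\boldsymbol{\Gamma})$. Your self-contained sketch of the cited characterization, via permutation to strictly triangular form and the series $\sum_{r\geq1}\operatorname{tr}(\mathbf{A}^r)/r$ with nonnegative terms, is also sound; the paper simply cites the result instead.
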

\begin{proof}
See Appendix~\ref{app:union_acyclicity}.
\end{proof}
Proposition \ref{prop:union_acyclicity} shows that, on the domain $\mathbb{D}$ defined in \eqref{eq:acyclicitydomain}, enforcing $h(\mathbf{B},\boldsymbol{\Gamma})=0$ is exactly equivalent to requiring that the union graph $\mathcal{G}(\mathbf{B})\cup\mathcal{G}(\boldsymbol{\Gamma})$ be acyclic. Consequently, the differentiable constraint in \eqref{eq:acyclicity} is lossless relative to the original combinatorial DAG restriction. The proof carries the argument of \citet{Bello2022} over to the mixed-effects setting.


The gradients of $h(\mathbf{B},\boldsymbol{\Gamma})$ are
\begin{equation*}
\nabla_\mathbf{B} h(\mathbf{B},\boldsymbol{\Gamma})=2(\mathbf{I}-\mathbf{B}\odot\mathbf{B}-\boldsymbol{\Gamma})^{-\top}\odot\mathbf{B}
\end{equation*}
and
\begin{equation*}
\nabla_{\boldsymbol{\Gamma}} h(\mathbf{B},\boldsymbol{\Gamma})=(\mathbf{I}-\mathbf{B}\odot\mathbf{B}-\boldsymbol{\Gamma})^{-\top}.
\end{equation*}
With this characterization, we can rewrite \eqref{eq:optcomb} without the combinatorial DAG constraint:
\begin{equation*}
\underset{(\mathbf{B},\boldsymbol{\Gamma})\in\mathbb{D}}{\min}\;l(\mathbf{B},\boldsymbol{\Gamma})+\lambda_1\|\mathbf{B}\|_1+\lambda_2\|\boldsymbol{\Gamma}\|_1\quad\operatorname{s.t.}\;h(\mathbf{B},\boldsymbol{\Gamma})=0.
\end{equation*}
As discussed in the next section, it is possible to solve this optimization problem using first-order (gradient) methods, paving the way for scalable mixed-effects structure learning.

\section{Scalable Computation}
\label{sec:scalable}

\subsection{Path-Following Algorithm}

Standard first-order algorithms do not readily accommodate hard constraints like $h(\mathbf{B},\boldsymbol{\Gamma})=0$ directly. We therefore adopt a path-following strategy in the spirit of \citet{Bello2022}. The key idea is to relax the constraint by incorporating it into the objective via a smooth barrier:
\begin{equation}
\label{eq:inner}
\underset{(\mathbf{B},\boldsymbol{\Gamma})\in\mathbb{D}}{\min}\;f_\mu(\mathbf{B},\boldsymbol{\Gamma};\lambda_1,\lambda_2):=\mu\left\{l(\mathbf{B},\boldsymbol{\Gamma})+\lambda_1\|\mathbf{B}\|_1+\lambda_2\|\boldsymbol{\Gamma}\|_1\right\}+h(\mathbf{B},\boldsymbol{\Gamma}).
\end{equation}
Here, $\mu\geq0$ is a continuation parameter that we decrease along a sequence $\mu^{(1)}>\mu^{(2)}>\cdots>\mu^{(T)}\geq0$. At iteration $t+1$ of our strategy, we solve \eqref{eq:inner} with $\mu=\mu^{(t+1)}$ using the solution obtained from $\mu=\mu^{(t)}$ as an initialization point. Along this path, the log-determinant term $h(\mathbf{B},\boldsymbol{\Gamma})$ acts as a smooth barrier over $\mathbb{D}$, keeping $\rho(\mathbf{B}\odot\mathbf{B}+\boldsymbol{\Gamma})<1$ and progressively steering iterates toward an acyclic union as $\mu$ decreases. In the limit as $\mu\to0$, the barrier dominates and the resulting solution satisfies the acyclicity constraint $h(\mathbf{B},\boldsymbol{\Gamma})=0$. Algorithm~\ref{alg:path} summarizes the full procedure.
\begin{algorithm}
\caption{Path-following algorithm}
\label{alg:path}
\begin{algorithmic}
\STATE \textbf{Input:} Initializer $(\mathbf{B}^{(0)},\boldsymbol{\Gamma}^{(0)})\in\mathbb{D}$ and parameters $\lambda_1,\lambda_2>0$ and $\mu^{(1)}>\cdots>\mu^{(T)}\geq0$
\FOR{$t=0,1,\dots,T-1$}
\STATE Initialize $(\mathbf{B},\boldsymbol{\Gamma})$ at $(\mathbf{B}^{(t)},\boldsymbol{\Gamma}^{(t)})$ and set
\begin{equation*}
(\mathbf{B}^{(t+1)},\boldsymbol{\Gamma}^{(t+1)})\gets\underset{(\mathbf{B},\boldsymbol{\Gamma})\in\mathbb{D}}{\arg\,\min}\,f_{\mu^{(t+1)}}(\mathbf{B},\boldsymbol{\Gamma};\lambda_1,\lambda_2)
\end{equation*}
\ENDFOR
\STATE \textbf{Output:} Solution $(\hat{\mathbf{B}},\hat{\boldsymbol{\Gamma}})=(\mathbf{B}^{(T)},\boldsymbol{\Gamma}^{(T)})$
\end{algorithmic}
\end{algorithm}
In practice, we set the initial continuation parameter $\mu^{(1)}=1$ and take $\mu^{(t+1)}=0.1\mu^{(t)}$ for $T=5$ iterations. The initialization point $(\mathbf{B}^{(0)},\boldsymbol{\Gamma}^{(0)})$ is set to matrices of zeros, i.e., $(\mathbf{B}^{(0)},\boldsymbol{\Gamma}^{(0)})=(\mathbf{0},\mathbf{0})$, which trivially lies in $\mathbb{D}$.

\subsection{Proximal Gradient Algorithm}

To solve the inner problem \eqref{eq:inner} at each iteration of Algorithm~\ref{alg:path}, we design a proximal gradient algorithm \citep[see, e.g.,][]{Polson2015}. Towards this end, we use the fact that the objective function in \eqref{eq:inner} naturally decomposes into the sum of smooth and nonsmooth components as
\begin{equation*}
f_\mu(\mathbf{B},\boldsymbol{\Gamma}):=g_\mu(\mathbf{B},\boldsymbol{\Gamma})+r_\mu(\mathbf{B},\boldsymbol{\Gamma}),
\end{equation*}
where the smooth component and nonsmooth component are defined respectively as
\begin{equation}
\label{eq:smoothnonsmooth}
g_\mu(\mathbf{B},\boldsymbol{\Gamma}):=\mu\{l(\mathbf{B},\boldsymbol{\Gamma})\}+h(\mathbf{B},\boldsymbol{\Gamma}),\qquad r_\mu(\mathbf{B},\boldsymbol{\Gamma}):=\mu\{\lambda_1\|\mathbf{B}\|_1+\lambda_2\|\boldsymbol{\Gamma}\|_1\}+\iota_{\mathbb{R}_+^{p\times p}}(\boldsymbol{\Gamma}).
\end{equation}
We absorb the nonnegativity constraint on $\boldsymbol{\Gamma}$ into $r_\mu$ via the indicator $\iota_{\mathbb{R}_+^{p\times p}}(\boldsymbol{\Gamma})$, which equals $0$ if $\boldsymbol{\Gamma}\in\mathbb{R}_+^{p\times p}$ and $+\infty$ otherwise. The smooth component $g_\mu$ admits gradients with respect to $\mathbf{B}$ and $\boldsymbol{\Gamma}$, while the nonsmooth term $r_\mu$ is amenable to proximal operators. This structure leads to simple updates: given current iterates $(\mathbf{B}^{(k)},\boldsymbol{\Gamma}^{(k)})$ we take a gradient step with step size $\alpha>0$ on the smooth component and then apply proximal operators to the result, thus handling the nonsmooth penalties. The corresponding elementwise operators are
\begin{equation*}
\operatorname{soft}_\lambda(z):=\operatorname{sign}(z)\max(|z|-\lambda,0),\qquad\operatorname{soft}_\lambda^+(z):=\max(z-\lambda,0).
\end{equation*}
The first operator $\operatorname{soft}_\lambda(z)$ is the well-known proximal operator for the $\ell_1$-norm, applied to induce sparsity in the fixed-effects $\mathbf{B}$. The second operator $\operatorname{soft}_\lambda^+(z)$ is the proximal operator for an $\ell_1$-norm combined with a projection onto the nonnegative reals, used to produce sparse random-effects variances in $\boldsymbol{\Gamma}$. Algorithm~\ref{alg:proximal} summarizes this proximal gradient procedure.
\begin{algorithm}
\caption{Proximal gradient algorithm}
\label{alg:proximal}
\begin{algorithmic}
\STATE \textbf{Input:} Initializer $(\mathbf{B}^{(0)},\boldsymbol{\Gamma}^{(0)})\in\mathbb{D}$, parameters $\lambda_1,\lambda_2>0$ and $\mu\geq0$, and step size $\alpha>0$
\STATE Initialize iterator $k\gets0$
\WHILE{Not converged}
\STATE Perform proximal gradient updates
\begin{equation*}
\mathbf{B}^{(k+1)}\gets\operatorname{soft}_{\alpha\mu\lambda_1}\left(\mathbf{B}^{(k)}-\alpha\nabla_\mathbf{B}g_\mu(\mathbf{B}^{(k)},\boldsymbol{\Gamma}^{(k)})\right)
\end{equation*}
\begin{equation*}
\boldsymbol{\Gamma}^{(k+1)}\gets\operatorname{soft}_{\alpha\mu\lambda_2}^+\left(\boldsymbol{\Gamma}^{(k)}-\alpha\nabla_{\boldsymbol{\Gamma}}g_\mu(\mathbf{B}^{(k)},\boldsymbol{\Gamma}^{(k)})\right)
\end{equation*}
\STATE Increment iterator $k\gets k+1$
\ENDWHILE
\STATE \textbf{Output:} Solution $(\mathbf{B}^\star,\boldsymbol{\Gamma}^\star)=(\mathbf{B}^{(k)},\boldsymbol{\Gamma}^{(k)})$
\end{algorithmic}
\end{algorithm}
In practice, we declare convergence when the relative change in $g_\mu$ falls below $10^{-6}$.

To determine a suitable step size $\alpha$ in Algorithm~\ref{alg:proximal}, we derive a quadratic upper bound on the local change of the smooth component $g_\mu$. This upper bound ensures that updates yield a controlled decrease in $g_\mu$, thereby providing a principled basis for step size selection. The following proposition establishes this bound on any compact subset $\Omega\subset\mathbb{D}$.
\begin{proposition} 
\label{prop:descent}
Let $g_\mu(\mathbf{B},\boldsymbol{\Gamma})$ be the smooth component \eqref{eq:smoothnonsmooth} of the objective function, defined on the domain $\mathbb{D}$ given in \eqref{eq:acyclicitydomain}. Fix any compact set $\Omega\subset\mathbb{D}$. Then there exists a constant $L<\infty$ such that for all $(\mathbf{B},\boldsymbol{\Gamma}),(\mathbf{B}^+,\boldsymbol{\Gamma}^+)\in\Omega$ it holds
\begin{equation*}
g_\mu(\mathbf{B}^+,\boldsymbol{\Gamma}^+)\leq g_\mu(\mathbf{B},\boldsymbol{\Gamma})+\langle\nabla g_\mu(\mathbf{B},\boldsymbol{\Gamma}),(\Delta\mathbf{B},\Delta\boldsymbol{\Gamma})\rangle+\frac{L}{2}\|(\Delta\mathbf{B},\Delta\boldsymbol{\Gamma})\|_F^2,
\end{equation*}
where $\Delta\mathbf{B}:=\mathbf{B}^+-\mathbf{B}$, $\Delta\boldsymbol{\Gamma}:=\boldsymbol{\Gamma}^+-\boldsymbol{\Gamma}$, and $\|(\Delta\mathbf{B},\Delta\boldsymbol{\Gamma})\|_F^2:=\|\Delta\mathbf{B}\|_F^2+\|\Delta\boldsymbol{\Gamma}\|_F^2$. Here, $\|\cdot\|_F$ denotes the Frobenius norm and $\langle\cdot,\cdot\rangle$ denotes the Frobenius inner product.
\end{proposition}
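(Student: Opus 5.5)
The plan is a compactness argument: bound the Hessian of $g_\mu$ on a slightly enlarged compact set to handle nearby pairs, and use crude bounds on $g_\mu$ and $\nabla g_\mu$ for pairs that are far apart. The split is needed because $\mathbb{D}$, and hence $\Omega$, need not be convex. So the segment between $(\mathbf{B},\boldsymbol{\Gamma})$ and $(\mathbf{B}^+,\boldsymbol{\Gamma}^+)$ may leave $\mathbb{D}$, and the textbook descent lemma (a global Lipschitz gradient on a convex set) cannot be quoted directly.

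\textbf{Step 1: $g_\mu$ is $C^2$ on an open neighbourhood of $\Omega$.} For $\boldsymbol{\Gamma}\in\mathbb{R}_+^{p\times p}$, each $\mathbf{V}^{(i)}(\boldsymbol{\gamma}_k)=\mathbf{I}+\mathbf{X}^{(i)}\operatorname{diag}(\boldsymbol{\gamma}_k)\mathbf{X}^{(i)\top}$ has eigenvalues at least $1$. By continuity it stays positive definite for $\boldsymbol{\gamma}_k$ in an open neighbourhood of $\mathbb{R}_+^p$. On that neighbourhood, $\log\det\mathbf{V}^{(i)}$ and $\mathbf{V}^{-(i)}$ are $C^\infty$, so $l$ is $C^\infty$.

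For $h$, note that $\det(\mathbf{I}-\mathbf{B}\odot\mathbf{B}-\boldsymbol{\Gamma})>0$ whenever $\rho(\mathbf{B}\odot\mathbf{B}+\boldsymbol{\Gamma})<1$, since then all eigenvalues of $\mathbf{I}-\mathbf{B}\odot\mathbf{B}-\boldsymbol{\Gamma}$ have positive real part. Positivity of the determinant is an open condition, so $h$ is $C^\infty$ on an open set containing $\mathbb{D}$.

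Hence there is an open set $\mathcal{U}\subset\mathbb{R}^{p\times p}\times\mathbb{R}^{p\times p}$ with $\Omega\subset\mathcal{U}$ on which $g_\mu$ is $C^2$. By compactness of $\Omega$, we can pick $\delta>0$ such that the closed $\delta$-enlargement $\Omega_\delta:=\{z:\operatorname{dist}(z,\Omega)\le\delta\}$ is compact and contained in $\mathcal{U}$. Define
\begin{equation*}
M:=\sup_{\Omega_\delta}\|\nabla^2 g_\mu\|_{\mathrm{op}},\qquad G:=\sup_{\Omega}|g_\mu|,\qquad R:=\sup_{\Omega}\|\nabla g_\mu\|_F.
\end{equation*}
All three are finite by continuity on compact sets.

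\textbf{Step 2: nearby pairs.} Suppose $z=(\mathbf{B},\boldsymbol{\Gamma})$ and $z^+=(\mathbf{B}^+,\boldsymbol{\Gamma}^+)$ lie in $\Omega$ with $\|z^+-z\|_F<\delta$. Then the whole segment $[z,z^+]$ lies in the ball of radius $\delta$ about $z$, hence in $\Omega_\delta$. Taylor's theorem with integral remainder along the segment then gives the stated inequality with constant $M$.

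\textbf{Step 3: distant pairs.} Suppose instead $\|z^+-z\|_F\ge\delta$. Using $|g_\mu|\le G$ and Cauchy--Schwarz,
\begin{equation*}
g_\mu(z^+)-g_\mu(z)-\langle\nabla g_\mu(z),z^+-z\rangle\le 2G+R\|z^+-z\|_F.
\end{equation*}
Since $\|z^+-z\|_F\ge\delta$, the right side is at most
\begin{equation*}
\Bigl(\tfrac{2G}{\delta^2}+\tfrac{R}{\delta}\Bigr)\|z^+-z\|_F^2.
\end{equation*}
So the inequality holds with $L=4G/\delta^2+2R/\delta$ in this regime.

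\textbf{Step 4: conclusion.} Taking $L:=\max\{M,\,4G/\delta^2+2R/\delta\}$ covers both cases and proves the proposition.

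The main obstacle is Step 1, specifically securing an open neighbourhood of $\Omega$ on which $g_\mu$ is smooth. The boundary $\boldsymbol{\Gamma}=\mathbf{0}$ of the nonnegativity constraint requires extending $l$ slightly to negative variances, which works because $\mathbf{V}^{(i)}$ remains positive definite there. The spectral-radius constraint requires noting that the log-determinant stays well defined just outside $\mathbb{D}$. Once that neighbourhood exists, the two-regime argument of Steps 2 and 3 avoids any convexity requirement on $\Omega$.
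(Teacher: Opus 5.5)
Your proposal is correct and follows the paper's proof. Both arguments take a compact $\delta$-enlargement of $\Omega$, apply a second-order (Taylor/Lipschitz-gradient) bound to pairs within distance $\delta$, and treat distant pairs with a crude compactness bound scaled by $\delta^{-2}$, taking $L$ as the maximum of the two constants.

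The differences are cosmetic. You bound $\nabla^2 g_\mu$ directly on a full-space neighbourhood, extending $l$ to slightly negative variances and $h$ to $\{\det(\mathbf{I}-\mathbf{B}\odot\mathbf{B}-\boldsymbol{\Gamma})>0\}$. The paper instead works on a relative enlargement inside $\mathbb{D}$, using separate Lipschitz-gradient lemmas for $l$ and $h$, and bounds the far-pair remainder by a single supremum rather than your constants $G$ and $R$.
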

\begin{proof}
See Appendix~\ref{app:descent}.
\end{proof}
Proposition~\ref{prop:descent} motivates the step size choice $\alpha\leq1/L$, under which each proximal gradient descent update is guaranteed to be a descent step. In practice, we do not compute $L$ explicitly when choosing the step size. Instead, we start from an initial step size and apply backtracking line search until the inequality in Proposition~\ref{prop:descent} is satisfied by the trial update. This backtracking procedure can also be used to check that the trial update remains in $\mathbb{D}$.

With the step size condition above in hand, we now characterize the convergence properties of Algorithm~\ref{alg:proximal} for fixed $\mu\geq0$. The following theorem presents these properties.
\begin{theorem}
\label{thrm:convergence}
Let $\mu\geq0$. Fix any compact set $\Omega\subset\mathbb{D}$. Then there exists a constant $L<\infty$ such that, if Algorithm~\ref{alg:proximal} is run with step size $\alpha<1/L$ and the iterates $(\mathbf{B}^{(k)},\boldsymbol{\Gamma}^{(k)})$ remain in $\Omega$, the sequence of objective values $\{f_\mu(\mathbf{B}^{(k)},\boldsymbol{\Gamma}^{(k)})\}$ is decreasing and converges to a finite limit $f_\mu^\star$. Moreover, as $k\to\infty$, the iterates satisfy
\begin{equation*}
\|(\mathbf{B}^{(k+1)},\boldsymbol{\Gamma}^{(k+1)})-(\mathbf{B}^{(k)},\boldsymbol{\Gamma}^{(k)})\|_F\to0.
\end{equation*}
\end{theorem}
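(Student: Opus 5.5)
The plan is the standard sufficient-decrease argument for proximal gradient methods, using Proposition~\ref{prop:descent} as the descent lemma. Let $L$ be the constant that Proposition~\ref{prop:descent} provides for $\Omega$. Write $\mathbf{Z}=(\mathbf{B},\boldsymbol{\Gamma})$ and $\mathbf{Z}^{(k)}$ for the iterates. Because $r_\mu$ is separable across entries, the two updates in Algorithm~\ref{alg:proximal} together form a single proximal step on $r_\mu$:
\begin{equation*}
\mathbf{Z}^{(k+1)}=\arg\min_{\mathbf{Z}}\Big\{\langle\nabla g_\mu(\mathbf{Z}^{(k)}),\mathbf{Z}-\mathbf{Z}^{(k)}\rangle+\tfrac{1}{2\alpha}\|\mathbf{Z}-\mathbf{Z}^{(k)}\|_F^2+r_\mu(\mathbf{Z})\Big\}.
\end{equation*}
This holds because $\operatorname{soft}_{\alpha\mu\lambda_1}$ and $\operatorname{soft}^+_{\alpha\mu\lambda_2}$ are exactly the proximal maps of $\alpha\mu\lambda_1|\cdot|$ and of $\alpha\mu\lambda_2(\cdot)+\iota_{\mathbb{R}_+}$. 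The first thing to check is that $r_\mu$ is convex, proper and lower semicontinuous. That is immediate, so the minimizer is unique and equals the iterate.

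Next, compare the value of this subproblem at $\mathbf{Z}^{(k+1)}$ with its value at $\mathbf{Z}=\mathbf{Z}^{(k)}$. Since the subproblem objective is $\frac{1}{\alpha}$-strongly convex, this comparison yields
\begin{equation*}
\langle\nabla g_\mu(\mathbf{Z}^{(k)}),\Delta^{(k)}\rangle+r_\mu(\mathbf{Z}^{(k+1)})\le r_\mu(\mathbf{Z}^{(k)})-\tfrac{1}{\alpha}\|\Delta^{(k)}\|_F^2,
\end{equation*}
where $\Delta^{(k)}:=\mathbf{Z}^{(k+1)}-\mathbf{Z}^{(k)}$. Even the plain comparison without strong convexity gives the factor $\tfrac{1}{2\alpha}$, which is enough. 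Both $\mathbf{Z}^{(k)}$ and $\mathbf{Z}^{(k+1)}$ lie in $\Omega$ by assumption, so Proposition~\ref{prop:descent} applies to the pair. Adding its inequality to the one above gives the sufficient-decrease bound
\begin{equation*}
f_\mu(\mathbf{Z}^{(k+1)})\le f_\mu(\mathbf{Z}^{(k)})-\Big(\tfrac{1}{2\alpha}-\tfrac{L}{2}\Big)\|\Delta^{(k)}\|_F^2 .
\end{equation*}
The coefficient $c:=\tfrac{1}{2\alpha}-\tfrac{L}{2}$ is strictly positive because $\alpha<1/L$. Hence the sequence $\{f_\mu(\mathbf{Z}^{(k)})\}$ is nonincreasing, and strictly decreasing unless the iterates have become stationary.

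Boundedness below comes from compactness. The function $g_\mu$ is continuous on $\mathbb{D}$, since $l$ is smooth whenever $\mathbf{V}^{(i)}\succeq\mathbf{I}$ and $h$ is smooth on $\mathbb{D}$. Therefore $g_\mu$ is bounded below on the compact set $\Omega$. On the iterates $r_\mu\ge0$: every $\boldsymbol{\Gamma}^{(k)}$ with $k\ge1$ is nonnegative by construction, and the initializer lies in $\mathbb{D}$. So $f_\mu$ is bounded below along the sequence, and a monotone bounded sequence converges to some finite $f_\mu^\star$. For the iterates themselves, telescope the decrease inequality:
\begin{equation*}
c\sum_{k=0}^{K}\|\Delta^{(k)}\|_F^2\le f_\mu(\mathbf{Z}^{(0)})-f_\mu^\star<\infty .
\end{equation*}
The series is summable, so $\|\Delta^{(k)}\|_F\to0$.

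The main obstacle is conceptual rather than computational. Because $h$ blows up at the boundary of $\mathbb{D}$, $g_\mu$ is not globally $L$-smooth. Proposition~\ref{prop:descent} supplies $L$ only on $\Omega$, which is why the theorem assumes the iterates stay in $\Omega$. The care needed is to apply the descent lemma only to consecutive pairs that both lie in $\Omega$. No segment-convexity of $\Omega$ is required, as long as Proposition~\ref{prop:descent} is used exactly as stated, for arbitrary pairs in $\Omega$. Finally, the case $\mu=0$ needs no separate treatment: there $r_\mu$ reduces to the indicator function, and the argument above goes through unchanged.
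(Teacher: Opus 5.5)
Your proposal is correct and follows the paper's proof essentially step for step. In both, the update is recognized as the minimizer of the proximal quadratic model, the model's value at $\mathbf{Z}^{(k+1)}$ is compared with its value at $\mathbf{Z}^{(k)}$ and added to Proposition~\ref{prop:descent} to get sufficient decrease with constant $\tfrac{1}{2\alpha}-\tfrac{L}{2}$, compactness of $\Omega$ gives boundedness below, and telescoping gives summability of $\|\Delta^{(k)}\|_F^2$ (your strong-convexity remark would sharpen the constant to $\tfrac{1}{\alpha}-\tfrac{L}{2}$, though this is not needed).
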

\begin{proof}
See Appendix~\ref{app:convergence}.
\end{proof}
Theorem~\ref{thrm:convergence} shows that the proximal gradient descent iterates are well-behaved. In particular, the objective values decrease monotonically and the successive updates vanish asymptotically. Thus, under the stated step size condition, the objective sequence converges to a finite limit and the change from one iterate to the next becomes arbitrarily small as the algorithm progresses. The requirement that the iterates remain in a compact set is a technical condition used in the proof. Although this condition can be enforced explicitly if desired, our numerical experience is that it is not necessary to impose in practice for stable convergence.

\subsection{Batched Updates}

Direct evaluation of the negative log-likelihood in Algorithm~\ref{alg:proximal} requires computing, for each node $k$ and cluster $i$, the covariance matrix $\mathbf{V}^{(i)}(\boldsymbol{\gamma}_k)$ together with its log-determinant and the quadratic form involving its inverse. Both of these quantities depend on a Cholesky factorization of $\mathbf{V}^{(i)}(\boldsymbol{\gamma}_k)$. Looping over all $m$ clusters and $p$ nodes, therefore, leads to $mp$ separate factorizations at every gradient step, each costing $O(n_i^3)$, which is computationally prohibitive when performed serially. To avoid this bottleneck, we perform the entire likelihood evaluation in parallel using batched linear algebra routines. Prior to running Algorithm~\ref{alg:proximal}, the cluster matrices $\mathbf{X}^{(i)}$ are padded into a block-aligned tensor $\mathbf{X}\in\mathbb{R}^{m\times\max_i n_i\times p}$ with a row mask to drop padded entries. Then, within each iteration of Algorithm~\ref{alg:proximal}, the random-effects variances $\boldsymbol{\gamma}_k$ are broadcast across clusters, producing the collection $\{\mathbf{V}^{(i)}(\boldsymbol{\gamma}_k)\}_{ik}$ as a four-dimensional tensor. Next, batched routines compute the factorizations concurrently, and the resulting triangular factors are used (again in batch) to extract the log-determinant terms from their diagonals and perform triangular solves for the quadratic terms.

In the setting $n_i\geq p$, the log-determinant and quadratic terms can be evaluated via Sylvester's determinant identity and the Woodbury identity using $p\times p$ batched factorizations, reducing the per-factorization cost from $O(n_i^3)$ to $O(p^3)$ \citep[see, e.g.,][]{Harville1997}.

\subsection{Computational Complexity}

Each iteration of Algorithm~\ref{alg:proximal} evaluates the negative log-likelihood term $l(\mathbf{B},\boldsymbol{\Gamma})$ and acyclicity term $h(\mathbf{B},\boldsymbol{\Gamma})$, along with their gradients. The acyclicity term, which involves the log-determinant of a $p\times p$ matrix, requires $O(p^3)$ operations. The negative log-likelihood term requires $O(p\sum_{i=1}^m\min\{n_i^2p+n_i^3,p^3\})$ operations. When the number of clusters $m$ grows with the total sample size $\ndot:=\sum_{i=1}^mn_i$ and the cluster sizes $n_i$ remain uniformly bounded by a constant $n$, the negative log-likelihood term reduces to $O(p^2\ndot)$ operations. The gradients of both terms incur the same asymptotic costs. Hence, the cost per iteration of Algorithm~\ref{alg:proximal} is $O(p\sum_{i=1}^m\min\{n_i^2p+n_i^3,p^3\}+p^3)$ in general and $O(p^2\ndot+p^3)$ in the bounded-cluster regime.

To examine how the run time scales with the number of nodes, we time Algorithm~\ref{alg:path} as $p$ varies. Figure~\ref{fig:timings} reports the results.
\begin{figure}[t]
\centering
\includegraphics[width=0.5\textwidth]{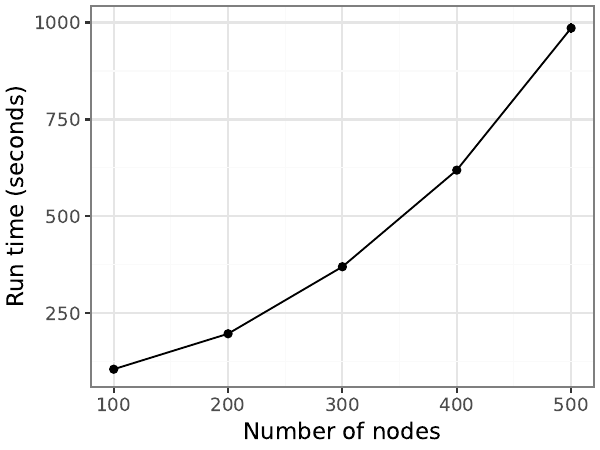}
\caption{Run time in seconds as a function of the number of nodes $p$ with $\ndot=1{,}000$ observations and $m=20$ clusters, measured on an NVIDIA RTX 4090. Averages (solid points) and standard errors (error bars) are measured over 30 datasets. All runs use regularization parameters $\lambda_1=\lambda_2=0.001$.}
\label{fig:timings}
\end{figure}
As expected, the run time grows superlinearly in $p$, though not quite at a cubic rate (cubic complexity in $p$ is worst-case complexity for computing the log-determinant). Overall, the times remain practical for many applications, with training taking about a minute for $p=100$ and about 15 minutes for $p=500$.

\section{Statistical Properties}
\label{sec:statistical}

\subsection{Setup}

We now study some statistical properties of mixed DAGs. In particular, we first establish an identifiability result for the model and then present a statistical guarantee for our estimator in the asymptotic regime where the number of clusters $m\to\infty$. The guarantee addresses both (i) parameter estimation consistency and (ii) structure recovery consistency. 

We assume the data-generating process is a structural equation model of the form \eqref{eq:sem}:
\begin{equation*}
\mathbf{X}^{(i)}=\mathbf{X}^{(i)}\mathbf{W}_0^{(i)}+\boldsymbol{\varepsilon}^{(i)},\qquad\mathbf{W}_0^{(i)}=\mathbf{B}_0+\mathbf{U}_0^{(i)},\qquad i=1,\dots,m,
\end{equation*}
where the noise $\boldsymbol{\varepsilon}^{(i)}\sim\mathrm{N}(\mathbf{0},\mathbf{I})$ and the random-effects matrix $\mathbf{U}_0^{(i)}\in\mathbb{R}^{p\times p}$ has columns satisfying $\mathbf{u}_{k,0}^{(i)}\sim\mathrm{N}(\mathbf{0},\operatorname{diag}(\boldsymbol{\gamma}_{k,0}))$, with all random effects mutually independent and independent of the noise. The model parameters $\mathbf{B}_0\in\mathbb{R}^{p\times p}$ and $\boldsymbol{\Gamma}_0\in\mathbb{R}_+^{p\times p}$, representing the true fixed- and random-effects parameters, respectively, constitute a causal model satisfying
\begin{equation*}
\mathcal{G}(\mathbf{B}_0)\cup\mathcal{G}(\boldsymbol{\Gamma}_0)\in\mathrm{DAG}_p.
\end{equation*}
We denote the edge sets (i.e., nonzero components) of the matrices $\mathbf{B}_0$ and $\boldsymbol{\Gamma}_0$ by
\begin{equation*}
\mathcal{S}_\mathbf{B}:=\{(j,k):\beta_{jk,0}\neq0\}\quad\text{and}\quad\mathcal{S}_{\boldsymbol{\Gamma}}:=\{(j,k):\gamma_{jk,0}\neq0\}.
\end{equation*}
We assume $\mathcal{S}_\mathbf{B}\neq\emptyset$ and $\mathcal{S}_{\boldsymbol{\Gamma}}\neq\emptyset$ for simplicity, i.e., there is at least one edge in each graph.

To estimate $(\mathbf{B}_0,\boldsymbol{\Gamma}_0)$, we study the minimizer of the following optimization problem:
\begin{equation*}
(\hat{\mathbf{B}},\hat{\boldsymbol{\Gamma}})\in\underset{(\mathbf{B},\boldsymbol{\Gamma})\in\mathcal{F}}{\arg\min}\;F_m(\mathbf{B},\boldsymbol{\Gamma}).
\end{equation*}
Here, the objective function $F_m$ is the same penalized objective function in \eqref{eq:optcomb} but with the negative log-likelihood scaled by the number of clusters $m$ to ensure a nondegenerate limit:
\begin{equation*}
F_m(\mathbf{B},\boldsymbol{\Gamma}):=\ell_m(\mathbf{B},\boldsymbol{\Gamma})+\lambda_1\|\mathbf{B}\|_1+\lambda_2\|\boldsymbol{\Gamma}\|_1,\qquad\ell_m(\mathbf{B},\boldsymbol{\Gamma})=\frac{1}{m}l(\mathbf{B},\boldsymbol{\Gamma}).
\end{equation*}
The feasible set $\mathcal{F}$ is the same as the constraint set in \eqref{eq:optcomb} but with added box constraints:
\begin{equation}
\label{eq:feasiblebox}
\mathcal{F}:=\{(\mathbf{B},\boldsymbol{\Gamma})\in\mathbb{R}^{p\times p}\times\mathbb{R}_+^{p\times p}:\mathcal{G}(\mathbf{B})\cup\mathcal{G}(\boldsymbol{\Gamma})\in\mathrm{DAG}_p,\,\|\mathbf{B}\|_\infty\leq M_\mathbf{B},\,\|\boldsymbol{\Gamma}\|_\infty\leq M_{\boldsymbol{\Gamma}}\}.
\end{equation}
The box constraints $\|\mathbf{B}\|_\infty\leq M_\mathbf{B}$ and $\|\boldsymbol{\Gamma}\|_\infty\leq M_{\boldsymbol{\Gamma}}$ are included in the constraint set as a technical convenience to guarantee compactness of $\mathcal{F}$. The constants $M_\mathbf{B}>0$ and $M_{\boldsymbol{\Gamma}}>0$ may be set arbitrarily large so long as they satisfy $M_\mathbf{B}>\|\mathbf{B}_0\|_\infty$ and $M_{\boldsymbol{\Gamma}}>\|\boldsymbol{\Gamma}_0\|_\infty$.

\subsection{Assumptions}

The consistency results for the estimator rely on two assumptions which we set forth below. Because some true components $\gamma_{jk,0}$ may equal zero, derivatives with respect to $\gamma_{jk}$ at $\gamma_{jk}=0$ are interpreted as the corresponding one-sided derivatives from within $\mathbb{R}_+^{p\times p}$.

\begin{assumption}
\label{asmp:penalty}
As $m\to\infty$, the regularization parameters $\lambda_1$ and $\lambda_2$ satisfy $\lambda_1\to0$, $\lambda_2\to0$, $\sqrt{m}\lambda_1\to\infty$, $\sqrt{m}\lambda_2\to\infty$, and $\lambda_2/\lambda_1\to\kappa\in(0,\infty)$.
\end{assumption}
Assumption~\ref{asmp:penalty} specifies the scaling of the regularization parameters. The conditions $\lambda_1,\lambda_2\to0$ allow consistent parameter estimation by making the penalties asymptotically negligible. The additional conditions $\sqrt{m}\lambda_1,\sqrt{m}\lambda_2\to\infty$ are needed for consistent structure recovery by ensuring that the penalties dominate the sampling fluctuations. The ratio condition $\lambda_2/\lambda_1\to\kappa\in(0,\infty)$ implies that the two penalties shrink at the same asymptotic order.

\begin{assumption}
\label{asmp:hessian}
The population Hessian $\mathbf{H}:=\nabla^2 L(\mathbf{B}_0,\boldsymbol{\Gamma}_0)$ is positive definite. In addition, let $\mathcal{S}:=(\mathcal{S}_{\mathbf{B}},\mathcal{S}_{\boldsymbol{\Gamma}})$ and write $\mathbf{H}_{\mathcal{S}\mathcal{S}}$ for the principal submatrix of $\mathbf{H}$ indexed by $\mathcal{S}$. For $(j,k)$ indexing a component of $\mathbf{B}$ or $\boldsymbol{\Gamma}$, let $\mathbf{h}_{jk,\mathcal{S}}^{(\mathbf{B})}$ and $\mathbf{h}_{jk,\mathcal{S}}^{(\boldsymbol{\Gamma})}$ denote the row vectors of second derivatives between $\beta_{jk}$ or $\gamma_{jk}$ and the active components $(\mathbf{B}_{\mathcal{S}_\mathbf{B}},\boldsymbol{\Gamma}_{\mathcal{S}_{\boldsymbol{\Gamma}}})$, respectively. Then there exists an $\eta\in(0,1)$ such that
\begin{equation*}
\left|\mathbf{h}_{jk,\mathcal{S}}^{(\mathbf{B})}\mathbf{H}_{\mathcal{S}\mathcal{S}}^{-1}
\begin{pmatrix}
\mathbf{s}_\mathbf{B} \\
\kappa\mathbf{1}
\end{pmatrix}
\right|\leq1-\eta\quad\text{for all}\quad(j,k)\in\mathcal{S}_{\mathbf{B}}^c,
\end{equation*}
and
\begin{equation*}
\mathbf{h}_{jk,\mathcal{S}}^{(\boldsymbol{\Gamma})}\mathbf{H}_{\mathcal{S}\mathcal{S}}^{-1}
\begin{pmatrix}
\mathbf{s}_\mathbf{B} \\
\kappa\mathbf{1}
\end{pmatrix}
\leq\kappa(1-\eta)\quad\text{for all}\quad(j,k)\in\mathcal{S}_{\boldsymbol{\Gamma}}^c,
\end{equation*}
where $\mathbf{s}_\mathbf{B}:=\left(\operatorname{sign}(\beta_{jk,0})\right)_{(j,k)\in\mathcal{S}_\mathbf{B}}$ contains the signs of all active components of $\mathbf{B}$.
\end{assumption}
Assumption~\ref{asmp:hessian} imposes regularity conditions on the population loss. Positive definiteness of the Hessian at the true parameters makes the population loss locally strongly convex around $(\mathbf{B}_0,\boldsymbol{\Gamma}_0)$. The two inequalities constitute an irrepresentability condition, analogous to those used to analyze $\ell_1$-penalized linear models \citep{Zhao2006}, that limits the influence of inactive components through the active parameters, needed for consistent structure recovery.

\subsection{Results}

The following theorem establishes identifiability of the model described in the setup.
\begin{theorem}
\label{thrm:identifiability}
Under the data-generating process described above, if $\mathcal{G}(\mathbf{B}_0)\cup\mathcal{G}(\boldsymbol{\Gamma}_0)$ is a DAG, then the parameters $(\mathbf{B}_0,\boldsymbol{\Gamma}_0)$ are identifiable from the distribution of the observed clustered data. Consequently, the graphs $\mathcal{G}(\mathbf{B}_0)$, $\mathcal{G}(\boldsymbol{\Gamma}_0)$, and their union are identifiable.
\end{theorem}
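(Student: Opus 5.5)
Suppose two parameter pairs $(\mathbf{B},\boldsymbol{\Gamma})$ and $(\mathbf{B}',\boldsymbol{\Gamma}')$, each with acyclic union graph, induce the same distribution of a cluster $\mathbf{X}^{(i)}$ (it suffices to consider one cluster, since clusters are iid given their sizes). The plan is to show the two pairs coincide. First I recover a causal order from the observed distribution. Then, along that order, I identify each column $(\boldsymbol{\beta}_k,\boldsymbol{\gamma}_k)$ from the conditional law of $\mathbf{x}_k^{(i)}$ given its predecessors. The statement about the graphs then follows immediately, because $\mathcal{G}(\cdot)$ is a deterministic function of the parameters.

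\emph{Step 1 (node-level conditional laws).} Fix a topological order $\pi$ of the union DAG for $(\mathbf{B},\boldsymbol{\Gamma})$. Condition on the columns of $\mathbf{X}^{(i)}$ preceding node $k$ in $\pi$. By the independence of the random effects across columns and of the noise, $\mathbf{x}_k^{(i)}$ is Gaussian with mean $\mathbf{X}^{(i)}\boldsymbol{\beta}_k$ and covariance $\mathbf{V}^{(i)}(\boldsymbol{\gamma}_k)=\mathbf{I}+\mathbf{X}^{(i)}\operatorname{diag}(\boldsymbol{\gamma}_k)\mathbf{X}^{(i)\top}$, where only predecessor columns enter. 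The same form is the basis of the likelihood \eqref{eq:likelihood}. Suppose the conditioning set is correct and the design rows have a nondegenerate continuous distribution. Then the conditional mean identifies $\boldsymbol{\beta}_k$: the map $\mathbf{X}\mapsto\mathbf{X}\boldsymbol{\beta}_k$ determines $\boldsymbol{\beta}_k$ whenever $\mathbf{X}^{(i)}$ has full column rank on the parent set with positive probability, which holds because each $\mathbf{x}_j$ contains fresh Gaussian noise. The conditional covariance identifies $\boldsymbol{\gamma}_k$: its diagonal entries are $1+\sum_j\gamma_{jk}x_{rj}^2$, and these quadratic polynomials in the data determine each $\gamma_{jk}$ as a coefficient. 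It is worth noting that the unit noise variance pins down the scale, which rules out the usual equal-variance ambiguity.

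\emph{Step 2 (the order itself).} This is the main obstacle: I must show that the order is identifiable, i.e.\ that $(\mathbf{B}',\boldsymbol{\Gamma}')$ with a different topological order cannot produce the same law. I would use the equal-noise-variance argument of Peters and B\"uhlmann adapted to the random effects, working by induction on sink nodes. Conditional on the remaining columns, a sink node $k$ of the true union DAG has conditional variance per row equal to $1+\sum_j\gamma_{jk}x_{rj}^2\ge1$, with mean linear in the other columns. Now take a node that is not a sink. Its conditional law given all other columns involves the Gaussian posterior contributions from its children, because its residual enters their equations. This makes its conditional variance strictly less than $1+(\text{random-effect part})$ in a detectable way; more precisely, its conditional law is not of the form "linear mean plus variance $1+$ nonnegative quadratic in the others." The concrete route I would take first is to compare expected conditional variances. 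Under any DAG factorization with unit noise, $\mathbb{E}\log\det$ of the total covariance is minimized exactly by the true order, following the standard argument that a wrong order inflates at least one residual variance above $1$ by strictly convex averaging. I would then check that the random-effect terms only add nonnegative, order-consistent contributions.

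\emph{Step 3 (conclude).} With the order identified, recursively remove sinks. Apply Step 1 to each node with its predecessor set, obtaining $\boldsymbol{\beta}_k=\boldsymbol{\beta}_k'$ and $\boldsymbol{\gamma}_k=\boldsymbol{\gamma}_k'$ for all $k$. Hence $(\mathbf{B}_0,\boldsymbol{\Gamma}_0)$ is identified, and so are $\mathcal{G}(\mathbf{B}_0)$, $\mathcal{G}(\boldsymbol{\Gamma}_0)$ and their union.
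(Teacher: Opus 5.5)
\textbf{There is a genuine gap in Step 2.} Your Steps 1 and 3 are sound and match the final step of the paper's proof. Once a common topological order is known, the conditional mean $\sum_j\beta_{jk,0}x_j$ and conditional variance $1+\sum_j\gamma_{jk,0}x_j^2$ of $x_k$ given its predecessors pin down $\boldsymbol{\beta}_k$ and $\boldsymbol{\gamma}_k$.

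The gap is Step 2, which you rightly call the crux but do not prove. The sink characterization is only asserted. The mechanism you give for it is also false in this model. Suppose node $k$ has a child $c$ with $\gamma_{kc,0}>0$. The child's factor is $(1+\gamma_{kc,0}x_k^2+\cdots)^{-1/2}\exp\{-r_c^2/(2(1+\gamma_{kc,0}x_k^2+\cdots))\}$, and it is \emph{increasing} in $|x_k|$ whenever the child's residual $r_c$ is large. So conditioning on the children can inflate the conditional variance of $x_k$ well above $1+\sum_j\gamma_{jk,0}x_j^2$. For example, take $p=2$, $\beta_{12,0}=0$, $\gamma_{12,0}=1$ and $x_2$ large: node $1$ is not a sink, yet its conditional variance given $x_2$ far exceeds $1$.

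Your fallback, that $\mathrm{E}\log\det$ is minimized by the true order by convex averaging, is the Peters--B\"uhlmann argument for Gaussian SEMs. Here the joint law is a non-Gaussian scale mixture, and the competing pair $(\mathbf{B}',\boldsymbol{\Gamma}')$ also carries random effects. So the step of checking that the random-effect terms only add nonnegative, order-consistent contributions is exactly the content that remains unproved.

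\textbf{The missing idea} is that the noise variance is \emph{known} to be one, so sources can be read off from marginal variances; this is how the paper proceeds. A source has $\operatorname{Var}(x_k)=1$. A non-source has $\operatorname{Var}(x_k)=1+\operatorname{Var}\{\sum_{j<k}(\beta_{jk,0}+u_{jk})x_j\}>1$. This holds because the incoming coefficients are independent of the predecessors and are not all zero almost surely, and the predecessors have positive definite covariance.

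This characterization holds for every parameter pair in the model class, so both pairs share the same source set. A source $x_1=\varepsilon_1$ is independent of all other noise and random effects, so conditioning on $x_1=0$ returns a model of the same form on the remaining nodes. Recursing produces a common topological order, after which your Step 1 finishes the proof.

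If you prefer to keep your sink route, you can close it by evaluating at $x_{-k}=\mathbf{0}$, i.e., all other coordinates set to zero. For a sink, the conditional law of $x_k$ is then exactly $\mathrm{N}(0,1)$. For a non-sink, it is proportional to $\phi(x_k)\prod_{c}(1+\gamma_{kc,0}x_k^2)^{-1/2}\exp\{-\beta_{kc,0}^2x_k^2/(2(1+\gamma_{kc,0}x_k^2))\}$, with $\phi$ the standard normal density and the product over children $c$ of $k$. Each factor is strictly decreasing in $|x_k|$, so this law is not $\mathrm{N}(0,1)$. Marginalizing out a sink then gives the recursion.
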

\begin{proof}
See Appendix~\ref{app:identifiability}.
\end{proof}
Theorem~\ref{thrm:identifiability} implies that the data-generating parameters $(\mathbf{B}_0,\boldsymbol{\Gamma}_0)$ are uniquely determined by the distribution of the observed clustered data. Since $\ell_m(\mathbf{B},\boldsymbol{\Gamma})$ is the negative log-likelihood under our structural equation model, the population loss $L(\mathbf{B},\boldsymbol{\Gamma}):=\mathrm{E}[\ell_m(\mathbf{B},\boldsymbol{\Gamma})]$ coincides with the Kullback--Leibler divergence between the true distribution and the model up to an additive constant. Consequently, $L(\mathbf{B},\boldsymbol{\Gamma})$ is uniquely minimized at $(\mathbf{B}_0,\boldsymbol{\Gamma}_0)$. In the non-clustered setting, \citet{Peters2014} establish an analogous identifiability result for linear Gaussian structural equation models with equal noise variances.

The following theorem establishes asymptotic consistency of parameter estimation and structure recovery of the estimator $(\hat{\mathbf{B}},\hat{\boldsymbol{\Gamma}})$ with respect to the ground truth $(\mathbf{B}_0,\boldsymbol{\Gamma}_0)$.
\begin{theorem}
\label{thrm:consistency}
Suppose the clusters $\mathbf{X}^{(1)},\dots,\mathbf{X}^{(m)}$ are iid and have a common (deterministic) cluster size $n_i\equiv n$ with $1\leq n<\infty$, and Assumptions~\ref{asmp:penalty} and \ref{asmp:hessian} hold. Then the estimator $(\hat{\mathbf{B}},\hat{\boldsymbol{\Gamma}})$ is parameter estimation consistent as $m\to\infty$:
\begin{equation*}
\|(\hat{\mathbf{B}},\hat{\boldsymbol{\Gamma}})-(\mathbf{B}_0,\boldsymbol{\Gamma}_0)\|_F \overset{p}{\to}0.
\end{equation*}
Moreover, $(\hat{\mathbf{B}},\hat{\boldsymbol{\Gamma}})$ is structure recovery consistent as $m\to\infty$:
\begin{equation*}
\mathrm{Pr}\left(\mathcal{G}(\hat{\mathbf{B}})=\mathcal{G}(\mathbf{B}_0)\enspace\text{and}\enspace\mathcal{G}(\hat{\boldsymbol{\Gamma}})=\mathcal{G}(\boldsymbol{\Gamma}_0)\right)\to1.
\end{equation*}
\end{theorem}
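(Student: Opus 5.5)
We follow the standard M-estimation route: uniform convergence plus identifiability for consistency, then a primal--dual witness argument driven by the irrepresentability condition for support recovery.

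\emph{Step 1: parameter consistency.} Since the clusters are iid with common size $n$, the scaled loss $\ell_m(\mathbf{B},\boldsymbol{\Gamma})=m^{-1}\sum_{i=1}^m\ell^{(i)}(\mathbf{B},\boldsymbol{\Gamma})$ is an average of iid terms. Each summand is continuous in $(\mathbf{B},\boldsymbol{\Gamma})$ on the compact set $\mathcal{F}$. The reason is that $\mathbf{V}^{(i)}(\boldsymbol{\gamma}_k)\succeq\mathbf{I}$, so $\log\det\mathbf{V}^{(i)}\geq0$, and the quadratic form is bounded by $\|\mathbf{x}_k^{(i)}-\mathbf{X}^{(i)}\boldsymbol{\beta}_k\|^2$. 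Both terms have an integrable envelope, a polynomial in $\|\mathbf{X}^{(i)}\|$ uniformly over $\mathcal{F}$; this uses finite Gaussian-mixture moments of $\mathbf{X}^{(i)}$, which hold because $\mathbf{W}_0^{(i)}$ is acyclic, so $(\mathbf{I}-\mathbf{W}_0^{(i)})^{-1}$ is polynomial in the random effects.

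The uniform law of large numbers then gives $\sup_{\mathcal{F}}|\ell_m-L|\overset{p}{\to}0$. The penalties vanish uniformly on $\mathcal{F}$ because $\lambda_1,\lambda_2\to0$. We need $\mathcal{F}$ to be compact, but the DAG constraint is a finite union of closed coordinate-subspace-type sets intersected with boxes, so $\mathcal{F}$ is closed and hence compact. By Theorem~\ref{thrm:identifiability} and the KL remark after it, $L$ is uniquely minimized over $\mathcal{F}$ at $(\mathbf{B}_0,\boldsymbol{\Gamma}_0)$, which is an interior point of the box. The argmin consistency theorem (e.g.\ van der Vaart, Thm.~5.7) then yields $\|(\hat{\mathbf{B}},\hat{\boldsymbol{\Gamma}})-(\mathbf{B}_0,\boldsymbol{\Gamma}_0)\|_F\overset{p}{\to}0$.

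\emph{Step 2: rate and restricted oracle problem.} With probability tending to one, the estimator lies in a small neighbourhood $N$ of the truth. Inside $N$, every $\beta_{jk}$ with $(j,k)\in\mathcal{S}_\mathbf{B}$ is nonzero and keeps its sign, and every active $\gamma_{jk}$ is positive. We define an oracle estimator $(\tilde{\mathbf{B}},\tilde{\boldsymbol{\Gamma}})$ minimizing $F_m$ over $N$ with inactive coordinates set to zero. Its support is a subgraph of the true union, so it is feasible.

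By the CLT, $\sqrt m\,\nabla\ell_m(\mathbf{B}_0,\boldsymbol{\Gamma}_0)=O_p(1)$, and $\nabla^2\ell_m\to\mathbf{H}$ uniformly on $N$ by the ULLN applied to second derivatives. A Taylor expansion on $\mathcal{S}$ then gives
\begin{equation*}
\tilde{\theta}_{\mathcal{S}}-\theta_{0,\mathcal{S}}=-\mathbf{H}_{\mathcal{S}\mathcal{S}}^{-1}\Big\{\nabla_{\mathcal{S}}\ell_m(\theta_0)+\lambda_1\begin{pmatrix}\mathbf{s}_\mathbf{B}\\ (\lambda_2/\lambda_1)\mathbf{1}\end{pmatrix}\Big\}+o_p(\lambda_1).
\end{equation*}
The error is therefore $O_p(\lambda_1)$, since $\sqrt m\lambda_1\to\infty$ means the $m^{-1/2}$ term is $o_p(\lambda_1)$.

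\emph{Step 3: primal--dual witness.} We must show that $(\tilde{\mathbf{B}},\tilde{\boldsymbol{\Gamma}})$ is a strict local minimizer of $F_m$ over $N\cap\{\boldsymbol{\Gamma}\geq0\}$ with zeros permitted to become nonzero. Dropping the DAG constraint locally only enlarges the set, so this is enough. The KKT conditions for an inactive $\beta_{jk}$ require $|\partial_{\beta_{jk}}\ell_m(\tilde\theta)|<\lambda_1$. Expanding, this quantity equals $-\lambda_1\mathbf{h}^{(\mathbf{B})}_{jk,\mathcal{S}}\mathbf{H}_{\mathcal{S}\mathcal{S}}^{-1}(\mathbf{s}_\mathbf{B},\kappa\mathbf{1})^\top+o_p(\lambda_1)$. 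By Assumption~\ref{asmp:hessian} its magnitude is at most $\lambda_1(1-\eta)+o_p(\lambda_1)$.

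For an inactive $\gamma_{jk}$, the one-sided condition $\partial_{\gamma_{jk}}\ell_m(\tilde\theta)+\lambda_2>0$ follows from the second inequality after dividing by $\lambda_1$ and using $\lambda_2/\lambda_1\to\kappa$. Positive definiteness of $\mathbf{H}$ makes $F_m$ locally strictly convex on $N$ with probability tending to one. So $\tilde\theta$ is the unique minimizer over $N$, and by Step~1 it coincides with $\hat\theta$.

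Hence $\hat\theta$ has exactly zero inactive entries, and since $\lambda_1\to0$ its active entries are nonzero. This gives structure recovery. The main obstacle is Step~3: linking the global estimator, which is defined over a nonconvex DAG-constrained set, to this local convex analysis. We also need uniform control of the Hessian remainder at the one-sided boundary $\gamma_{jk}=0$, where $\mathbf{H}$ is defined through one-sided derivatives. We would handle the boundary by extending $\ell_m$ smoothly to slightly negative $\gamma$; this is valid because $\mathbf{V}^{(i)}\succ0$ persists for small negative $\gamma$ with high probability.
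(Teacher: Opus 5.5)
\textbf{Gap in Step 3: the DAG constraint cannot be dropped on all of $N$.} Your Steps 1 and 2 match the paper's Parts 1--3(a). Step 3 has a genuine gap. You relax the DAG constraint entirely on $N$, and you are right that proving minimality on a larger set would suffice. But once you do this, strict dual feasibility must hold at \emph{every} inactive coordinate, including those $(j,k)$ where $j$ is a descendant of $k$, so that adding $j\to k$ closes a cycle with $\mathcal{G}(\mathbf{B}_0)\cup\mathcal{G}(\boldsymbol{\Gamma}_0)$.

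At such coordinates the population score does not vanish. The reason is that $l$ is a sum of nodewise regressions, and $\mathbf{x}_j^{(i)}$ is correlated with the residual $\mathbf{X}^{(i)}\mathbf{u}_k^{(i)}+\boldsymbol{\varepsilon}_k^{(i)}$. Concretely, take $p=2$ with the single edge $1\to2$, $\beta_{12,0}=b\neq0$ and $\gamma_{12,0}>0$. Node $1$ is a source, so $\mathbf{V}^{(i)}(\boldsymbol{\gamma}_{1,0})=\mathbf{I}$ and $\boldsymbol{\beta}_{1,0}=\mathbf{0}$, and
\begin{equation*}
\frac{\partial}{\partial\beta_{21}}L(\mathbf{B}_0,\boldsymbol{\Gamma}_0)=-2\,\mathrm{E}\big[\mathbf{x}_2^{(1)\top}\mathbf{x}_1^{(1)}\big]=-2nb\neq0,
\qquad
\frac{\partial}{\partial\gamma_{21}}L(\mathbf{B}_0,\boldsymbol{\Gamma}_0)=-n(n+1)(b^2+\gamma_{12,0})<0.
\end{equation*}
Hence $\partial_{\beta_{21}}\ell_m(\tilde{\mathbf{B}},\tilde{\boldsymbol{\Gamma}})\overset{p}{\to}-2nb$, while you need it below $\lambda_1\to0$ in absolute value. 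Likewise the one-sided condition $\partial_{\gamma_{21}}\ell_m+\lambda_2>0$ fails. So the oracle is not a local minimizer of the relaxed problem, which is exactly why the unconstrained Mixed DG baseline picks up reversed edges.

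Your displayed expansion silently drops the term $\partial_{\beta_{jk}}\ell_m(\mathbf{B}_0,\boldsymbol{\Gamma}_0)$. That term is $O_p(m^{-1/2})=o_p(\lambda_1)$ only when $j$ is a non-descendant of $k$. Assumption~\ref{asmp:hessian} cannot rescue the cycle-creating coordinates, since an $O(1)$ score dominates any $O(\lambda_1)$ term.

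\textbf{How to repair it.} You already have the needed ingredient: inside $N$ every true edge is active. So on $N\cap\mathcal{F}$, every coordinate whose activation closes a cycle with the true union graph is forced to zero by feasibility. Relax the DAG constraint on $N$ only to the \emph{linear} constraint that these coordinates vanish, namely all of $\mathcal{S}_\mathbf{B}^c\setminus\mathcal{T}_\mathbf{B}$ and $\mathcal{S}_{\boldsymbol{\Gamma}}^c\setminus\mathcal{T}_{\boldsymbol{\Gamma}}$ in the paper's notation. This set is convex and contains $N\cap\mathcal{F}$.

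Your local strict convexity and uniqueness argument then goes through, with dual feasibility needed only on the acyclicity-preserving sets $\mathcal{T}_\mathbf{B}$ and $\mathcal{T}_{\boldsymbol{\Gamma}}$. There the population score is zero for $\beta_{jk}$ and nonnegative for $\gamma_{jk}$. This holds because these coordinate directions are feasible and $(\mathbf{B}_0,\boldsymbol{\Gamma}_0)$ minimizes $L$ over $\mathcal{F}$. Equivalently, $\mathbf{x}_j$ is a non-descendant of $k$ and hence independent of $\mathbf{u}_k$ and $\boldsymbol{\varepsilon}_k$. This is precisely how the paper's Parts 3(b)--(c) are organized.
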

\begin{proof}
See Appendix~\ref{app:consistency}.
\end{proof}
Theorem~\ref{thrm:consistency} is stated for equal cluster sizes for simplicity. The result extends to unequal but uniformly bounded cluster sizes $1\leq n_i\leq n<\infty$ with minor additional technicalities.

The proof of Theorem~\ref{thrm:consistency} involves four steps. We first establish consistency of the parameter estimates. Next, we show that this result rules out false negatives in the recovered structure. We then employ a primal--dual witness argument \citep{Wainwright2009} to establish the absence of false positives. Finally, we combine the results to get structure recovery consistency.

To the best of our knowledge, no prior work has established structure recovery consistency for mixed DAGs. Related selection consistency results are available for non-graphical mixed-effects models. For example, \citet{Fan2012} establish model selection consistency for penalized procedures that select both fixed and random effects in linear mixed models. Moreover, even in the classical homogeneous setting, structure recovery guarantees for DAG estimators are limited. One notable exception is the ordered-variable regime, where the DAG reduces to a lower-triangular adjacency and \citet{Shojaie2010} show $\ell_1$-penalized likelihood methods are structure recovery consistent under suitable conditions.

\section{Experiments}
\label{sec:experiments}

\subsection{Baselines and Metrics}

We experimentally evaluate five estimators, using the following labels in figures and tables.
\begin{itemize}
\item Mixed DAG: our proposed mixed-effects DAG estimator, which jointly learns fixed- and random-effects structures while simultaneously enforcing acyclicity of their union.
\item Fixed DAG: a fixed-effects-only DAG estimator following \citet{Bello2022}, which does not include random effects and therefore ignores cluster-level heterogeneity.
\item Mixed DAG (FM): a computationally simpler masked alternative to the mixed DAG that replaces the explicit acyclicity constraint with a fixed mask (FM) obtained from the fixed DAG estimate. Specifically, the fixed DAG estimate is converted into a topological ordering, and the resulting mask is applied to the fixed- and random-effects matrices, setting all entries that violate the ordering to zero. Thus, only edges from earlier to later variables in the ordering can be selected, thereby guaranteeing acyclicity.
\item Mixed DAG (OM): the same masked mixed-effects DAG estimator as outlined above, but using an oracle mask (OM) obtained from a topological ordering of the true DAG.
\item Mixed DG: a mixed-effects directed graph (DG) estimator that fits the same model as the mixed DAG but does not enforce acyclicity, and hence need not produce a DAG.
\end{itemize}
The oracle variant serves as a purely theoretical baseline, providing an idealized bound on performance. Among these estimators, those with an explicit acyclicity constraint use the log-determinant acyclicity characterization of \citet{Bello2022}, providing a uniform basis for comparison. Sparsity parameters are tuned on a validation set generated independently and identically to the training set. Appendix~\ref{app:implementation} provides implementation details.

We consider several metrics that characterize different attributes of the estimators. We report the estimation error, defined as the average of squared differences between the estimated and true weighted adjacency matrices over all clusters. We report the structural Hamming distance (SHD), defined as the number of edge additions, deletions, and reversals needed to transform the estimated structure into the true structure. We also report the F1 score, defined as the harmonic mean of precision and recall in edge selection. Finally, we report the sparsity, defined as the total number of edges in the estimated graph.

\subsection{Synthetic Data}

We generate synthetic datasets from the structural equation model \eqref{eq:sem} as follows. First, we sample an Erdős--Rényi graph \citep{Erdos1959} with $p$ nodes and $s$ edges, draw a random ordering of the nodes, and orient each edge from the earlier node to the later node in that ordering, thereby obtaining a DAG. Each edge in this graph is then assigned a weight independently from $\mathrm{Unif}([-0.3,-0.1]\cup[0.1,0.3])$, producing the fixed-effects matrix $\mathbf{B}$. We then draw the random-effects variances independently from $\mathrm{Unif}([0.1,0.3])$ to form the matrix $\boldsymbol{\Gamma}$, and use these to generate the random-effects edge weights $\mathbf{U}^{(i)}\sim\mathrm{N}(\mathbf{0},\boldsymbol{\Gamma})$ for $i=1,\dots,m$. Independently, the noise terms are sampled for each cluster as $\boldsymbol{\varepsilon}^{(i)}\sim\mathrm{N}(\mathbf{0},\mathbf{I})$ for $i=1,\dots,m$. Finally, the total sample size $\ndot$ is varied over five values logarithmically spaced between $30$ and $1{,}000$. We set the number of clusters to $m=\lceil\sqrt{\ndot}\rceil$ and assign observations randomly to clusters, resulting in clusters that vary in size with a mean of $\ndot/m$.

Figures~\ref{fig:synthetic_50_50_inf+inf_erdos_renyi} and \ref{fig:synthetic_200_200_inf+inf_erdos_renyi} present results for graphs with $p=50$ and $p=200$ nodes, respectively.
\begin{figure}[t]
\centering
\includegraphics[width=0.8\textwidth]{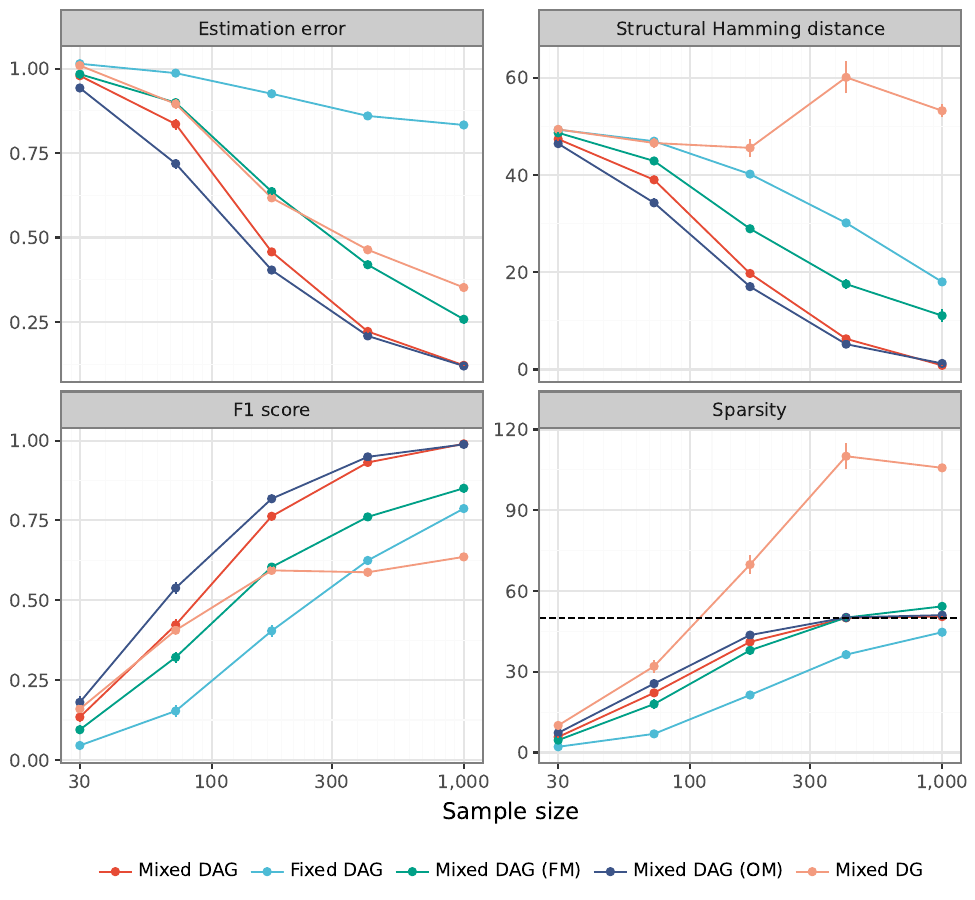}
\caption{Performance on synthetic data generated from Erdős--Rényi DAGs with $p=50$ nodes, $s=50$ edges, and $m=\lceil\sqrt{\ndot}\rceil$ clusters. Averages (solid points) and standard errors (error bars) are measured over 30 datasets. The dashed horizontal line in the bottom right panel indicates the number of edges in the true DAG.}
\label{fig:synthetic_50_50_inf+inf_erdos_renyi}
\end{figure}
\begin{figure}[t]
\centering
\includegraphics[width=0.8\textwidth]{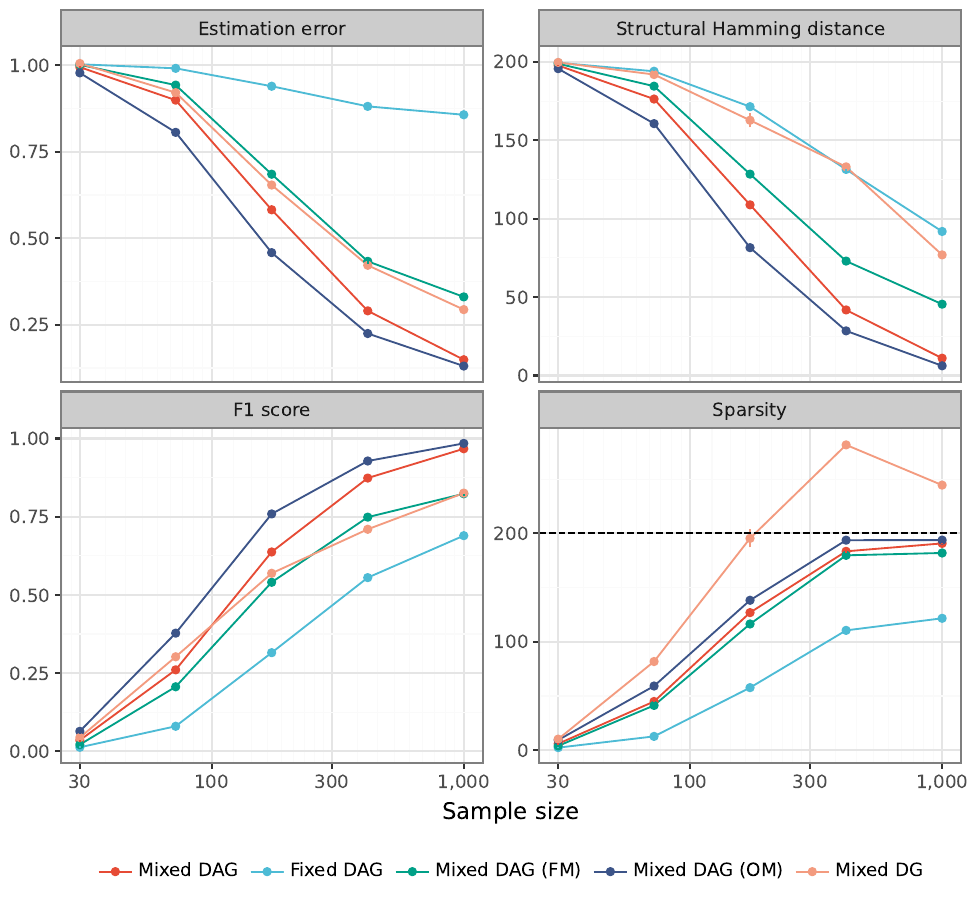}
\caption{Performance on synthetic data generated from Erdős--Rényi DAGs with $p=200$ nodes, $s=200$ edges, and $m=\lceil\sqrt{\ndot}\rceil$ clusters. Averages (solid points) and standard errors (error bars) are measured over 30 datasets. The dashed horizontal line in the bottom right panel indicates the number of edges in the true DAG.}
\label{fig:synthetic_200_200_inf+inf_erdos_renyi}
\end{figure}
In both settings, the mixed DAG exhibits excellent performance in both estimation and structure recovery. At very small sample sizes, where faithful recovery of the true DAG is intrinsically difficult, its results are comparable to those of the fixed DAG. As the sample size increases, however, a clear gap emerges, with the mixed DAG delivering substantially superior performance. Notably, the fixed DAG fails to recover the true graph structure even when $\ndot=1{,}000$. The FM variant improves on the fixed DAG by incorporating cluster-specific edge weights, but its performance is limited by the quality of the ordering estimated from the fixed DAG. In contrast, the mixed DAG reliably recovers sparse, accurate structures, making it a practical method for learning DAGs from clustered data at scale. Moreover, its performance approaches that of the OM variant, which uses a topological ordering of the true DAG as an idealized benchmark. The mixed DG performs worst overall because it does not respect the acyclic nature of the graphs, highlighting the importance of the DAG constraint.

Further experiments under alternative simulation designs are reported in Appendix~\ref{app:additional}. These additional results show that the mixed DAG continues to perform excellently under (i) alternative sparsity levels, (ii) alternative graph types, and (iii) alternative cluster regimes.

\subsection{Semi-synthetic Data}

To complement the fully synthetic graph designs in the previous experiments, we generate semi-synthetic datasets from the ANDES network, a well-known DAG from physics \citep{Conati1997}. It is publicly available from the Bayesian Network Repository (\url{https://www.bnlearn.com/bnrepository}). The network contains $p=223$ nodes and $s=338$ edges, representing a large-scale graph with realistic topologies and sparsity patterns. We generate the datasets by treating the network adjacency matrix as the ground-truth DAG and draw datasets by sampling fixed effects, random effects, and noise in the same fashion as before.

Table~\ref{tab:semisynthetic} reports results for total sample size $\ndot=1{,}000$, with the number of clusters $m=\lceil\sqrt{\ndot}\rceil=32$ as in the main synthetic experiments. On this large graph, the mixed DAG maintains strong performance, achieving the best metrics among all practical competitors. The improvement over the fixed DAG is particularly pronounced, with a sevenfold decrease in estimation error and a sizable gain in F1 score. Importantly, the gap between the mixed DAG and the oracle mask variant is relatively modest, indicating that jointly learning the ordering and mixed effects does not incur a large statistical penalty at this scale. In contrast, methods that rely on a fixed ordering or omit acyclicity constraints exhibit degraded performance.

\begin{table*}[ht]
\centering
\caption{Performance on semi-synthetic data generated from the ANDES network from the Bayesian Network Repository with $p=223$ nodes, $s=338$ edges, and $m=\lceil\sqrt{\ndot}\rceil=32$ clusters. Averages (standard errors) are measured over 30 datasets.}
\label{tab:semisynthetic}
\small
\begin{tabularx}{\linewidth}{lXXXX}
\toprule
 & Est. error & SHD & F1 score & Sparsity \\
\midrule
Mixed DAG & 0.13 (0.00) & 17.23 (0.67) & 0.97 (0.00) & 321.57 (0.64) \\
Fixed DAG & 0.90 (0.00) & 187.13 (2.12) & 0.60 (0.01) & 209.23 (2.39) \\
Mixed DAG (FM) & 0.38 (0.01) & 87.23 (1.13) & 0.80 (0.00) & 306.97 (1.29) \\
Mixed DAG (OM) & 0.12 (0.00) & 10.43 (0.50) & 0.98 (0.00) & 327.57 (0.50) \\
Mixed DG & 0.29 (0.00) & 118.97 (1.29) & 0.83 (0.00) & 364.20 (1.20) \\
\bottomrule
\end{tabularx}
\end{table*}

\subsection{Real Data}

As an illustrative real-world example, we consider the lung cancer multiplex immunofluorescence dataset from the \texttt{R} package \texttt{VectraPolarisData}. The dataset contains cell-level protein measurements from $m=153$ patients. The observational units are individual cells, and for each cell we observe the expression levels of $p=6$ protein markers of tumor and immune activity: \texttt{cd19}, \texttt{cd3}, \texttt{cd14}, \texttt{cd8}, \texttt{hladr}, and \texttt{ck}. This low-dimensional setting enables direct visualization of learned graphs that might be indicative of tumor-immune proteomic interactions. We treat cells as clustered by patient to allow marker interactions to vary across individuals and to incorporate patient-level tumor heterogeneity, which is common in cancers and especially in lung cancer. To balance patient contributions in this illustrative analysis, we randomly select 300 cells per patient and split them equally into training, validation, and testing sets, yielding $\ndot=153\times100=15{,}300$ cell-level observations in each split.

Figure~\ref{fig:real} shows that both methods recover a coherent ordering in which immune-associated markers \texttt{cd19}, \texttt{cd8}, \texttt{hladr}, \texttt{cd3}, and \texttt{cd14} feed into \texttt{ck}, consistent with \texttt{ck} as an epithelial/tumor marker downstream of immune-associated features.
\begin{figure*}[t]
\centering
\begin{subfigure}[t]{0.49\linewidth}
\centering
\includegraphics[width=0.8\textwidth]{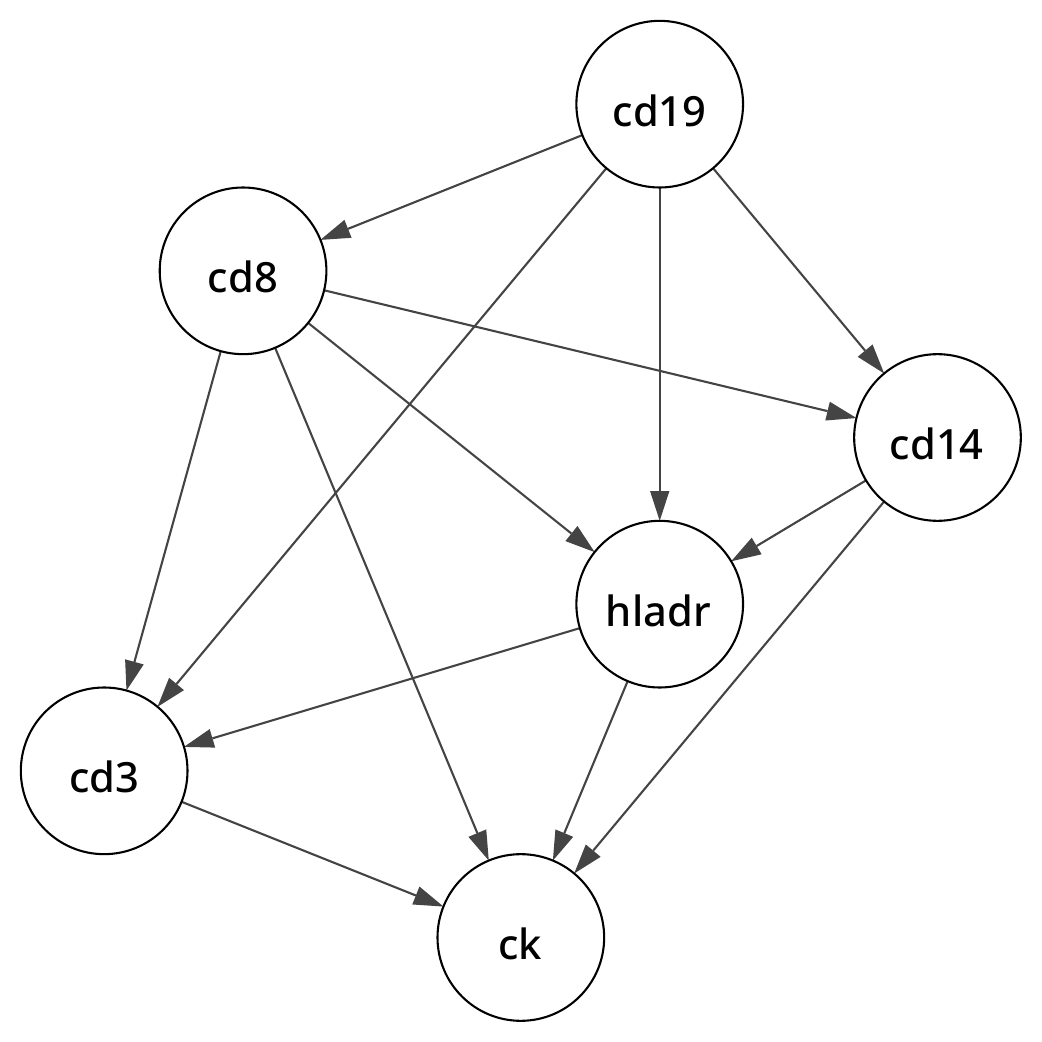}
\caption{Fixed DAG}
\end{subfigure}
\begin{subfigure}[t]{0.49\linewidth}
\centering
\includegraphics[width=0.8\textwidth]{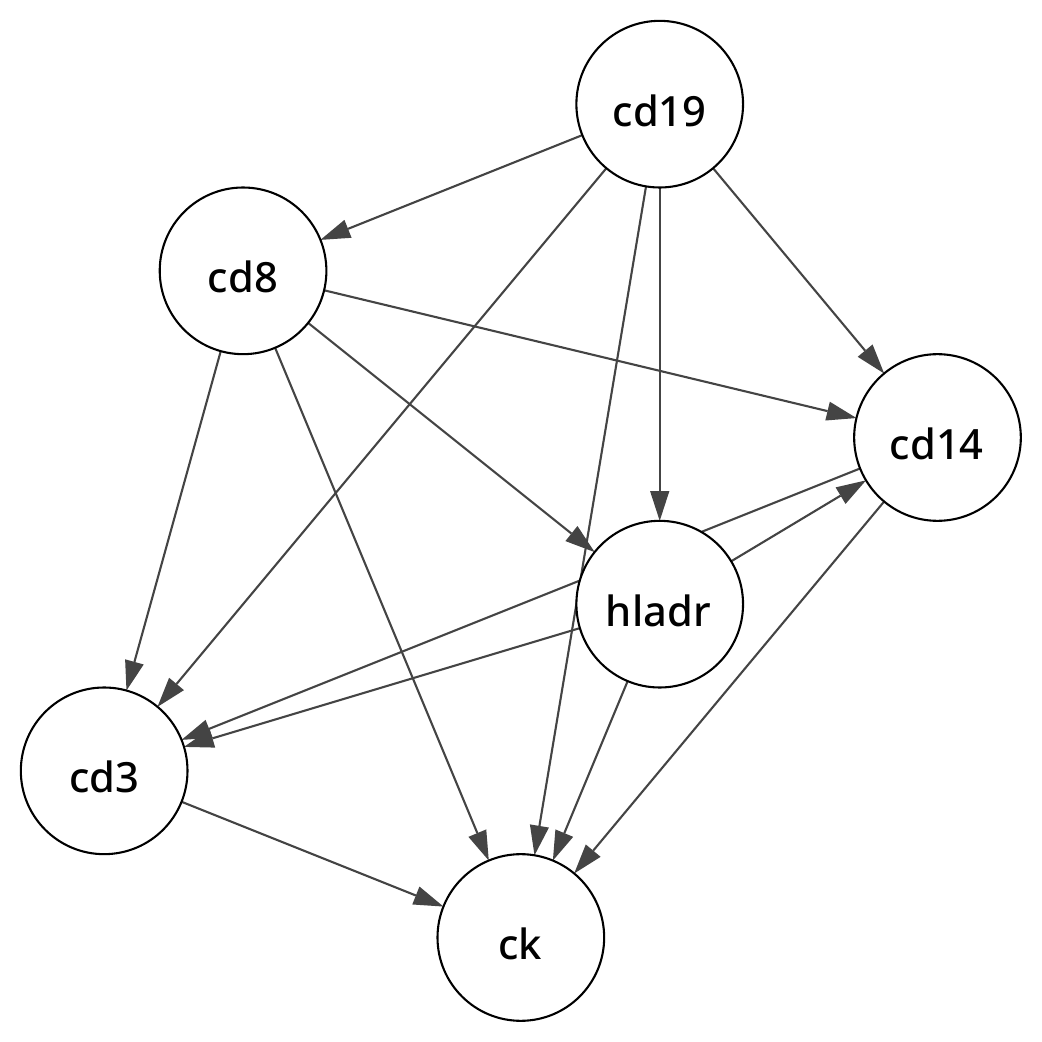}%
\caption{Mixed DAG}
\end{subfigure}
\caption{Graphs learned on the \texttt{VectraPolarisData}. All edges in the graph produced by the mixed DAG contain both fixed and random effects.}
\label{fig:real}
\end{figure*}
Relative to the fixed DAG, which selects 13 edges, the mixed DAG learns a slightly denser graph containing 15 edges. Compared with the fixed DAG, the mixed DAG reverses the direction of the edge between \texttt{hladr} and \texttt{cd14}, yielding \texttt{hladr}$\to$\texttt{cd14} rather than \texttt{cd14}$\to$\texttt{hladr}, and also adds the dependencies \texttt{cd14}$\to$\texttt{cd3} and \texttt{cd19}$\to$\texttt{ck}. The remaining structure is consistent across the two graphs, including edges such as \texttt{cd8}$\to$\texttt{cd14}, \texttt{cd3}$\to$\texttt{ck}, and \texttt{cd14}$\to$\texttt{ck}. These structural differences are reflected in a sizable gap in the mean squared reconstruction error on the testing set, which drops from 0.83 under the fixed DAG to 0.67 under the mixed DAG.

\section{Final Remarks}
\label{sec:final}

Many fields, including medicine and biology, give rise to clustered data with heterogeneous effects, making them unsuitable for existing approaches to causal discovery that assume a homogeneous population. Our approach addresses this issue by extending scalable structure learning techniques to clustered data through the integration of fixed- and random-effects principles within the framework of DAGs. We use a differentiable acyclicity constraint that guarantees the union of the fixed- and random-effects graphs remains acyclic, and develop a provably convergent first-order method that leverages efficient batched updates across clusters, making it feasible to learn graphs with hundreds of nodes. We further establish identifiability and show that our estimator recovers the true structure asymptotically. Synthetic, semi-synthetic, and real proteomics experiments demonstrate that the new estimator can recover both population-level and cluster-specific effects missed by alternative estimators. These results highlight its value as a principled tool for structure learning in clustered settings.

\acks{Thompson and Wand acknowledge financial support from the Australian Research Council under Discovery Project DP230101179. Baladandayuthapani was  partially supported by National Institutes of Health grants R01CA244845-
01A1 and P30 CA46592 for this work. }

\appendix

\section{Proof of Proposition~\ref{prop:union_acyclicity}}
\label{app:union_acyclicity}

\begin{proof}

To simplify exposition, define the matrix
\begin{equation*}
\mathbf{S}:=\mathbf{B}\odot\mathbf{B}+\boldsymbol{\Gamma}.
\end{equation*}
Since $(\mathbf{B},\boldsymbol{\Gamma})\in\mathbb{D}$, the matrix $\mathbf{S}$ is entrywise nonnegative and satisfies $\rho(\mathbf{S})<1$. Hence, Theorem~1 of \citep{Bello2022} applies to $\mathbf{S}$ and gives
\begin{equation*}
-\log\det(\mathbf{I}-\mathbf{S})=0\iff\mathcal{G}(\mathbf{S})\in\mathrm{DAG}_p.
\end{equation*}
Since the left-hand side is exactly $h(\mathbf{B},\boldsymbol{\Gamma})$, it follows
\begin{equation*}
h(\mathbf{B},\boldsymbol{\Gamma})=0\iff\mathcal{G}(\mathbf{S})\in\mathrm{DAG}_p.
\end{equation*}
For each pair $(j,k)$, the definition of $\mathbf{S}$ and the nonnegativity of $\beta_{jk}^2$ and $\gamma_{jk}$ imply
\begin{equation*}
s_{jk}=0\iff\beta_{jk}^2=0\enspace\text{and}\enspace\gamma_{jk}=0\iff\beta_{jk}=0\enspace\text{and}\enspace\gamma_{jk}=0.
\end{equation*}
Thus, $s_{jk}\neq0$ when $\beta_{jk}\neq0$ or $\gamma_{jk}\neq0$, and hence
\begin{equation*}
\mathcal{G}(\mathbf{S})=\mathcal{G}(\mathbf{B}\odot\mathbf{B}+\boldsymbol{\Gamma})=\mathcal{G}(\mathbf{B})\cup\mathcal{G}(\boldsymbol{\Gamma}).
\end{equation*}
Combining these equivalences gives
\begin{equation*}
h(\mathbf{B},\boldsymbol{\Gamma})=0\iff\mathcal{G}(\mathbf{B})\cup\mathcal{G}(\boldsymbol{\Gamma})\in\mathrm{DAG}_p,
\end{equation*}
thereby proving the claim of the proposition.

\end{proof}

\section{Proof of Proposition~\ref{prop:descent}}
\label{app:descent}

The proof of the proposition requires two technical lemmas, which establish Lipschitz continuity of the gradients of the smooth parts of the objective function on a compact set. Lemma~\ref{lemma:llipschitz} first shows this property for the negative log-likelihood.

\begin{lemma}
\label{lemma:llipschitz}
Let $l(\mathbf{B},\boldsymbol{\Gamma})$ be the negative log-likelihood \eqref{eq:likelihood} defined on the domain $\mathbb{R}^{p\times p}\times\mathbb{R}_+^{p\times p}$. Fix any compact set $\Omega\subset\mathbb{R}^{p\times p}\times\mathbb{R}_+^{p\times p}$. Then there exists a constant $L_l<\infty$ such that
\begin{equation*}
\|\nabla l(\mathbf{B}_1,\boldsymbol{\Gamma}_1)-\nabla l(\mathbf{B}_2,\boldsymbol{\Gamma}_2)\|_F\leq L_l\|(\mathbf{B}_1,\boldsymbol{\Gamma}_1)-(\mathbf{B}_2,\boldsymbol{\Gamma}_2)\|_F,
\end{equation*}
for all $(\mathbf{B}_1,\boldsymbol{\Gamma}_1),(\mathbf{B}_2,\boldsymbol{\Gamma}_2)\in\Omega$. In particular, $\nabla l(\mathbf{B},\boldsymbol{\Gamma})$ is $L_l$-Lipschitz continuous on $\Omega$.
\end{lemma}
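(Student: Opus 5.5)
The plan is to show that $l$ is twice continuously differentiable on an open neighbourhood of $\mathbb{R}^{p\times p}\times\mathbb{R}_+^{p\times p}$. Then its Hessian is bounded on the compact set $\Omega$, and the Lipschitz bound follows from the mean value theorem. One subtlety is that $\Omega$ need not be convex. I will handle this either by working on the convex hull $\operatorname{conv}(\Omega)$, which is compact and still lies in $\mathbb{R}^{p\times p}\times\mathbb{R}_+^{p\times p}$ because that set is convex, or by arguing directly as below.

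\textbf{Step 1: smoothness.} Since $l$ is a finite sum over $(i,k)$, it suffices to treat one summand
\begin{equation*}
\phi_{ik}(\boldsymbol{\beta}_k,\boldsymbol{\gamma}_k)=\log\det\mathbf{V}^{(i)}(\boldsymbol{\gamma}_k)+\mathbf{r}^\top\mathbf{V}^{-(i)}(\boldsymbol{\gamma}_k)\mathbf{r},\qquad \mathbf{r}=\mathbf{x}_k^{(i)}-\mathbf{X}^{(i)}\boldsymbol{\beta}_k .
\end{equation*}
The map $\boldsymbol{\gamma}_k\mapsto\mathbf{V}^{(i)}(\boldsymbol{\gamma}_k)=\mathbf{I}+\mathbf{X}^{(i)}\operatorname{diag}(\boldsymbol{\gamma}_k)\mathbf{X}^{(i)\top}$ is affine. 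For $\boldsymbol{\gamma}_k\geq\mathbf{0}$ we have $\mathbf{V}^{(i)}\succeq\mathbf{I}$, so $\lambda_{\min}(\mathbf{V}^{(i)})\geq1$ and $\|\mathbf{V}^{-(i)}\|_2\leq1$. By continuity, $\mathbf{V}^{(i)}$ remains positive definite on an open neighbourhood of the nonnegative orthant. On that set, $\log\det$ and matrix inversion are $C^\infty$, and $\mathbf{r}$ is affine in $\boldsymbol{\beta}_k$. Hence $\phi_{ik}$, and therefore $l$, is $C^2$ there.

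\textbf{Step 2: bounded Hessian.} I will compute the second derivatives explicitly from the gradient formulas given in Section~\ref{sec:mixed}. They are finite sums of products of the following quantities:
\begin{itemize}
\item $\mathbf{x}_j^{(i)\top}\mathbf{V}^{-(i)}\mathbf{x}_{j'}^{(i)}$;
\item $\mathbf{x}_j^{(i)\top}\mathbf{V}^{-(i)}\mathbf{r}$ and its square or cube;
\item terms such as $(\mathbf{x}_j^{(i)\top}\mathbf{V}^{-(i)}\mathbf{x}_{j'}^{(i)})^2$, which come from differentiating $\mathbf{V}^{-(i)}$, since $\partial\mathbf{V}^{-1}/\partial\gamma_{j'k}=-\mathbf{V}^{-1}\mathbf{x}_{j'}\mathbf{x}_{j'}^\top\mathbf{V}^{-1}$.
\end{itemize}
Using $\|\mathbf{V}^{-(i)}\|_2\leq1$ on the domain, each factor is bounded by a polynomial in $\|\mathbf{X}^{(i)}\|$ and $\|\mathbf{r}\|$. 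Since $\|\mathbf{r}\|\leq\|\mathbf{x}_k^{(i)}\|+\|\mathbf{X}^{(i)}\|\|\boldsymbol{\beta}_k\|$ and $\boldsymbol{\beta}_k$ is bounded on the compact set, we obtain $\sup\|\nabla^2 l\|\leq L_l<\infty$ over $\operatorname{conv}(\Omega)$. Because the uniform bound $\|\mathbf{V}^{-(i)}\|_2\leq1$ holds on the whole domain, this estimate needs only boundedness of $\mathbf{B}$, not of $\boldsymbol{\Gamma}$. Alternatively, the bound follows from continuity of $\nabla^2 l$ on the compact set $\operatorname{conv}(\Omega)$.

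\textbf{Step 3: conclusion.} For two points of $\Omega$, the segment joining them lies in $\operatorname{conv}(\Omega)$. Applying the integral form of the mean value theorem to $\nabla l$ along this segment gives
\begin{equation*}
\|\nabla l(\mathbf{B}_1,\boldsymbol{\Gamma}_1)-\nabla l(\mathbf{B}_2,\boldsymbol{\Gamma}_2)\|_F\leq L_l\|(\mathbf{B}_1,\boldsymbol{\Gamma}_1)-(\mathbf{B}_2,\boldsymbol{\Gamma}_2)\|_F .
\end{equation*}

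The main obstacle is the bookkeeping in Step 2, specifically the mixed and pure $\boldsymbol{\gamma}$ second derivatives. The cleanest route is to avoid writing out explicit entries: note that $\nabla^2 l$ is continuous on an open set containing the compact set $\operatorname{conv}(\Omega)$, so it is uniformly bounded there. The only genuine point to verify is positive definiteness of $\mathbf{V}^{(i)}$ on the domain, which follows from $\boldsymbol{\gamma}_k\geq\mathbf{0}$.
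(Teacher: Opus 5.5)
Your proposal is correct and follows essentially the same route as the paper. Both arguments establish $C^2$ smoothness on the open set where every $\mathbf{V}^{(i)}(\boldsymbol{\gamma}_k)$ is positive definite, which contains $\mathbb{R}^{p\times p}\times\mathbb{R}_+^{p\times p}$ because $\mathbf{V}^{(i)}\succeq\mathbf{I}$ there. Both then bound the Hessian by continuity on the compact convex set $\operatorname{conv}(\Omega)$ and integrate it along line segments. Your side remark that $\|\mathbf{V}^{-(i)}\|_2\leq1$ makes the Hessian bound depend only on the size of $\mathbf{B}$ is a small extra that the paper does not use and the lemma does not need.
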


\begin{proof}

For each cluster $i=1,\dots,m$ and node $k=1,\dots,p$, define
\begin{equation*}
\mathbf{r}_k^{(i)}(\boldsymbol{\beta}):=\mathbf{x}_k^{(i)}-\mathbf{X}^{(i)}\boldsymbol{\beta},
\end{equation*}
and
\begin{equation*}
\phi_{ik}(\boldsymbol{\beta},\boldsymbol{\gamma}):=\log\det\{\mathbf{V}^{(i)}(\boldsymbol{\gamma})\}+\mathbf{r}_k^{(i)}(\boldsymbol{\beta})^\top\mathbf{V}^{-(i)}(\boldsymbol{\gamma})\mathbf{r}_k^{(i)}(\boldsymbol{\beta}).
\end{equation*}
Thus, writing $\boldsymbol{\beta}_k$ and $\boldsymbol{\gamma}_k$ for the $k$th column of $\mathbf{B}$ and $\boldsymbol{\Gamma}$, respectively, we have
\begin{equation*}
l(\mathbf{B},\boldsymbol{\Gamma})=\sum_{k=1}^p\sum_{i=1}^m\phi_{ik}(\boldsymbol{\beta}_k,\boldsymbol{\gamma}_k).
\end{equation*}
Now, for any $\boldsymbol{\gamma}\in\mathbb{R}_+^p$, it holds
\begin{equation*}
\mathbf{V}^{(i)}(\boldsymbol{\gamma})=\mathbf{I}+\mathbf{X}^{(i)}\operatorname{diag}(\boldsymbol{\gamma})\mathbf{X}^{(i)\top}\succeq\mathbf{I},
\end{equation*}
where $\succeq$ denotes positive semidefinite ordering, and hence $\mathbf{V}^{(i)}(\boldsymbol{\gamma})$ is positive definite with $\lambda_{\min}\{\mathbf{V}^{(i)}(\boldsymbol{\gamma})\}\geq1$. In particular, $\log\det\{\mathbf{V}^{(i)}(\boldsymbol{\gamma})\}$ and $\mathbf{V}^{-(i)}(\boldsymbol{\gamma})$ are well-defined, and these mappings are $C^\infty$ on the open set
\begin{equation*}
\mathcal{U}:=\left\{(\mathbf{B},\boldsymbol{\Gamma})\in\mathbb{R}^{p\times p}\times\mathbb{R}^{p\times p}:\mathbf{V}^{(i)}(\boldsymbol{\gamma}_k)\succ\mathbf{0}\,\forall\,i=1,\dots,m,\,k=1,\dots,p\right\}.
\end{equation*}
Since $\mathbf{V}^{(i)}(\boldsymbol{\gamma}_k)\succeq\mathbf{I}$ whenever $\boldsymbol{\Gamma}\geq\mathbf{0}$, we have $\mathbb{R}^{p\times p}\times\mathbb{R}_+^{p\times p}\subset\mathcal{U}$. Consequently, $l(\mathbf{B},\boldsymbol{\Gamma})$ is twice continuously differentiable on $\mathcal{U}$. Now, since $\mathbb{R}^{p\times p}\times\mathbb{R}_+^{p\times p}$ is convex and $\Omega$ is compact, its convex hull $\operatorname{conv}(\Omega)$ is compact and satisfies
\begin{equation*}
\operatorname{conv}(\Omega)\subset\mathbb{R}^{p\times p}\times\mathbb{R}_+^{p\times p}\subset\mathcal{U}.
\end{equation*}
Therefore, by continuity of the Hessian and compactness of $\operatorname{conv}(\Omega)$, there exists a finite constant
\begin{equation*}
L_l:=\sup_{(\mathbf{B},\boldsymbol{\Gamma})\in\operatorname{conv}(\Omega)}\|\nabla^2l(\mathbf{B},\boldsymbol{\Gamma})\|_\mathrm{op}<\infty,
\end{equation*}
where $\|\cdot\|_\mathrm{op}$ denotes the operator norm. Next, take any $(\mathbf{B}_1,\boldsymbol{\Gamma}_1),(\mathbf{B}_2,\boldsymbol{\Gamma}_2)\in\Omega$ and for $t\in[0,1]$ define the line segment
\begin{equation*}
(\mathbf{B}(t),\boldsymbol{\Gamma}(t)):=(\mathbf{B}_2,\boldsymbol{\Gamma}_2)+t\{(\mathbf{B}_1,\boldsymbol{\Gamma}_1)-(\mathbf{B}_2,\boldsymbol{\Gamma}_2)\}.
\end{equation*}
By convexity of $\operatorname{conv}(\Omega)$ it holds $(\mathbf{B}(t),\boldsymbol{\Gamma}(t))\in\operatorname{conv}(\Omega)$ for all $t\in[0,1]$ and therefore
\begin{equation*}
\nabla l(\mathbf{B}_1,\boldsymbol{\Gamma}_1)-\nabla l(\mathbf{B}_2,\boldsymbol{\Gamma}_2)=\int_0^1\nabla^2 l(\mathbf{B}(t),\boldsymbol{\Gamma}(t))\{(\mathbf{B}_1,\boldsymbol{\Gamma}_1)-(\mathbf{B}_2,\boldsymbol{\Gamma}_2)\}\,dt.
\end{equation*}
Taking norms, applying the triangle inequality, and then using $\|\nabla^2 l(\mathbf{B}(t),\boldsymbol{\Gamma}(t))\|_\mathrm{op}\leq L_l$ for all $t\in[0,1]$, we have
\begin{equation*}
\begin{split}
\|\nabla l(\mathbf{B}_1,\boldsymbol{\Gamma}_1)-\nabla l(\mathbf{B}_2,\boldsymbol{\Gamma}_2)\|_F&\leq\left\|\int_0^1\nabla^2 l(\mathbf{B}(t),\boldsymbol{\Gamma}(t))\{(\mathbf{B}_1,\boldsymbol{\Gamma}_1)-(\mathbf{B}_2,\boldsymbol{\Gamma}_2)\}\,dt\right\|_F \\
&\leq\int_0^1\|\nabla^2 l(\mathbf{B}(t),\boldsymbol{\Gamma}(t))\|_\mathrm{op}\,dt\,\|(\mathbf{B}_1,\boldsymbol{\Gamma}_1)-(\mathbf{B}_2,\boldsymbol{\Gamma}_2)\|_F \\
&\leq L_l\|(\mathbf{B}_1,\boldsymbol{\Gamma}_1)-(\mathbf{B}_2,\boldsymbol{\Gamma}_2)\|_F.
\end{split}
\end{equation*}
We thus conclude that $\nabla l(\mathbf{B},\boldsymbol{\Gamma})$ is $L_l$-Lipschitz on $\Omega$.

\end{proof}

Lemma~\ref{lemma:hlipschitz} shows the same property for the log-determinant function.

\begin{lemma}
\label{lemma:hlipschitz}
Let $h(\mathbf{B},\boldsymbol{\Gamma})$ be the log-determinant function \eqref{eq:acyclicity} defined on the domain $\mathbb{D}$ given in \eqref{eq:acyclicitydomain}. Fix any compact set $\Omega\subset\mathbb{D}$. Then there exists a constant $L_h<\infty$ such that
\begin{equation*}
\|\nabla h(\mathbf{B}_1,\boldsymbol{\Gamma}_1)-\nabla h(\mathbf{B}_2,\boldsymbol{\Gamma}_2)\|_F\leq L_h\|(\mathbf{B}_1,\boldsymbol{\Gamma}_1)-(\mathbf{B}_2,\boldsymbol{\Gamma}_2)\|_F,
\end{equation*}
for all $(\mathbf{B}_1,\boldsymbol{\Gamma}_1),(\mathbf{B}_2,\boldsymbol{\Gamma}_2)\in\Omega$. In particular, $\nabla h(\mathbf{B},\boldsymbol{\Gamma})$ is $L_h$-Lipschitz continuous on $\Omega$.
\end{lemma}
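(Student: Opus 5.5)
The plan is to mimic the proof of Lemma~\ref{lemma:llipschitz}, with one adjustment. The domain $\mathbb{D}$ is in general \emph{not} convex, so the convex hull of $\Omega$ need not lie in $\mathbb{D}$, and we cannot integrate the Hessian along arbitrary segments. I will therefore split pairs of points into "close" and "far" pairs. For close pairs the segment joining them stays inside a compact neighbourhood on which $h$ is smooth. For far pairs, boundedness of the gradient suffices.

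First I enlarge the domain to an open set on which $h$ is $C^\infty$. Define $\mathbf{S}(\mathbf{B},\boldsymbol{\Gamma}):=\mathbf{B}\odot\mathbf{B}+\boldsymbol{\Gamma}$, viewed as a polynomial map on all of $\mathbb{R}^{p\times p}\times\mathbb{R}^{p\times p}$ (dropping the sign restriction on $\boldsymbol{\Gamma}$). Let
\[
\mathcal{V}:=\{(\mathbf{B},\boldsymbol{\Gamma}):\det(\mathbf{I}-\mathbf{S}(\mathbf{B},\boldsymbol{\Gamma}))>0\}.
\]
This set is open by continuity of the determinant. On $\mathcal{V}$, the function $-\log\det(\mathbf{I}-\mathbf{S})$ is $C^\infty$, because it is a composition of a polynomial map with $\log\det$ on matrices of positive determinant. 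Its gradients are given by the formulas displayed after Proposition~\ref{prop:union_acyclicity}, which involve $(\mathbf{I}-\mathbf{S})^{-\top}$ and are continuous.

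Next I check that $\mathbb{D}\subset\mathcal{V}$. If $\rho(\mathbf{S})<1$, the eigenvalues of $\mathbf{I}-\mathbf{S}$ are $1-\lambda_i$. The real eigenvalues $\lambda_i$ satisfy $\lambda_i<1$, so the corresponding factors are positive. The nonreal eigenvalues come in conjugate pairs, whose factors multiply to $|1-\lambda_i|^2>0$. Hence $\det(\mathbf{I}-\mathbf{S})>0$, and $h$ on $\mathbb{D}$ is the restriction of this smooth function.

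Since $\Omega\subset\mathbb{D}\subset\mathcal{V}$ is compact and $\mathcal{V}$ is open, there is a $\delta>0$ such that the closed neighbourhood
\[
\Omega_\delta:=\{z:\operatorname{dist}(z,\Omega)\leq\delta\}
\]
is compact and contained in $\mathcal{V}$. If $\mathcal{V}^c$ is empty, any $\delta>0$ works; otherwise take $\delta$ to be half the positive distance from $\Omega$ to the closed set $\mathcal{V}^c$. By continuity on $\Omega_\delta$ we obtain finite constants
\[
H:=\sup_{\Omega_\delta}\|\nabla^2h\|_{\mathrm{op}}, \qquad G:=\sup_{\Omega}\|\nabla h\|_F .
\]

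Now take $z_1,z_2\in\Omega$.

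\emph{Close pairs.} If $\|z_1-z_2\|_F\leq\delta$, every point of the segment $[z_1,z_2]$ lies within $\delta$ of $z_2$, hence in $\Omega_\delta$. The integral form of the mean value theorem, exactly as in Lemma~\ref{lemma:llipschitz}, then gives $\|\nabla h(z_1)-\nabla h(z_2)\|_F\leq H\|z_1-z_2\|_F$.

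\emph{Far pairs.} If $\|z_1-z_2\|_F>\delta$, then
\[
\|\nabla h(z_1)-\nabla h(z_2)\|_F\leq 2G<(2G/\delta)\|z_1-z_2\|_F .
\]
Taking $L_h:=\max\{H,2G/\delta\}$ covers both cases and proves the lemma.

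The main obstacle is precisely the nonconvexity of $\mathbb{D}$. It is also the reason for extending $h$ to the open set $\mathcal{V}$: points of $\Omega_\delta$ may have negative entries in $\boldsymbol{\Gamma}$ or leave $\mathbb{D}$, and $h$ must still be smooth there. The positivity of the determinant on $\mathbb{D}$, which makes this extension legitimate, is the one small fact that needs care.
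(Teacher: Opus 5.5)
Your proposal is correct and takes essentially the same route as the paper: extend $h$ to the open set where $\det(\mathbf{I}-\mathbf{B}\odot\mathbf{B}-\boldsymbol{\Gamma})>0$, and take a compact closed $\delta$-neighbourhood of $\Omega$ inside it. You then split into close pairs (integrated Hessian bound) and far pairs (bounded gradient), with $L_h=\max\{H,2G/\delta\}$. The only difference is cosmetic: you prove positivity of the determinant on $\mathbb{D}$ by a direct eigenvalue-pairing argument, whereas the paper cites the nonsingular $M$-matrix property.
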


\begin{proof}

Define the quantities
\begin{equation*}
\mathbf{S}(\mathbf{B},\boldsymbol{\Gamma}):=\mathbf{B}\odot\mathbf{B}+\boldsymbol{\Gamma},\qquad\mathbf{M}(\mathbf{B},\boldsymbol{\Gamma}):=\mathbf{I}-\mathbf{S}(\mathbf{B},\boldsymbol{\Gamma}),
\end{equation*}
so
\begin{equation*}
h(\mathbf{B},\boldsymbol{\Gamma})=-\log\det\{\mathbf{M}(\mathbf{B},\boldsymbol{\Gamma})\}.
\end{equation*}
Now, for any $(\mathbf{B},\boldsymbol{\Gamma})\in\mathbb{D}$, the matrix $\mathbf{S}(\mathbf{B},\boldsymbol{\Gamma})$ is entrywise nonnegative and satisfies $\rho\{\mathbf{S}(\mathbf{B},\boldsymbol{\Gamma})\}<1$, meaning $\det\{\mathbf{M}(\mathbf{B},\boldsymbol{\Gamma})\}>0$ \citep[since it is a nonsingular $M$-matrix; see][]{Meyer2000}. In particular, these properties mean $h(\mathbf{B},\boldsymbol{\Gamma})$ is $C^\infty$ on the open set
\begin{equation*}
\mathcal{U}:=\left\{(\mathbf{B},\boldsymbol{\Gamma})\in\mathbb{R}^{p\times p}\times\mathbb{R}^{p\times p}:\det\{\mathbf{M}(\mathbf{B},\boldsymbol{\Gamma})\}>0\right\}.
\end{equation*}
Since $\det\{\mathbf{M}(\mathbf{B},\boldsymbol{\Gamma})\}>0$ for all $(\mathbf{B},\boldsymbol{\Gamma})\in\mathbb{D}$, we have $\Omega\subset\mathcal{U}$. Consequently, $h(\mathbf{B},\boldsymbol{\Gamma})$ is twice continuously differentiable on $\mathcal{U}$, so the Hessian $\nabla^2 h(\mathbf{B},\boldsymbol{\Gamma})$ is continuous on $\mathcal{U}$. Now, since $\Omega$ is compact and $\mathcal{U}$ is open, there exists $\delta>0$ such that the closed $\delta$-neighborhood of $\Omega$ is contained in $\mathcal{U}$. Hence, by continuity of the Hessian and compactness of the closed $\delta$-neighborhood of $\Omega$, there exists a finite constant
\begin{equation*}
M_h:=\sup_{\inf_{(\tilde{\mathbf{B}},\tilde{\boldsymbol{\Gamma}})\in\Omega}\|(\mathbf{B},\boldsymbol{\Gamma})-(\tilde{\mathbf{B}},\tilde{\boldsymbol{\Gamma}})\|_F\leq\delta}\|\nabla^2 h(\mathbf{B},\boldsymbol{\Gamma})\|_\mathrm{op}<\infty.
\end{equation*}
Also, by continuity of the gradient and compactness of $\Omega$, it holds
\begin{equation*}
G_h:=\sup_{(\mathbf{B},\boldsymbol{\Gamma})\in\Omega}\|\nabla h(\mathbf{B},\boldsymbol{\Gamma})\|_F<\infty.
\end{equation*}
Define the quantity
\begin{equation*}
L_h:=\max\left\{M_h,\frac{2G_h}{\delta}\right\}.
\end{equation*}
Next, take any $(\mathbf{B}_1,\boldsymbol{\Gamma}_1),(\mathbf{B}_2,\boldsymbol{\Gamma}_2)\in\Omega$. First, suppose case (i) is true:
\begin{equation*}
\|(\mathbf{B}_1,\boldsymbol{\Gamma}_1)-(\mathbf{B}_2,\boldsymbol{\Gamma}_2)\|_F\leq\delta.
\end{equation*}
For $t\in[0,1]$ define the line segment
\begin{equation*}
(\mathbf{B}(t),\boldsymbol{\Gamma}(t)):=(\mathbf{B}_2,\boldsymbol{\Gamma}_2)+t\{(\mathbf{B}_1,\boldsymbol{\Gamma}_1)-(\mathbf{B}_2,\boldsymbol{\Gamma}_2)\}.
\end{equation*}
It then follows
\begin{equation*}
\inf_{(\tilde{\mathbf{B}},\tilde{\boldsymbol{\Gamma}})\in\Omega}\|(\mathbf{B}(t),\boldsymbol{\Gamma}(t))-(\tilde{\mathbf{B}},\tilde{\boldsymbol{\Gamma}})\|_F\leq\|(\mathbf{B}(t),\boldsymbol{\Gamma}(t))-(\mathbf{B}_2,\boldsymbol{\Gamma}_2)\|_F=t\|(\mathbf{B}_1,\boldsymbol{\Gamma}_1)-(\mathbf{B}_2,\boldsymbol{\Gamma}_2)\|_F\leq\delta
\end{equation*}
for all $t\in[0,1]$, and hence $(\mathbf{B}(t),\boldsymbol{\Gamma}(t))\in\mathcal{U}$ for all $t\in[0,1]$. Therefore, we have
\begin{equation*}
\nabla h(\mathbf{B}_1,\boldsymbol{\Gamma}_1)-\nabla h(\mathbf{B}_2,\boldsymbol{\Gamma}_2)=\int_0^1\nabla^2 h(\mathbf{B}(t),\boldsymbol{\Gamma}(t))\{(\mathbf{B}_1,\boldsymbol{\Gamma}_1)-(\mathbf{B}_2,\boldsymbol{\Gamma}_2)\}\,dt.
\end{equation*}
Taking norms, applying the triangle inequality, and then using
\begin{equation*}
\|\nabla^2 h(\mathbf{B}(t),\boldsymbol{\Gamma}(t))\|_\mathrm{op}\leq M_h\leq L_h
\end{equation*}
for all $t\in[0,1]$ gives
\begin{equation*}
\begin{split}
\|\nabla h(\mathbf{B}_1,\boldsymbol{\Gamma}_1)-\nabla h(\mathbf{B}_2,\boldsymbol{\Gamma}_2)\|_F&\leq\int_0^1\|\nabla^2 h(\mathbf{B}(t),\boldsymbol{\Gamma}(t))\|_\mathrm{op}\,dt\,\|(\mathbf{B}_1,\boldsymbol{\Gamma}_1)-(\mathbf{B}_2,\boldsymbol{\Gamma}_2)\|_F \\
&\leq L_h\|(\mathbf{B}_1,\boldsymbol{\Gamma}_1)-(\mathbf{B}_2,\boldsymbol{\Gamma}_2)\|_F.
\end{split}
\end{equation*}
Next, suppose case (ii) is true:
\begin{equation*}
\|(\mathbf{B}_1,\boldsymbol{\Gamma}_1)-(\mathbf{B}_2,\boldsymbol{\Gamma}_2)\|_F>\delta.
\end{equation*}
Then, using the triangle inequality, the definition of $G_h$, and the definition of $L_h$, it holds
\begin{equation*}
\begin{split}
\|\nabla h(\mathbf{B}_1,\boldsymbol{\Gamma}_1)-\nabla h(\mathbf{B}_2,\boldsymbol{\Gamma}_2)\|_F&\leq\|\nabla h(\mathbf{B}_1,\boldsymbol{\Gamma}_1)\|_F+\|\nabla h(\mathbf{B}_2,\boldsymbol{\Gamma}_2)\|_F \\
&\leq 2G_h \\
&\leq\frac{2G_h}{\delta}\|(\mathbf{B}_1,\boldsymbol{\Gamma}_1)-(\mathbf{B}_2,\boldsymbol{\Gamma}_2)\|_F \\
&\leq L_h\|(\mathbf{B}_1,\boldsymbol{\Gamma}_1)-(\mathbf{B}_2,\boldsymbol{\Gamma}_2)\|_F.
\end{split}
\end{equation*}
Combining cases (i) and (ii), we conclude that $\nabla h(\mathbf{B},\boldsymbol{\Gamma})$ is $L_h$-Lipschitz on $\Omega$.

\end{proof}

We are now ready to prove Proposition~\ref{prop:descent}.

\begin{proof}

Since $\mathbb{D}$ is open relative to $\mathbb{R}^{p\times p}\times\mathbb{R}_+^{p\times p}$ and $\Omega$ is compact, there exists $\delta>0$ such that the compact set
\begin{equation*}
\Omega_\delta:=\left\{(\mathbf{B},\boldsymbol{\Gamma})\in\mathbb{R}^{p\times p}\times\mathbb{R}_+^{p\times p}:\inf_{(\tilde{\mathbf{B}},\tilde{\boldsymbol{\Gamma}})\in\Omega}\|(\mathbf{B},\boldsymbol{\Gamma})-(\tilde{\mathbf{B}},\tilde{\boldsymbol{\Gamma}})\|_F\leq\delta\right\}\subset\mathbb{D}.
\end{equation*}
Applying Lemmas~\ref{lemma:llipschitz} and \ref{lemma:hlipschitz} to the compact set $\Omega_\delta$ and then using the triangle inequality, there exists a finite constant $L_\delta<\infty$ such that
\begin{equation*}
\|\nabla g_\mu(\mathbf{B}_1,\boldsymbol{\Gamma}_1)-\nabla g_\mu(\mathbf{B}_2,\boldsymbol{\Gamma}_2)\|_F\leq L_\delta\|(\mathbf{B}_1,\boldsymbol{\Gamma}_1)-(\mathbf{B}_2,\boldsymbol{\Gamma}_2)\|_F
\end{equation*}
for all $(\mathbf{B}_1,\boldsymbol{\Gamma}_1),(\mathbf{B}_2,\boldsymbol{\Gamma}_2)\in\Omega_\delta$. Also, by continuity of $g_\mu$ and $\nabla g_\mu$ and compactness of $\Omega\times\Omega$, it holds
\begin{equation*}
C:=\sup_{(\mathbf{B}_1,\boldsymbol{\Gamma}_1),(\mathbf{B}_2,\boldsymbol{\Gamma}_2)\in\Omega}\left|g_\mu(\mathbf{B}_1,\boldsymbol{\Gamma}_1)-g_\mu(\mathbf{B}_2,\boldsymbol{\Gamma}_2)-\left\langle\nabla g_\mu(\mathbf{B}_2,\boldsymbol{\Gamma}_2),(\mathbf{B}_1,\boldsymbol{\Gamma}_1)-(\mathbf{B}_2,\boldsymbol{\Gamma}_2)\right\rangle\right|<\infty.
\end{equation*}
Define the quantity
\begin{equation*}
L:=\max\left\{L_\delta,\frac{2C}{\delta^2}\right\}.
\end{equation*}
Next, take any $(\mathbf{B}_1,\boldsymbol{\Gamma}_1),(\mathbf{B}_2,\boldsymbol{\Gamma}_2)\in\Omega$. First, suppose case (i) is true:
\begin{equation*}
\|(\mathbf{B}_1,\boldsymbol{\Gamma}_1)-(\mathbf{B}_2,\boldsymbol{\Gamma}_2)\|_F\leq\delta.
\end{equation*}
For $t\in[0,1]$ define the line segment
\begin{equation*}
(\mathbf{B}(t),\boldsymbol{\Gamma}(t)):=(\mathbf{B}_2,\boldsymbol{\Gamma}_2)+t\{(\mathbf{B}_1,\boldsymbol{\Gamma}_1)-(\mathbf{B}_2,\boldsymbol{\Gamma}_2)\}.
\end{equation*}
It then follows
\begin{equation*}
\inf_{(\tilde{\mathbf{B}},\tilde{\boldsymbol{\Gamma}})\in\Omega}\|(\mathbf{B}(t),\boldsymbol{\Gamma}(t))-(\tilde{\mathbf{B}},\tilde{\boldsymbol{\Gamma}})\|_F\leq\|(\mathbf{B}(t),\boldsymbol{\Gamma}(t))-(\mathbf{B}_2,\boldsymbol{\Gamma}_2)\|_F=t\|(\mathbf{B}_1,\boldsymbol{\Gamma}_1)-(\mathbf{B}_2,\boldsymbol{\Gamma}_2)\|_F\leq\delta
\end{equation*}
for all $t\in[0,1]$, and hence $(\mathbf{B}(t),\boldsymbol{\Gamma}(t))\in\Omega_\delta$ for all $t\in[0,1]$. Therefore, it holds
\begin{equation*}
\begin{split}
&g_\mu(\mathbf{B}_1,\boldsymbol{\Gamma}_1)-g_\mu(\mathbf{B}_2,\boldsymbol{\Gamma}_2)-\left\langle\nabla g_\mu(\mathbf{B}_2,\boldsymbol{\Gamma}_2),(\mathbf{B}_1,\boldsymbol{\Gamma}_1)-(\mathbf{B}_2,\boldsymbol{\Gamma}_2)\right\rangle \\
&\hspace{1.5in}=\int_0^1\left\langle\nabla g_\mu(\mathbf{B}(t),\boldsymbol{\Gamma}(t))-\nabla g_\mu(\mathbf{B}_2,\boldsymbol{\Gamma}_2),(\mathbf{B}_1,\boldsymbol{\Gamma}_1)-(\mathbf{B}_2,\boldsymbol{\Gamma}_2)\right\rangle\,dt \\
&\hspace{1.5in}\leq\int_0^1\|\nabla g_\mu(\mathbf{B}(t),\boldsymbol{\Gamma}(t))-\nabla g_\mu(\mathbf{B}_2,\boldsymbol{\Gamma}_2)\|_F\,dt\,\|(\mathbf{B}_1,\boldsymbol{\Gamma}_1)-(\mathbf{B}_2,\boldsymbol{\Gamma}_2)\|_F \\
&\hspace{1.5in}\leq\int_0^1 L_\delta t\,dt\,\|(\mathbf{B}_1,\boldsymbol{\Gamma}_1)-(\mathbf{B}_2,\boldsymbol{\Gamma}_2)\|_F^2 \\
&\hspace{1.5in}=\frac{L_\delta}{2}\|(\mathbf{B}_1,\boldsymbol{\Gamma}_1)-(\mathbf{B}_2,\boldsymbol{\Gamma}_2)\|_F^2 \\
&\hspace{1.5in}\leq\frac{L}{2}\|(\mathbf{B}_1,\boldsymbol{\Gamma}_1)-(\mathbf{B}_2,\boldsymbol{\Gamma}_2)\|_F^2.
\end{split}
\end{equation*}
Rearranging the above inequality gives
\begin{equation}
\label{eq:case1descent}
g_\mu(\mathbf{B}_1,\boldsymbol{\Gamma}_1)\leq g_\mu(\mathbf{B}_2,\boldsymbol{\Gamma}_2)+\left\langle\nabla g_\mu(\mathbf{B}_2,\boldsymbol{\Gamma}_2),(\mathbf{B}_1,\boldsymbol{\Gamma}_1)-(\mathbf{B}_2,\boldsymbol{\Gamma}_2)\right\rangle+\frac{L}{2}\|(\mathbf{B}_1,\boldsymbol{\Gamma}_1)-(\mathbf{B}_2,\boldsymbol{\Gamma}_2)\|_F^2.
\end{equation}
Next, suppose case (ii) is true:
\begin{equation*}
\|(\mathbf{B}_1,\boldsymbol{\Gamma}_1)-(\mathbf{B}_2,\boldsymbol{\Gamma}_2)\|_F>\delta.
\end{equation*}
Then, by the definition of $C$ and the definition of $L$, it holds
\begin{equation*}
\begin{split}
g_\mu(\mathbf{B}_1,\boldsymbol{\Gamma}_1)-g_\mu(\mathbf{B}_2,\boldsymbol{\Gamma}_2)-\left\langle\nabla g_\mu(\mathbf{B}_2,\boldsymbol{\Gamma}_2),(\mathbf{B}_1,\boldsymbol{\Gamma}_1)-(\mathbf{B}_2,\boldsymbol{\Gamma}_2)\right\rangle&\leq C \\
&\leq\frac{C}{\delta^2}\|(\mathbf{B}_1,\boldsymbol{\Gamma}_1)-(\mathbf{B}_2,\boldsymbol{\Gamma}_2)\|_F^2 \\
&\leq\frac{L}{2}\|(\mathbf{B}_1,\boldsymbol{\Gamma}_1)-(\mathbf{B}_2,\boldsymbol{\Gamma}_2)\|_F^2.
\end{split}
\end{equation*}
Again, rearranging the above inequality gives
\begin{equation}
\label{eq:case2descent}
g_\mu(\mathbf{B}_1,\boldsymbol{\Gamma}_1)\leq g_\mu(\mathbf{B}_2,\boldsymbol{\Gamma}_2)+\left\langle\nabla g_\mu(\mathbf{B}_2,\boldsymbol{\Gamma}_2),(\mathbf{B}_1,\boldsymbol{\Gamma}_1)-(\mathbf{B}_2,\boldsymbol{\Gamma}_2)\right\rangle+\frac{L}{2}\|(\mathbf{B}_1,\boldsymbol{\Gamma}_1)-(\mathbf{B}_2,\boldsymbol{\Gamma}_2)\|_F^2.
\end{equation}
Combining \eqref{eq:case1descent} and \eqref{eq:case2descent} and taking $(\mathbf{B}_1,\boldsymbol{\Gamma}_1)=(\mathbf{B}^+,\boldsymbol{\Gamma}^+)$ and $(\mathbf{B}_2,\boldsymbol{\Gamma}_2)=(\mathbf{B},\boldsymbol{\Gamma})$ completes the proof.
\end{proof}

\section{Proof of Theorem~\ref{thrm:convergence}}
\label{app:convergence}

\begin{proof}

We break the proof into two parts, addressing the two claims of the theorem in turn.

\paragraph{Part 1: Convergence of objective values}

We begin by proving the first claim of the theorem that the objective values are decreasing and convergent. Our proof works by upper bounding $g_\mu$ and $r_\mu$ to attain a sufficient decrease property for $f_\mu$. First, recall that Proposition~\ref{prop:descent} provides an upper bound for $g_\mu$ as
\begin{equation}
\label{eq:gdecrease}
\begin{split}
&g_\mu(\mathbf{B}^{(k+1)},\boldsymbol{\Gamma}^{(k+1)})\leq g_\mu(\mathbf{B}^{(k)},\boldsymbol{\Gamma}^{(k)})+\left\langle\nabla g_\mu(\mathbf{B}^{(k)},\boldsymbol{\Gamma}^{(k)}),(\Delta\mathbf{B}^{(k)},\Delta\boldsymbol{\Gamma}^{(k)})\right\rangle \\
&\hspace{4in}+\frac{L}{2}\|(\Delta\mathbf{B}^{(k)},\Delta\boldsymbol{\Gamma}^{(k)})\|_F^2,
\end{split}
\end{equation}
where
\begin{equation*}
\Delta\mathbf{B}^{(k)}:=\mathbf{B}^{(k+1)}-\mathbf{B}^{(k)},\qquad\Delta\boldsymbol{\Gamma}^{(k)}:=\boldsymbol{\Gamma}^{(k+1)}-\boldsymbol{\Gamma}^{(k)}.
\end{equation*}
To bound $r_\mu$, recall that the proximal gradient updates
\begin{equation*}
\mathbf{B}^{(k+1)}\gets\operatorname{soft}_{\alpha\mu\lambda_1}\left(\mathbf{B}^{(k)}-\alpha\nabla_\mathbf{B}g_\mu(\mathbf{B}^{(k)},\boldsymbol{\Gamma}^{(k)})\right)
\end{equation*}
and
\begin{equation*}
\boldsymbol{\Gamma}^{(k+1)}\gets\operatorname{soft}_{\alpha\mu\lambda_2}^+\left(\boldsymbol{\Gamma}^{(k)}-\alpha\nabla_{\boldsymbol{\Gamma}}g_\mu(\mathbf{B}^{(k)},\boldsymbol{\Gamma}^{(k)})\right)
\end{equation*}
are the minimizers of the proximal quadratic model
\begin{equation*}
Q(\mathbf{B},\boldsymbol{\Gamma}):=\left\langle\nabla g_\mu(\mathbf{B}^{(k)},\boldsymbol{\Gamma}^{(k)}),(\mathbf{B},\boldsymbol{\Gamma})-(\mathbf{B}^{(k)},\boldsymbol{\Gamma}^{(k)})\right\rangle+\frac{1}{2\alpha}\|(\mathbf{B},\boldsymbol{\Gamma})-(\mathbf{B}^{(k)},\boldsymbol{\Gamma}^{(k)})\|_F^2+r_\mu(\mathbf{B},\boldsymbol{\Gamma}).
\end{equation*}
Hence, by virtue of the fact that $\mathbf{B}^{(k+1)}$ and $\boldsymbol{\Gamma}^{(k+1)}$ minimize $Q(\mathbf{B},\boldsymbol{\Gamma})$, it holds
\begin{equation}
\label{eq:Qinequality}
Q(\mathbf{B}^{(k+1)},\boldsymbol{\Gamma}^{(k+1)})\leq Q(\mathbf{B}^{(k)},\boldsymbol{\Gamma}^{(k)}).
\end{equation}
Directly evaluating the expression for $Q(\mathbf{B}^{(k)},\boldsymbol{\Gamma}^{(k)})$ on the right-hand side of \eqref{eq:Qinequality} yields
\begin{equation*}
Q(\mathbf{B}^{(k)},\boldsymbol{\Gamma}^{(k)})=r_\mu(\mathbf{B}^{(k)},\boldsymbol{\Gamma}^{(k)}).
\end{equation*}
Likewise, evaluating the expression for $Q(\mathbf{B}^{(k+1)},\boldsymbol{\Gamma}^{(k+1)})$ on the left-hand side of \eqref{eq:Qinequality} and subsequently rearranging terms yields
\begin{equation}
\label{eq:rdecrease}
\begin{split}
&r_\mu(\mathbf{B}^{(k+1)},\boldsymbol{\Gamma}^{(k+1)})\leq r_\mu(\mathbf{B}^{(k)},\boldsymbol{\Gamma}^{(k)})-\left\langle\nabla g_\mu(\mathbf{B}^{(k)},\boldsymbol{\Gamma}^{(k)}),(\Delta\mathbf{B}^{(k)},\Delta\boldsymbol{\Gamma}^{(k)})\right\rangle \\
&\hspace{4in}-\frac{1}{2\alpha}\|(\Delta\mathbf{B}^{(k)},\Delta\boldsymbol{\Gamma}^{(k)})\|_F^2.
\end{split}
\end{equation}
Finally, to bound the objective function $f_\mu$ we add \eqref{eq:gdecrease} and \eqref{eq:rdecrease} and cancel the inner product terms to get
\begin{equation}
\label{eq:diff}
f_\mu(\mathbf{B}^{(k+1)},\boldsymbol{\Gamma}^{(k+1)})\leq f_\mu(\mathbf{B}^{(k)},\boldsymbol{\Gamma}^{(k)})-\left(\frac{1}{2\alpha}-\frac{L}{2}\right)\|(\Delta\mathbf{B}^{(k)},\Delta\boldsymbol{\Gamma}^{(k)})\|_F^2.
\end{equation}
By assumption, the step size $\alpha<1/L$, so $1/(2\alpha)-L/2>0$. It immediately follows
\begin{equation*}
f_\mu(\mathbf{B}^{(k+1)},\boldsymbol{\Gamma}^{(k+1)})\leq f_\mu(\mathbf{B}^{(k)},\boldsymbol{\Gamma}^{(k)}).
\end{equation*}
Finally, since a continuous function on a compact set attains its minimum, $f_\mu$ is bounded below on $\Omega$. Moreover, because a monotone decreasing sequence bounded below converges, the sequence of objective values must converge, proving the first claim of the theorem.

\paragraph{Part 2: Vanishing difference in iterates}

We now prove the second claim of the theorem that the difference in iterates vanishes. Define the constant $c:=1/(2\alpha)-L/2$. As before, since $\alpha<1/L$, we have $c>0$. Hence, for every $k$, it holds from \eqref{eq:diff}
\begin{equation*}
c\|(\Delta\mathbf{B}^{(k)},\Delta\boldsymbol{\Gamma}^{(k)})\|_F^2\leq f_\mu(\mathbf{B}^{(k)},\boldsymbol{\Gamma}^{(k)})-f_\mu(\mathbf{B}^{(k+1)},\boldsymbol{\Gamma}^{(k+1)}).
\end{equation*}
Summing both sides from $k=0$ to $K-1$ yields the bound
\begin{equation*}
c\sum_{k=0}^{K-1}\|(\Delta\mathbf{B}^{(k)},\Delta\boldsymbol{\Gamma}^{(k)})\|_F^2\leq f_\mu(\mathbf{B}^{(0)},\boldsymbol{\Gamma}^{(0)})-f_\mu(\mathbf{B}^{(K)},\boldsymbol{\Gamma}^{(K)}).
\end{equation*}
Now, define the quantity
\begin{equation*}
f_\mu^\mathrm{inf}:=\inf_{(\mathbf{B},\boldsymbol{\Gamma})\in\Omega}f_\mu(\mathbf{B},\boldsymbol{\Gamma})>-\infty.
\end{equation*}
From the fact that $f_\mu(\mathbf{B}^{(K)},\boldsymbol{\Gamma}^{(K)})\geq f_\mu^\mathrm{inf}$, we get
\begin{equation*}
c\sum_{k=0}^{K-1}\|(\Delta\mathbf{B}^{(k)},\Delta\boldsymbol{\Gamma}^{(k)})\|_F^2\leq f_\mu(\mathbf{B}^{(0)},\boldsymbol{\Gamma}^{(0)})-f_\mu^\mathrm{inf}.
\end{equation*}
Thus, the sums on the left-hand side are uniformly bounded in $K$. Taking $K\to\infty$, we obtain
\begin{equation*}
\sum_{k=0}^\infty\|(\Delta\mathbf{B}^{(k)},\Delta\boldsymbol{\Gamma}^{(k)})\|_F^2<\infty.
\end{equation*}
Since this series is convergent and has nonnegative terms, its terms must converge to zero. Hence, it must hold
\begin{equation*}
\|(\Delta\mathbf{B}^{(k)},\Delta\boldsymbol{\Gamma}^{(k)})\|_F^2\to0,
\end{equation*}
from which the second claim of the theorem immediately follows.

\end{proof}

\section{Proof of Theorem~\ref{thrm:identifiability}}
\label{app:identifiability}

\begin{proof}
Fix one observation row from one cluster and suppress the cluster and row indices, writing the resulting random vector as $(x_1,\dots,x_p)^\top$. Since the joint distribution of the clustered data determines the marginal distribution of this vector, it suffices to recover $(\mathbf{B}_0,\boldsymbol{\Gamma}_0)$ from that marginal distribution.

Because $\mathcal{G}(\mathbf{B}_0)\cup\mathcal{G}(\boldsymbol{\Gamma}_0)$ is a DAG, there exists a topological ordering of the nodes. Fix one such ordering for notational convenience and index the variables accordingly. Then we may write
\begin{equation*}
x_k=\sum_{j<k}(\beta_{jk,0}+u_{jk})x_j+\varepsilon_k,\qquad k=1,\dots,p,
\end{equation*}
where the random-effects $u_{jk}\sim\mathrm{N}(0,\gamma_{jk,0})$ are independent, the noise terms $\varepsilon_k\sim\mathrm{N}(0,1)$ are independent, and all random effects are independent of all noise terms. Conditional on the random effects, $(x_1,\dots,x_p)$ is Gaussian with positive definite covariance, so its marginal distribution has a strictly positive density on $\mathbb{R}^p$.

A node is a source (i.e., it has no parents) if and only if its variance equals one. Indeed, if node $k$ is a source, then $x_k=\varepsilon_k$, so $\operatorname{Var}(x_k)=1$. Conversely, if node $k$ is not a source, then at least one coefficient $\beta_{jk,0}+u_{jk}$ with $j<k$ is nonzero almost surely. Since the random effects entering node $k$ do not appear in the structural equations for $x_1,\dots,x_{k-1}$, they are independent of $(x_1,\dots,x_{k-1})$. Therefore, the variance of $x_k$ satisfies
\begin{equation*}
\operatorname{Var}(x_k)=\operatorname{Var}\left(\sum_{j<k}(\beta_{jk,0}+u_{jk})x_j\right)+1>1,
\end{equation*}
because $(x_1,\dots,x_{k-1})$ has positive definite covariance and the coefficients are not zero almost surely. Thus, the source set is determined by the observed distribution.

Choose any source and relabel it as node one. Then $x_1=\varepsilon_1$, so $x_1$ is independent of all remaining noise terms and all random effects. For $k=2,\dots,p$, we have
\begin{equation*}
x_k=(\beta_{1k,0}+u_{1k})x_1+\sum_{1<j<k}(\beta_{jk,0}+u_{jk})x_j+\varepsilon_k.
\end{equation*}
Thus, conditional on $x_1=0$, the vector $(x_2,\dots,x_p)$ again follows a mixed-effects DAG model of the same form, with node one removed. Applying the previous variance characterization recursively to these conditional distributions identifies one source after another, and therefore allows us to construct a topological ordering of $\mathcal{G}(\mathbf{B}_0)\cup\mathcal{G}(\boldsymbol{\Gamma}_0)$.

After constructing a topological ordering in the manner above, the parameters are recoverable node by node. For each $k=1,\dots,p$ and any $x_1,\dots,x_{k-1}\in\mathbb{R}$, the conditional distribution of $x_k$ given $(x_1,\dots,x_{k-1})$ satisfies
\begin{equation*}
x_k\mid(x_1,\dots,x_{k-1})\sim\mathrm{N}\left(\sum_{j<k}\beta_{jk,0}x_j,\sum_{j<k}\gamma_{jk,0}x_j^2+1\right),
\end{equation*}
because $u_{1k},\dots,u_{k-1,k}$ and $\varepsilon_k$ are independent mean-zero Gaussians and the $u_{jk}$ are independent of $(x_1,\dots,x_{k-1})$. Since $(x_1,\dots,x_{k-1})$ has strictly positive density on $\mathbb{R}^{k-1}$, the conditional mean uniquely determines the fixed effects $\beta_{jk,0}$ for $j<k$, and the conditional variance uniquely determines the random-effect variances $\gamma_{jk,0}$ for $j<k$. Applying this argument for $k=1,\dots,p$ shows that every entry of $\mathbf{B}_0$ and $\boldsymbol{\Gamma}_0$ is uniquely determined by the observed distribution. Therefore, $(\mathbf{B}_0,\boldsymbol{\Gamma}_0)$ is identifiable, and so are $\mathcal{G}(\mathbf{B}_0)$, $\mathcal{G}(\boldsymbol{\Gamma}_0)$, and their union.
\end{proof}

\section{Proof of Theorem~\ref{thrm:consistency}}
\label{app:consistency}

The proof requires three technical lemmas. All lemmas are stated and proved under the assumptions of Theorem~\ref{thrm:consistency}. Lemma~\ref{lemma:unique} first shows that the population loss is uniquely minimized at the true parameters.

\begin{lemma}
\label{lemma:unique}
For all $(\mathbf{B},\boldsymbol{\Gamma})\in\mathcal{F}$, where $\mathcal{F}$ is defined in \eqref{eq:feasiblebox}, the population loss satisfies the inequality
\begin{equation*}
L(\mathbf{B},\boldsymbol{\Gamma})\geq L(\mathbf{B}_0,\boldsymbol{\Gamma}_0),
\end{equation*}
holding with equality if and only if $(\mathbf{B},\boldsymbol{\Gamma})=(\mathbf{B}_0,\boldsymbol{\Gamma}_0)$.
\end{lemma}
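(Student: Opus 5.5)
The plan is to write the population loss as an expected negative log-likelihood and reduce the inequality to Gibbs' inequality, i.e.\ nonnegativity of Kullback--Leibler divergence. Strict equality will then follow from the identifiability result in Theorem~\ref{thrm:identifiability}.

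\textbf{Step 1: factorized form of the loss.} Since clusters are iid with common size $n$, we have $L(\mathbf{B},\boldsymbol{\Gamma})=\mathrm{E}\sum_k\phi_{1k}(\boldsymbol{\beta}_k,\boldsymbol{\gamma}_k)$, the expectation over a single cluster. For $(\mathbf{B},\boldsymbol{\Gamma})\in\mathcal{F}$, the union graph is a DAG. Hence the nodewise conditional Gaussian densities $\mathrm{N}(\mathbf{X}\boldsymbol{\beta}_k,\mathbf{V}(\boldsymbol{\gamma}_k))$ of $\mathbf{x}_k$ given its parents' columns multiply to a valid joint density $q_{\mathbf{B},\boldsymbol{\Gamma}}$ of the cluster matrix $\mathbf{X}\in\mathbb{R}^{n\times p}$. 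Up to additive and multiplicative constants, $l$ restricted to one cluster equals $-\log q_{\mathbf{B},\boldsymbol{\Gamma}}(\mathbf{X})$.

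Two points need checking here. First, $\mathbf{V}(\boldsymbol{\gamma}_k)$ depends only on parent columns, because $\gamma_{jk}=0$ off the parent set. Second, the mean also uses only parent columns, because $\beta_{jk}=0$ off the parent set. The product therefore telescopes along a topological order, so it integrates to one.

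The same computation at the truth shows that $q_{\mathbf{B}_0,\boldsymbol{\Gamma}_0}$ is the true density $p_0$. The argument parallels the conditional-distribution step in the proof of Theorem~\ref{thrm:identifiability}, but is done at the cluster level: conditionally on the parent columns, $\mathbf{X}\mathbf{u}_k+\boldsymbol{\varepsilon}_k$ is $\mathrm{N}(\mathbf{0},\mathbf{V}(\boldsymbol{\gamma}_{k,0}))$, since $\mathbf{u}_k$ is independent of upstream columns.

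\textbf{Step 2: the inequality.} Combining Step 1 with the definition of $L$ gives
\begin{equation*}
L(\mathbf{B},\boldsymbol{\Gamma})-L(\mathbf{B}_0,\boldsymbol{\Gamma}_0)=c\,\mathrm{KL}(p_0\,\|\,q_{\mathbf{B},\boldsymbol{\Gamma}})\geq0
\end{equation*}
for some constant $c>0$. Finiteness of the expectations needs a brief justification: $\mathbf{V}\succeq\mathbf{I}$, the parameters lie in the bounded set $\mathcal{F}$, and $\mathbf{X}$ has moments of all orders.

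\textbf{Step 3: the equality case.} Equality forces $q_{\mathbf{B},\boldsymbol{\Gamma}}=p_0$ almost everywhere. Then $q_{\mathbf{B},\boldsymbol{\Gamma}}$ is itself the distribution of a mixed-effects DAG model with parameters $(\mathbf{B},\boldsymbol{\Gamma})$ whose union graph is acyclic. It generates the same observable distribution as $(\mathbf{B}_0,\boldsymbol{\Gamma}_0)$, so Theorem~\ref{thrm:identifiability} gives $(\mathbf{B},\boldsymbol{\Gamma})=(\mathbf{B}_0,\boldsymbol{\Gamma}_0)$. The converse direction of the equality case is trivial.

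\textbf{Main obstacle.} I expect the hard part to be Step 1. It requires verifying that $q_{\mathbf{B},\boldsymbol{\Gamma}}$ is exactly the law of the corresponding structural model with random effects integrated out. The issue is that the density is a product of conditionals in which both the mean and the covariance depend on the parent columns, and this must hold even when the topological order of $(\mathbf{B},\boldsymbol{\Gamma})$ differs from that of the truth.

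To handle this, I would condition successively along the topological order of $(\mathbf{B},\boldsymbol{\Gamma})$. At each stage I would use independence of $\mathbf{u}_k$ from the already-generated columns to identify each factor as the correct conditional law. With that in place, Step 3 can invoke Theorem~\ref{thrm:identifiability} directly.
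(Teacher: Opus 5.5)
Your proposal is correct and follows the same route as the paper. You write $L(\mathbf{B},\boldsymbol{\Gamma})-L(\mathbf{B}_0,\boldsymbol{\Gamma}_0)$ as a positive multiple of a Kullback--Leibler divergence, which is nonnegative, and settle the equality case with Theorem~\ref{thrm:identifiability}. Your Step~1 makes explicit what the paper takes for granted: on $\mathcal{F}$ the nodewise Gaussian factors multiply to the genuine cluster-level law of the mixed-effects model with parameters $(\mathbf{B},\boldsymbol{\Gamma})$, and this law is the true one at $(\mathbf{B}_0,\boldsymbol{\Gamma}_0)$. Your argument for it, conditioning along the candidate's own topological order and using that $\mathbf{u}_k$ is independent of the upstream columns, is sound.
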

\begin{proof}
Let $p_{\mathbf{B},\boldsymbol{\Gamma}}(\mathbf{X}^{(1)})$ denote the cluster-level density under $(\mathbf{B},\boldsymbol{\Gamma})$. Since $\ell_m(\mathbf{B},\boldsymbol{\Gamma})$ is exactly the average negative log-likelihood (after restoring additive and multiplicative constants), taking expectation gives
\begin{equation*}
L(\mathbf{B},\boldsymbol{\Gamma})=-\mathrm{E}\left[\log\left\{p_{\mathbf{B},\boldsymbol{\Gamma}}(\mathbf{X}^{(1)})\right\}\right],\qquad L(\mathbf{B}_0,\boldsymbol{\Gamma}_0)=-\mathrm{E}\left[\log\left\{p_{\mathbf{B}_0,\boldsymbol{\Gamma}_0}(\mathbf{X}^{(1)})\right\}\right].
\end{equation*}
Taking the difference of the two terms above yields
\begin{equation*}
L(\mathbf{B},\boldsymbol{\Gamma})-L(\mathbf{B}_0,\boldsymbol{\Gamma}_0)=
\mathrm{E}\left[\log\left\{\frac{p_{\mathbf{B}_0,\boldsymbol{\Gamma}_0}(\mathbf{X}^{(1)})}{p_{\mathbf{B},\boldsymbol{\Gamma}}(\mathbf{X}^{(1)})}\right\}\right].
\end{equation*}
The right-hand side is the Kullback--Leibler divergence from $p_{\mathbf{B}_0,\boldsymbol{\Gamma}_0}$ to $p_{\mathbf{B},\boldsymbol{\Gamma}}$ and is therefore nonnegative. Equality holds if and only if $p_{\mathbf{B},\boldsymbol{\Gamma}}(\mathbf{X}^{(1)})=p_{\mathbf{B}_0,\boldsymbol{\Gamma}_0}(\mathbf{X}^{(1)})$ almost surely, which by Theorem~\ref{thrm:identifiability}, implies $(\mathbf{B},\boldsymbol{\Gamma})=(\mathbf{B}_0,\boldsymbol{\Gamma}_0)$.
\end{proof}

Lemma~\ref{lemma:uniformconvergence} provides a uniform convergence result for the empirical loss and its Hessian over compact parameter sets.

\begin{lemma}
\label{lemma:uniformconvergence}
Let $K\subset\mathbb{R}^{p\times p}\times\mathbb{R}_+^{p\times p}$ be compact. Then it holds
\begin{equation*}
\sup_{(\mathbf{B},\boldsymbol{\Gamma})\in K}\left|\ell_m(\mathbf{B},\boldsymbol{\Gamma})-L(\mathbf{B},\boldsymbol{\Gamma})\right|\overset{p}{\to}0
\end{equation*}
and
\begin{equation*}
\sup_{(\mathbf{B},\boldsymbol{\Gamma})\in K}\left\|\nabla^2\ell_m(\mathbf{B},\boldsymbol{\Gamma})-\nabla^2L(\mathbf{B},\boldsymbol{\Gamma})\right\|_{\mathrm{op}}\overset{p}{\to}0.
\end{equation*}
\end{lemma}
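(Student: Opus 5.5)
The plan is to write $\ell_m$ as an average of iid cluster-level terms and invoke a uniform law of large numbers for continuous random functions on a compact set. Since $n_i\equiv n$ and the clusters are iid, we have
\begin{equation*}
\ell_m(\mathbf{B},\boldsymbol{\Gamma})=\frac{1}{m}\sum_{i=1}^m\psi(\mathbf{X}^{(i)};\mathbf{B},\boldsymbol{\Gamma}),\qquad \psi(\mathbf{X};\mathbf{B},\boldsymbol{\Gamma}):=\sum_{k=1}^p\phi_k(\mathbf{X};\boldsymbol{\beta}_k,\boldsymbol{\gamma}_k),
\end{equation*}
where $\phi_k$ is the per-cluster, per-node term from the proof of Lemma~\ref{lemma:llipschitz}. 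Then $L=\mathrm{E}[\psi(\mathbf{X}^{(1)};\cdot)]$. I would apply the standard ULLN (e.g., Newey--McFadden Lemma~2.4): if $\theta\mapsto\psi(\mathbf{X};\theta)$ is continuous on compact $K$ for almost every $\mathbf{X}$, and $\sup_{\theta\in K}|\psi(\mathbf{X};\theta)|\le D(\mathbf{X})$ with $\mathrm{E}D<\infty$, then the sup deviation tends to zero in probability, and the limit $L$ is continuous. I would apply the same result entrywise to the Hessian $\nabla^2\psi$. Because the number of entries is finite ($2p^2$ squared), entrywise uniform convergence gives convergence in operator norm.

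Continuity and smoothness were already established in the proof of Lemma~\ref{lemma:llipschitz}. There $\mathbf{V}(\boldsymbol{\gamma})\succeq\mathbf{I}$ on $\mathbb{R}_+^p$, so $\psi$ is $C^\infty$ on an open set containing $K$. The remaining work is the envelope. Let $c_K:=\sup_K\|(\mathbf{B},\boldsymbol{\Gamma})\|_\infty<\infty$.

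For $\psi$ itself, I would use $0\le\log\det\mathbf{V}\le\operatorname{tr}(\mathbf{V}-\mathbf{I})\le c_K\|\mathbf{X}\|_F^2$. For the quadratic term, $\|\mathbf{V}^{-1}\|_{\mathrm{op}}\le1$ gives a bound of $\|\mathbf{r}_k\|^2\le 2\|\mathbf{x}_k\|^2+2pc_K^2\|\mathbf{X}\|_F^2$. So $D(\mathbf{X})=C_K(1+\|\mathbf{X}\|_F^2)$.

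For the Hessian, I would differentiate explicitly using $\partial\mathbf{V}/\partial\gamma_j=\mathbf{x}_j\mathbf{x}_j^\top$ and $\partial\mathbf{V}^{-1}=-\mathbf{V}^{-1}(\partial\mathbf{V})\mathbf{V}^{-1}$. Each second derivative is then a finite sum of products of $\mathbf{V}^{-1}$ (operator norm at most $1$), columns $\mathbf{x}_j$, and residuals $\mathbf{r}_k$. For example,
\begin{equation*}
\partial^2_{\gamma_j\gamma_l}\phi_k=-(\mathbf{x}_j^\top\mathbf{V}^{-1}\mathbf{x}_l)^2+2(\mathbf{x}_j^\top\mathbf{V}^{-1}\mathbf{x}_l)(\mathbf{x}_j^\top\mathbf{V}^{-1}\mathbf{r}_k)(\mathbf{x}_l^\top\mathbf{V}^{-1}\mathbf{r}_k).
\end{equation*}
Using Cauchy--Schwarz, each such term is bounded by $C_K(1+\|\mathbf{X}\|_F^2)^3$, i.e., by a polynomial of degree at most six in $\|\mathbf{X}\|_F$, uniformly over $K$.

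Those envelopes are integrable because $\mathbf{X}^{(1)}$ has finite moments of all orders. Order nodes topologically. Then each row satisfies $\mathbf{x}=(\mathbf{I}-\mathbf{W}_0^\top)^{-1}\boldsymbol{\varepsilon}$, where $\mathbf{W}_0$ is nilpotent. The entries of $(\mathbf{I}-\mathbf{W}_0^\top)^{-1}$ are polynomials of degree at most $p-1$ in the Gaussian random effects, so $\mathbf{x}$ is a polynomial in jointly Gaussian variables and has moments of every order. With $n$ fixed, $\mathrm{E}\|\mathbf{X}^{(1)}\|_F^{q}<\infty$ for all $q$.

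I expect the main obstacle to be this moment verification, together with careful bookkeeping of the Hessian terms. The ULLN itself is routine once the envelope is in place. A further point is that $K$ may touch the boundary $\gamma_{jk}=0$. This causes no difficulty, because smoothness holds on an open neighbourhood, so the one-sided derivatives coincide with the ordinary ones.
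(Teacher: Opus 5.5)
Your proposal takes essentially the same route as the paper: you write the loss as an iid average of cluster terms, bound it by polynomial envelopes in $\|\mathbf{X}^{(1)}\|_F$, use finite moments of all orders, apply the Newey--McFadden uniform law of large numbers to the loss and entrywise to the Hessian, and pass to the operator norm because there are finitely many entries. You also supply the explicit envelopes and the moment argument (via nilpotency of $\mathbf{W}_0$) that the paper only asserts. The one step to state explicitly is that $\mathrm{E}[\nabla^2\psi(\mathbf{X}^{(1)};\cdot)]=\nabla^2 L$, i.e.\ differentiation under the expectation, taken one-sided in $\boldsymbol{\Gamma}$ at the boundary. The paper justifies this by the same dominating bounds, and your envelopes cover it too, since they depend on $K$ only through $c_K$.
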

\begin{proof}
For each cluster $i$, write $a(\mathbf{X}^{(i)},\mathbf{B},\boldsymbol{\Gamma})$ for the contribution of cluster $i$ to the negative log-likelihood, so that
\begin{equation*}
\ell_m(\mathbf{B},\boldsymbol{\Gamma})=\frac{1}{m}\sum_{i=1}^ma(\mathbf{X}^{(i)},\mathbf{B},\boldsymbol{\Gamma}).
\end{equation*}
Because $\boldsymbol{\Gamma}\geq \mathbf{0}$, each covariance matrix
\begin{equation*}
\mathbf{V}^{(i)}(\boldsymbol{\gamma}_k)=\mathbf{I}+\mathbf{X}^{(i)}\operatorname{diag}(\boldsymbol{\gamma}_k)\mathbf{X}^{(i)\top}\succeq\mathbf{I},
\end{equation*}
so the log-determinant and inverse terms appearing in $a(\mathbf{X}^{(i)},\mathbf{B},\boldsymbol{\Gamma})$ are well-defined for every $(\mathbf{B},\boldsymbol{\Gamma})\in K$. Consequently, for almost every $\mathbf{X}^{(i)}$, the maps
\begin{equation*}
(\mathbf{B},\boldsymbol{\Gamma})\mapsto a(\mathbf{X}^{(i)},\mathbf{B},\boldsymbol{\Gamma}),\qquad(\mathbf{B},\boldsymbol{\Gamma})\mapsto\nabla^2 a(\mathbf{X}^{(i)},\mathbf{B},\boldsymbol{\Gamma}),
\end{equation*}
are continuous on $K$. Next, since $K$ is compact and $n_i\equiv n$, $a(\mathbf{X}^{(i)},\mathbf{B},\boldsymbol{\Gamma})$ and every scalar component of $\nabla^2 a(\mathbf{X}^{(i)},\mathbf{B},\boldsymbol{\Gamma})$ is bounded uniformly over $(\mathbf{B},\boldsymbol{\Gamma})\in K$ by a polynomial in \(\|\mathbf{X}^{(i)}\|_F\). Under the Gaussian structural equation model, $\mathbf{X}^{(i)}$ has finite moments of all orders, so these bounds have finite expectation. The same bounds justify differentiation under the expectation, and therefore
\begin{equation*}
\nabla^2L(\mathbf{B},\boldsymbol{\Gamma})=\mathrm{E}\left[\nabla^2 a(\mathbf{X}^{(1)},\mathbf{B},\boldsymbol{\Gamma})\right].
\end{equation*}
Thus, the uniform law of large numbers \citep[e.g.,][Lemma~2.4]{Newey1994}, applied on the compact set $K$ using the continuity and polynomial bounds above, gives
\begin{equation*}
\sup_{(\mathbf{B},\boldsymbol{\Gamma})\in K}\left|\ell_m(\mathbf{B},\boldsymbol{\Gamma})-L(\mathbf{B},\boldsymbol{\Gamma})\right|\overset{p}{\to}0.
\end{equation*}
The same result also gives entrywise uniform convergence of $\nabla^2\ell_m(\mathbf{B},\boldsymbol{\Gamma})$ to $\nabla^2L(\mathbf{B},\boldsymbol{\Gamma})$ on $K$. Since $p$ is fixed, the Hessian has finitely many entries, and it follows
\begin{equation*}
\sup_{(\mathbf{B},\boldsymbol{\Gamma})\in K}\left\|\nabla^2\ell_m(\mathbf{B},\boldsymbol{\Gamma})-\nabla^2L(\mathbf{B},\boldsymbol{\Gamma})\right\|_{\mathrm{op}}\overset{p}{\to}0.
\end{equation*}
\end{proof}

Finally, Lemma~\ref{lemma:localstrongconvexity} establishes local strong convexity of the empirical loss in a neighborhood of the true parameter values.

\begin{lemma}
\label{lemma:localstrongconvexity}
There exist constants $c>0$ and $\delta>0$ such that, with probability tending to one, the empirical loss $\ell_m(\mathbf{B},\boldsymbol{\Gamma})$ satisfies
\begin{equation*}
\ell_m(\mathbf{B}_1,\boldsymbol{\Gamma}_1)\geq\ell_m(\mathbf{B}_2,\boldsymbol{\Gamma}_2)+\left\langle\nabla\ell_m(\mathbf{B}_2,\boldsymbol{\Gamma}_2),(\mathbf{B}_1,\boldsymbol{\Gamma}_1)-(\mathbf{B}_2,\boldsymbol{\Gamma}_2)\right\rangle+\frac{c}{2}\|(\mathbf{B}_1,\boldsymbol{\Gamma}_1)-(\mathbf{B}_2,\boldsymbol{\Gamma}_2)\|_F^2
\end{equation*}
for all $(\mathbf{B}_1,\boldsymbol{\Gamma}_1),(\mathbf{B}_2,\boldsymbol{\Gamma}_2)\in\mathcal{F}$ such that
\begin{equation*}
\|(\mathbf{B}_1,\boldsymbol{\Gamma}_1)-(\mathbf{B}_0,\boldsymbol{\Gamma}_0)\|_F\leq\delta\quad\text{and}\quad\|(\mathbf{B}_2,\boldsymbol{\Gamma}_2)-(\mathbf{B}_0,\boldsymbol{\Gamma}_0)\|_F\leq\delta.
\end{equation*}
\end{lemma}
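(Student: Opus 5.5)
The plan is to transfer positive definiteness of the population Hessian $\mathbf{H}=\nabla^2L(\mathbf{B}_0,\boldsymbol{\Gamma}_0)$ from Assumption~\ref{asmp:hessian} to the empirical Hessian on a fixed neighborhood. Lemma~\ref{lemma:uniformconvergence} supplies the uniform convergence needed for this transfer. Integral Taylor expansion along line segments then converts the Hessian bound into the stated inequality.

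\textbf{Step 1: population Hessian near the truth.} Let $c_0:=\lambda_{\min}(\mathbf{H})>0$. The map $(\mathbf{B},\boldsymbol{\Gamma})\mapsto\nabla^2L(\mathbf{B},\boldsymbol{\Gamma})$ is continuous on $\mathbb{R}^{p\times p}\times\mathbb{R}_+^{p\times p}$. This follows from the dominated-differentiation argument in the proof of Lemma~\ref{lemma:uniformconvergence}, since the polynomial envelopes in $\|\mathbf{X}^{(1)}\|_F$ have finite expectation, which gives continuity by dominated convergence. Choose $\delta>0$ such that
\begin{equation*}
\|\nabla^2L(\mathbf{B},\boldsymbol{\Gamma})-\mathbf{H}\|_{\mathrm{op}}\leq c_0/4
\end{equation*}
on the compact set
\begin{equation*}
K:=\{(\mathbf{B},\boldsymbol{\Gamma})\in\mathbb{R}^{p\times p}\times\mathbb{R}_+^{p\times p}:\|(\mathbf{B},\boldsymbol{\Gamma})-(\mathbf{B}_0,\boldsymbol{\Gamma}_0)\|_F\leq\delta\}.
\end{equation*}
On $K$ we then have $\lambda_{\min}\{\nabla^2L\}\geq 3c_0/4$. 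Note that $K$ is convex, being the intersection of a ball with a convex cone-shifted set, and that the constraint $\boldsymbol{\Gamma}\geq\mathbf{0}$ is preserved along segments.

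\textbf{Step 2: empirical Hessian.} Applying Lemma~\ref{lemma:uniformconvergence} to $K$, the event
\begin{equation*}
\sup_K\|\nabla^2\ell_m-\nabla^2L\|_{\mathrm{op}}\leq c_0/4
\end{equation*}
has probability tending to one. On that event, $\lambda_{\min}\{\nabla^2\ell_m(\mathbf{B},\boldsymbol{\Gamma})\}\geq c_0/2=:c$ throughout $K$.

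\textbf{Step 3: Taylor expansion.} Take two points in $\mathcal{F}$, each within $\delta$ of the truth; both lie in $K$. The segment joining them stays in $K$ by convexity, even though it may leave $\mathcal{F}$ because the DAG constraint is not convex. This causes no difficulty, since only $K$ is used. Write $\Delta$ for the difference of the two points. The second-order integral remainder is
\begin{equation*}
\ell_m(\mathbf{B}_1,\boldsymbol{\Gamma}_1)-\ell_m(\mathbf{B}_2,\boldsymbol{\Gamma}_2)-\langle\nabla\ell_m(\mathbf{B}_2,\boldsymbol{\Gamma}_2),\Delta\rangle=\int_0^1(1-t)\,\Delta^\top\nabla^2\ell_m(\cdot)\,\Delta\,dt\geq\frac{c}{2}\|\Delta\|_F^2.
\end{equation*}
Since $\ell_m$ is $C^2$ on an open set containing $\mathbb{R}^{p\times p}\times\mathbb{R}_+^{p\times p}$, as shown in the proof of Lemma~\ref{lemma:llipschitz}, boundary points with $\gamma_{jk}=0$ cause no issue.

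\textbf{Main obstacle.} The delicate point is Step 1: establishing continuity of $\nabla^2L$ and the validity of exchanging differentiation and expectation near the boundary $\boldsymbol{\Gamma}=\mathbf{0}$. I would handle this using the envelope bounds already invoked in Lemma~\ref{lemma:uniformconvergence}, together with the bound $\mathbf{V}^{(i)}\succeq\mathbf{I}$. That bound keeps $\mathbf{V}^{-(i)}$ bounded by one in operator norm, so every derivative is a polynomial in the data.
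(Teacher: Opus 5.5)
Your proposal is correct and follows the paper's proof essentially step for step. You bound $\lambda_{\min}$ of the population Hessian from below on a convex $\delta$-neighborhood by continuity, transfer that bound to the empirical Hessian via the uniform Hessian convergence in Lemma~\ref{lemma:uniformconvergence}, and conclude with the integral-remainder Taylor expansion along segments inside that convex set. The differences are only cosmetic: your constant $c=c_0/2$ versus the paper's $\lambda_0/4$, and your explicit dominated-convergence justification of the continuity of $\nabla^2 L$, which the paper simply asserts.
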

\begin{proof}
By Assumption~\ref{asmp:hessian}, $\nabla^2L(\mathbf{B}_0,\boldsymbol{\Gamma}_0)$ is positive definite. Let
\begin{equation*}
\lambda_0:=\lambda_{\min}\{\nabla^2L(\mathbf{B}_0,\boldsymbol{\Gamma}_0)\}>0.
\end{equation*}
Since $\nabla^2L(\mathbf{B},\boldsymbol{\Gamma})$ is continuous in a neighborhood of $(\mathbf{B}_0,\boldsymbol{\Gamma}_0)$ relative to $\mathbb{R}^{p\times p}\times\mathbb{R}_+^{p\times p}$, there exists $\delta>0$ such that
\begin{equation*}
\lambda_{\min}\{\nabla^2 L(\mathbf{B},\boldsymbol{\Gamma})\}\geq\frac{\lambda_0}{2}
\end{equation*}
whenever
\begin{equation*}
\|(\mathbf{B},\boldsymbol{\Gamma})-(\mathbf{B}_0,\boldsymbol{\Gamma}_0)\|_F\leq\delta.
\end{equation*}
Define the compact set
\begin{equation*}
K_\delta:=\left\{(\mathbf{B},\boldsymbol{\Gamma})\in\mathbb{R}^{p\times p}\times\mathbb{R}_+^{p\times p}:\|(\mathbf{B},\boldsymbol{\Gamma})-(\mathbf{B}_0,\boldsymbol{\Gamma}_0)\|_F\leq\delta\right\},
\end{equation*}
and apply Lemma~\ref{lemma:uniformconvergence} to get
\begin{equation*}
\sup_{(\mathbf{B},\boldsymbol{\Gamma})\in K_\delta}\left\|\nabla^2\ell_m(\mathbf{B},\boldsymbol{\Gamma})-\nabla^2L(\mathbf{B},\boldsymbol{\Gamma})\right\|_{\mathrm{op}}\overset{p}{\to}0.
\end{equation*}
Therefore, with probability tending to one, it holds
\begin{equation*}
\sup_{\|(\mathbf{B},\boldsymbol{\Gamma})-(\mathbf{B}_0,\boldsymbol{\Gamma}_0)\|_F\leq\delta}\left\|\nabla^2\ell_m(\mathbf{B},\boldsymbol{\Gamma})-\nabla^2L(\mathbf{B},\boldsymbol{\Gamma})\right\|_{\mathrm{op}}\leq\frac{\lambda_0}{4}.
\end{equation*}
On this event, we have
\begin{equation*}
\lambda_{\min}\{\nabla^2\ell_m(\mathbf{B},\boldsymbol{\Gamma})\}\geq\lambda_{\min}\{\nabla^2L(\mathbf{B},\boldsymbol{\Gamma})\}-\left\|\nabla^2\ell_m(\mathbf{B},\boldsymbol{\Gamma})-\nabla^2L(\mathbf{B},\boldsymbol{\Gamma})\right\|_{\mathrm{op}}\geq\frac{\lambda_0}{4}
\end{equation*}
for all $(\mathbf{B},\boldsymbol{\Gamma})$ in the neighborhood. Define $c:=\lambda_0/4$ and for any two points $(\mathbf{B}_1,\boldsymbol{\Gamma}_1)$ and $(\mathbf{B}_2,\boldsymbol{\Gamma}_2)$ in this neighborhood, define
\begin{equation*}
(\mathbf{B}(t),\boldsymbol{\Gamma}(t)):=(\mathbf{B}_2,\boldsymbol{\Gamma}_2)+t\{(\mathbf{B}_1,\boldsymbol{\Gamma}_1)-(\mathbf{B}_2,\boldsymbol{\Gamma}_2)\}
\end{equation*}
for $t\in[0,1]$. Since $K_\delta$ is convex, it holds $(\mathbf{B}(t),\boldsymbol{\Gamma}(t))\in K_\delta$ for all $t\in[0,1]$. Moreover, because $\boldsymbol{\Gamma}(t)\in\mathbb{R}_+^{p\times p}$, each covariance matrix appearing in $\ell_m(\mathbf{B}(t),\boldsymbol{\Gamma}(t))$ is positive definite, so $\ell_m(\mathbf{B}(t),\boldsymbol{\Gamma}(t))$ is twice continuously differentiable on $[0,1]$. Therefore, Taylor's theorem with the integral form of the remainder yields
\begin{equation*}
\ell_m(\mathbf{B}_1,\boldsymbol{\Gamma}_1)=\ell_m(\mathbf{B}_2,\boldsymbol{\Gamma}_2)+\left\langle\nabla \ell_m(\mathbf{B}_2,\boldsymbol{\Gamma}_2),(\mathbf{B}_1,\boldsymbol{\Gamma}_1)-(\mathbf{B}_2,\boldsymbol{\Gamma}_2)\right\rangle+\int_0^1(1-t)Q(t)\,dt,
\end{equation*}
where
\begin{equation*}
\begin{split}
Q(t)&:= \\
&\left\langle(\mathbf{B}_1,\boldsymbol{\Gamma}_1)-(\mathbf{B}_2,\boldsymbol{\Gamma}_2),\nabla^2\ell_m((\mathbf{B}_2,\boldsymbol{\Gamma}_2)+t\{(\mathbf{B}_1,\boldsymbol{\Gamma}_1)-(\mathbf{B}_2,\boldsymbol{\Gamma}_2)\})\{(\mathbf{B}_1,\boldsymbol{\Gamma}_1)-(\mathbf{B}_2,\boldsymbol{\Gamma}_2)\}\right\rangle.
\end{split}
\end{equation*}
Using the uniform lower bound on the Hessian gives
\begin{equation*}
Q(t)\geq c\|(\mathbf{B}_1,\boldsymbol{\Gamma}_1)-(\mathbf{B}_2,\boldsymbol{\Gamma}_2)\|_F^2
\end{equation*}
for all $t\in[0,1]$. Therefore, it holds
\begin{equation*}
\ell_m(\mathbf{B}_1,\boldsymbol{\Gamma}_1)\geq\ell_m(\mathbf{B}_2,\boldsymbol{\Gamma}_2)+\left\langle\nabla\ell_m(\mathbf{B}_2,\boldsymbol{\Gamma}_2),(\mathbf{B}_1,\boldsymbol{\Gamma}_1)-(\mathbf{B}_2,\boldsymbol{\Gamma}_2)\right\rangle+\frac{c}{2}\|(\mathbf{B}_1,\boldsymbol{\Gamma}_1)-(\mathbf{B}_2,\boldsymbol{\Gamma}_2)\|_F^2,
\end{equation*}
which completes the proof.
\end{proof}

With these results in place, we are now ready to prove Theorem~\ref{thrm:consistency}.

\begin{proof}

\paragraph{Part 1: Consistency of parameter estimation}

By the definition of $F_m$ and the fact that $\|\mathbf{B}\|_\infty\leq M_\mathbf{B}$, $\|\boldsymbol{\Gamma}\|_\infty\leq M_{\boldsymbol{\Gamma}}$, and $p$ is fixed, we have
\begin{equation*}
\sup_{(\mathbf{B},\boldsymbol{\Gamma})\in\mathcal{F}}|F_m(\mathbf{B},\boldsymbol{\Gamma})-L(\mathbf{B},\boldsymbol{\Gamma})|\leq\sup_{(\mathbf{B},\boldsymbol{\Gamma})\in\mathcal{F}}|\ell_m(\mathbf{B},\boldsymbol{\Gamma})-L(\mathbf{B},\boldsymbol{\Gamma})|+\lambda_1p^2M_\mathbf{B}+\lambda_2p^2M_{\boldsymbol{\Gamma}}.
\end{equation*}
We now show the two terms on the right-hand side converge to zero. Since $\mathcal{F}$ is compact, Lemma~\ref{lemma:uniformconvergence} implies
\begin{equation*}
\sup_{(\mathbf{B},\boldsymbol{\Gamma})\in\mathcal{F}}|\ell_m(\mathbf{B},\boldsymbol{\Gamma})-L(\mathbf{B},\boldsymbol{\Gamma})|\overset{p}{\to}0.
\end{equation*}
Moreover, by Assumption~\ref{asmp:penalty} the penalty terms on the right-hand side vanish, and hence
\begin{equation*}
z_m:=\sup_{(\mathbf{B},\boldsymbol{\Gamma})\in\mathcal{F}}|F_m(\mathbf{B},\boldsymbol{\Gamma})-L(\mathbf{B},\boldsymbol{\Gamma})|\overset{p}{\to}0.
\end{equation*}
Now, fix any $\epsilon>0$ and define the set
\begin{equation*}
\mathcal{A}_\epsilon:=\{(\mathbf{B},\boldsymbol{\Gamma})\in\mathcal{F}:\|(\mathbf{B},\boldsymbol{\Gamma})-(\mathbf{B}_0,\boldsymbol{\Gamma}_0)\|_F\geq\epsilon\}.
\end{equation*}
Since $\mathcal{F}$ is closed and bounded (since the set of acyclic matrices is closed), it is compact and hence $\mathcal{A}_\epsilon$ is also compact. Moreover, since $L$ is continuous on $\mathcal{F}$ and, by Lemma~\ref{lemma:unique}, uniquely minimized at $(\mathbf{B}_0,\boldsymbol{\Gamma}_0)$, it follows
\begin{equation*}
\Delta_\epsilon:=\inf_{(\mathbf{B},\boldsymbol{\Gamma})\in\mathcal{A}_\epsilon}\{L(\mathbf{B},\boldsymbol{\Gamma})-L(\mathbf{B}_0,\boldsymbol{\Gamma}_0)\}>0.
\end{equation*}
Now, using that $(\hat{\mathbf{B}},\hat{\boldsymbol{\Gamma}})$ is the global minimizer of $F_m$, we have
\begin{equation*}
L(\hat{\mathbf{B}},\hat{\boldsymbol{\Gamma}})\leq F_m(\hat{\mathbf{B}},\hat{\boldsymbol{\Gamma}})+z_m\leq F_m(\mathbf{B}_0,\boldsymbol{\Gamma}_0)+z_m\leq L(\mathbf{B}_0,\boldsymbol{\Gamma}_0)+2z_m.
\end{equation*}
Therefore, on the event $z_m<\Delta_\epsilon/2$, it holds
\begin{equation}
\label{eq:step1inequal}
L(\hat{\mathbf{B}},\hat{\boldsymbol{\Gamma}})<L(\mathbf{B}_0,\boldsymbol{\Gamma}_0)+\Delta_\epsilon.
\end{equation}
Suppose for the sake of contradiction
\begin{equation*}
\|(\hat{\mathbf{B}},\hat{\boldsymbol{\Gamma}})-(\mathbf{B}_0,\boldsymbol{\Gamma}_0)\|_F\geq\epsilon,
\end{equation*}
then $(\hat{\mathbf{B}},\hat{\boldsymbol{\Gamma}})\in\mathcal{A}_\epsilon$, implying
\begin{equation*}
L(\hat{\mathbf{B}},\hat{\boldsymbol{\Gamma}})\geq L(\mathbf{B}_0,\boldsymbol{\Gamma}_0)+\Delta_\epsilon,
\end{equation*}
which contradicts \eqref{eq:step1inequal}. Hence, whenever $z_m<\Delta_\epsilon/2$, it must hold
\begin{equation*}
\|(\hat{\mathbf{B}},\hat{\boldsymbol{\Gamma}})-(\mathbf{B}_0,\boldsymbol{\Gamma}_0)\|_F<\epsilon.
\end{equation*}
Since $z_m\overset{p}{\to}0$, we have
\begin{equation*}
\mathrm{Pr}\left(\|(\hat{\mathbf{B}},\hat{\boldsymbol{\Gamma}})-(\mathbf{B}_0,\boldsymbol{\Gamma}_0)\|_F\geq\epsilon\right)\leq\mathrm{Pr}(z_m\geq\Delta_\epsilon/2)\to0,
\end{equation*}
and hence
\begin{equation*}
(\hat{\mathbf{B}},\hat{\boldsymbol{\Gamma}})\overset{p}{\to}(\mathbf{B}_0,\boldsymbol{\Gamma}_0),
\end{equation*}
thereby proving the first claim of the theorem.

\paragraph{Part 2: No false negatives}

Let $(j,k)\in\mathcal{S}_\mathbf{B}$. Since $p$ is fixed, the set $\mathcal{S}_\mathbf{B}$ is finite. Define
\begin{equation*}
c_\mathbf{B}:=\min_{(a,b)\in\mathcal{S}_\mathbf{B}}|\beta_{ab,0}|>0.
\end{equation*}
If $\hat{\beta}_{jk}=0$, then $|\hat{\beta}_{jk}-\beta_{jk,0}|=|\beta_{jk,0}|\geq c_\mathbf{B}$. Hence, by Part~1, we have
\begin{equation*}
\mathrm{Pr}(\hat{\beta}_{jk}=0)\leq\mathrm{Pr}(|\hat{\beta}_{jk}-\beta_{jk,0}|\geq c_\mathbf{B})\to0.
\end{equation*}
Therefore, $\mathrm{Pr}(\hat{\beta}_{jk}\neq0)\to1$ for all $(j,k)\in\mathcal{S}_\mathbf{B}$. Since $p$ is fixed, $\mathcal{S}_\mathbf{B}$ is finite and the union bound gives
\begin{equation}
\label{eq:unionbeta}
\mathrm{Pr}\left(\exists\,(j,k)\in\mathcal{S}_\mathbf{B}:\hat{\beta}_{jk}=0\right)\leq\sum_{(j,k)\in\mathcal{S}_\mathbf{B}}\mathrm{Pr}(\hat{\beta}_{jk}=0)\to0.
\end{equation}
Similarly, let $(j,k)\in\mathcal{S}_{\boldsymbol{\Gamma}}$. Since $p$ is fixed, the set $\mathcal{S}_{\boldsymbol{\Gamma}}$ is finite. Define
\begin{equation*}
c_{\boldsymbol{\Gamma}}:=\min_{(a,b)\in\mathcal{S}_{\boldsymbol{\Gamma}}}\gamma_{ab,0}>0.
\end{equation*}
If $\hat{\gamma}_{jk}=0$, then $|\hat{\gamma}_{jk}-\gamma_{jk,0}|=\gamma_{jk,0}\geq c_{\boldsymbol{\Gamma}}$. Hence, again by Part~1, we have
\begin{equation*}
\mathrm{Pr}\left(\hat{\gamma}_{jk}=0\right)\leq\mathrm{Pr}(|\hat{\gamma}_{jk}-\gamma_{jk,0}|\geq c_{\boldsymbol{\Gamma}})\to0,
\end{equation*}
which implies $\mathrm{Pr}(\hat{\gamma}_{jk}\neq0)\to1$ for all $(j,k)\in\mathcal{S}_{\boldsymbol{\Gamma}}$. The union bound therefore gives
\begin{equation}
\label{eq:uniongamma}
\mathrm{Pr}(\exists\,(j,k)\in\mathcal{S}_{\boldsymbol{\Gamma}}:\hat{\gamma}_{jk}=0)\leq\sum_{(j,k)\in\mathcal{S}_{\boldsymbol{\Gamma}}}\mathrm{Pr}(\hat{\gamma}_{jk}=0)\to0.
\end{equation}
Combining \eqref{eq:unionbeta} and \eqref{eq:uniongamma} yields
\begin{equation*}
\mathrm{Pr}\left(\hat{\beta}_{jk}\neq0\,\forall\,(j,k)\in\mathcal{S}_\mathbf{B}\land\hat{\gamma}_{jk}\neq0\,\forall\,(j,k)\in\mathcal{S}_{\boldsymbol{\Gamma}}\right)\to1,
\end{equation*}
proving $(\hat{\mathbf{B}},\hat{\boldsymbol{\Gamma}})$ contains no false negatives.

\paragraph{Part 3: No false positives}

\subparagraph{Part 3(a)}

Define the structure-restricted feasible set
\begin{equation*}
\mathcal{F}_\mathcal{S}:=\{(\mathbf{B},\boldsymbol{\Gamma})\in\mathcal{F}:\beta_{jk}=0\,\forall\,(j,k)\in\mathcal{S}_\mathbf{B}^c,\,\gamma_{jk}=0\,\forall\,(j,k)\in\mathcal{S}_{\boldsymbol{\Gamma}}^c\}.
\end{equation*}
This set is nonempty because $(\mathbf{B}_0,\boldsymbol{\Gamma}_0)\in\mathcal{F}_\mathcal{S}$. Since $\mathcal{G}(\mathbf{B}_0)\cup\mathcal{G}(\boldsymbol{\Gamma}_0)$ is a DAG, any spanning subgraph (i.e., a graph obtained by deleting edges) is also a DAG. Therefore, every element of $\mathcal{F}_\mathcal{S}$ is acyclic. In particular, for any $(\mathbf{B},\boldsymbol{\Gamma})\in\mathcal{F}_\mathcal{S}$, it holds
\begin{equation*}
\mathcal{G}(\mathbf{B})\cup\mathcal{G}(\boldsymbol{\Gamma})\subseteq\mathcal{G}(\mathbf{B}_0)\cup\mathcal{G}(\boldsymbol{\Gamma}_0).
\end{equation*}

Let $(\tilde{\mathbf{B}},\tilde{\boldsymbol{\Gamma}})$ be a global minimizer of $F_m$ over $\mathcal{F}_\mathcal{S}$. We work on the event that $(\tilde{\mathbf{B}},\tilde{\boldsymbol{\Gamma}})$ is close enough to $(\mathbf{B}_0,\boldsymbol{\Gamma}_0)$ such that:
\begin{itemize}
\item $\operatorname{sign}(\tilde{\beta}_{jk})=\operatorname{sign}(\beta_{jk,0})$ and $|
\tilde{\beta}_{jk}|<M_\mathbf{B}$ for all $(j,k)\in\mathcal{S}_\mathbf{B}$; and
\item $0<\tilde{\gamma}_{jk}<M_{\boldsymbol{\Gamma}}$ for all $(j,k)\in\mathcal{S}_{\boldsymbol{\Gamma}}$.
\end{itemize}
This event occurs with probability tending to one due to the same line of reasoning as in Part~1, but on $\mathcal{F}_\mathcal{S}$, together with the inequalities $\|\mathbf{B}_0\|_\infty<M_\mathbf{B}$ and $\|\boldsymbol{\Gamma}_0\|_\infty<M_{\boldsymbol{\Gamma}}$.

Define the gradients on the nonzero components as
\begin{equation*}
\nabla_{\mathbf{B},\mathcal{S}_\mathbf{B}}\ell_m(\mathbf{B},\boldsymbol{\Gamma}):=\left(\frac{\partial}{\partial\beta_{jk}}\ell_m(\mathbf{B},\boldsymbol{\Gamma})\right)_{(j,k)\in\mathcal{S}_\mathbf{B}}
\end{equation*}
and
\begin{equation*}
\nabla_{\boldsymbol{\Gamma},\mathcal{S}_{\boldsymbol{\Gamma}}}\ell_m(\mathbf{B},\boldsymbol{\Gamma}):=\left(\frac{\partial}{\partial\gamma_{jk}}\ell_m(\mathbf{B},\boldsymbol{\Gamma})\right)_{(j,k)\in\mathcal{S}_{\boldsymbol{\Gamma}}},
\end{equation*}
which can be stacked together as
\begin{equation*}
\nabla_\mathcal{S}\ell_m(\mathbf{B},\boldsymbol{\Gamma}):=
\begin{pmatrix}
\nabla_{\mathbf{B},\mathcal{S}_\mathbf{B}}\ell_m(\mathbf{B},\boldsymbol{\Gamma}) \\
\nabla_{\boldsymbol{\Gamma},\mathcal{S}_{\boldsymbol{\Gamma}}}\ell_m(\mathbf{B},\boldsymbol{\Gamma})
\end{pmatrix}.
\end{equation*}
On the event above, the acyclicity constraint is automatic on $\mathcal{F}_\mathcal{S}$ (each such union graph is a spanning subgraph of the true union DAG), the box constraints are inactive on the nonzero components, and the nonnegative constraint on $\boldsymbol{\Gamma}$ is inactive on $\mathcal{S}_{\boldsymbol{\Gamma}}$. The first-order conditions for $(\tilde{\mathbf{B}},\tilde{\boldsymbol{\Gamma}})$ on the nonzero components are
\begin{equation}
\label{eq:focbeta}
\nabla_{\mathbf{B},\mathcal{S}_\mathbf{B}}\ell_m(\tilde{\mathbf{B}},\tilde{\boldsymbol{\Gamma}})+\lambda_1 \mathbf{s}_\mathbf{B}=\mathbf{0}
\end{equation}
and, since $\gamma_{jk}\geq0$ and nonzero $\tilde{\gamma}_{jk}>0$,
\begin{equation}
\label{eq:focgamma}
\nabla_{\boldsymbol{\Gamma},\mathcal{S}_{\boldsymbol{\Gamma}}}\ell_m(\tilde{\mathbf{B}},\tilde{\boldsymbol{\Gamma}})+\lambda_2\mathbf{1}=\mathbf{0}.
\end{equation}
Stacking these equations together gives
\begin{equation}
\label{eq:foc}
\nabla_{\mathcal{S}}\ell_m(\tilde{\mathbf{B}},\tilde{\boldsymbol{\Gamma}})+\lambda_1
\begin{pmatrix}
\mathbf{s}_\mathbf{B} \\
\lambda_2/\lambda_1\mathbf{1}
\end{pmatrix}
=\mathbf{0}.
\end{equation}

\subparagraph{Part 3(b)}

Define the sets of acyclicity-preserving inactive indices
\begin{equation*}
\mathcal{T}_{\mathbf{B}}:=\left\{(j,k)\in\mathcal{S}_{\mathbf{B}}^c:\mathcal{G}(\mathbf{B}_0)\cup\mathcal{G}(\boldsymbol{\Gamma}_0)\cup\{j\to k\}\in\mathrm{DAG}_p\right\}
\end{equation*}
and
\begin{equation*}
\mathcal{T}_{\boldsymbol{\Gamma}}:=\left\{(j,k)\in\mathcal{S}_{\boldsymbol{\Gamma}}^c:\mathcal{G}(\mathbf{B}_0)\cup\mathcal{G}(\boldsymbol{\Gamma}_0)\cup\{j\to k\}\in\mathrm{DAG}_p\right\},
\end{equation*}
where $\{j\to k\}$ denotes the directed graph with node set $\{1,\dots,p\}$ and edge set $\{(j,k)\}$. We show that the strict dual feasibility conditions hold with probability tending to one on the acyclicity-preserving inactive sets defined above. For indices $\mathcal{S}_\mathbf{B}^c\setminus\mathcal{T}_\mathbf{B}$ and $\mathcal{S}_{\boldsymbol{\Gamma}}^c\setminus\mathcal{T}_{\boldsymbol{\Gamma}}$, any nonzero value would create a cycle in the union graph and hence violate feasibility of $(\mathbf{B},\boldsymbol{\Gamma})\in\mathcal{F}$ on the event 
that all true edges are present (i.e., the no false negatives event established 
in Part~2). It therefore suffices to establish dual feasibility on $\mathcal{T}_\mathbf{B}$ and $\mathcal{T}_{\boldsymbol{\Gamma}}$. Specifically, we show the following inequalities with probability tending to one:
\begin{itemize}
\item For every $(j,k)\in\mathcal{T}_\mathbf{B}$, it holds
\begin{equation}
\label{eq:gradbetabound}
\left|\frac{\partial}{\partial\beta_{jk}}\ell_m(\tilde{\mathbf{B}},\tilde{\boldsymbol{\Gamma}})\right|\leq\left(1-\frac{\eta}{2}\right)\lambda_1;
\end{equation}
and
\item For every $(j,k)\in\mathcal{T}_{\boldsymbol{\Gamma}}$, it holds
\begin{equation}
\label{eq:gradgammabound}
\frac{\partial}{\partial\gamma_{jk}}\ell_m(\tilde{\mathbf{B}},\tilde{\boldsymbol{\Gamma}})>-\left(1-\frac{\eta}{2}\right)\lambda_2.
\end{equation}
\end{itemize}
These inequalities ensure strict dual feasibility for all inactive coordinates that admit feasible perturbations.

We begin by performing a Taylor expansion on the nonzero entries. Since $\boldsymbol{\Gamma}\in\mathbb{R}_+^{p\times p}$ on $\mathcal{F}$, each covariance matrix $\mathbf{V}^{(i)}(\boldsymbol{\gamma}_k)$ is positive definite, so $\log\det\{\mathbf{V}^{(i)}(\boldsymbol{\gamma}_k)\}$ and $\mathbf{V}^{-(i)}(\boldsymbol{\gamma}_k)$ are $C^2$ (indeed $C^\infty$) on an open neighborhood of $\mathcal{F}$. Hence, $\ell_m(\mathbf{B},\boldsymbol{\Gamma})$ is $C^2$ on an open neighborhood of $\mathcal{F}$, and we may expand $\nabla_\mathcal{S}\ell_m(\mathbf{B},\boldsymbol{\Gamma})$ around $(\mathbf{B}_0,\boldsymbol{\Gamma}_0)$ as
\begin{equation}
\label{eq:expansion1}
\nabla_\mathcal{S}\ell_m(\tilde{\mathbf{B}},\tilde{\boldsymbol{\Gamma}})=\nabla_\mathcal{S}\ell_m(\mathbf{B}_0,\boldsymbol{\Gamma}_0)+\mathbf{H}_{\mathcal{S}\mathcal{S}}^{(m)}
\begin{pmatrix}
\tilde{\mathbf{B}}_{\mathcal{S}_\mathbf{B}}-\mathbf{B}_{\mathcal{S}_\mathbf{B},0} \\
\tilde{\boldsymbol{\Gamma}}_{\mathcal{S}_{\boldsymbol{\Gamma}}}-\boldsymbol{\Gamma}_{\mathcal{S}_{\boldsymbol{\Gamma}},0}
\end{pmatrix},
\end{equation}
where $\mathbf{H}_{\mathcal{S}\mathcal{S}}^{(m)}$ is the mean Hessian along the line segment between $(\mathbf{B}_0,\boldsymbol{\Gamma}_0)$ and $(\tilde{\mathbf{B}},\tilde{\boldsymbol{\Gamma}})$ restricted to the nonzero components. Combining \eqref{eq:foc} and \eqref{eq:expansion1} gives
\begin{equation*}
\mathbf{H}_{\mathcal{S}\mathcal{S}}^{(m)}
\begin{pmatrix}
\tilde{\mathbf{B}}_{\mathcal{S}_\mathbf{B}}-\mathbf{B}_{\mathcal{S}_\mathbf{B},0} \\
\tilde{\boldsymbol{\Gamma}}_{\mathcal{S}_{\boldsymbol{\Gamma}}}-\boldsymbol{\Gamma}_{\mathcal{S}_{\boldsymbol{\Gamma}},0}
\end{pmatrix}
=-\nabla_\mathcal{S}\ell_m(\mathbf{B}_0,\boldsymbol{\Gamma}_0)-\lambda_1
\begin{pmatrix}
\mathbf{s}_\mathbf{B} \\
\lambda_2/\lambda_1\mathbf{1}
\end{pmatrix}.
\end{equation*}
Since $\mathcal{F}_{\mathcal{S}}$ is compact, Lemma~\ref{lemma:uniformconvergence} gives
\begin{equation*}
\sup_{(\mathbf{B},\boldsymbol{\Gamma})\in\mathcal{F}_{\mathcal{S}}} \left\|\nabla^2\ell_m(\mathbf{B},\boldsymbol{\Gamma})-\nabla^2L(\mathbf{B},\boldsymbol{\Gamma})\right\|_{\mathrm{op}}\overset{p}{\to}0.
\end{equation*}
Moreover, the same argument as in Part~1, but with minimization carried out over $\mathcal{F}_{\mathcal{S}}$, gives $(\tilde{\mathbf{B}},\tilde{\boldsymbol{\Gamma}})\overset{p}{\to}(\mathbf{B}_0,\boldsymbol{\Gamma}_0)$. Together with continuity of $\nabla^2 L(\mathbf{B},\boldsymbol{\Gamma})$, these two results imply that $\mathbf{H}_{\mathcal{S}\mathcal{S}}^{(m)}\overset{p}{\to}\mathbf{H}_{\mathcal{S}\mathcal{S}}$. Since $\mathbf{H}_{\mathcal{S}\mathcal{S}}$ is nonsingular (because $\mathbf{H}$ is positive definite by Assumption~\ref{asmp:hessian}), $\mathbf{H}_{\mathcal{S}\mathcal{S}}^{(m)}$ is invertible with probability tending to one, and so
\begin{equation*}
\begin{pmatrix}
\tilde{\mathbf{B}}_{\mathcal{S}_\mathbf{B}}-\mathbf{B}_{\mathcal{S}_\mathbf{B},0} \\
\tilde{\boldsymbol{\Gamma}}_{\mathcal{S}_{\boldsymbol{\Gamma}}}-\boldsymbol{\Gamma}_{\mathcal{S}_{\boldsymbol{\Gamma}},0}
\end{pmatrix}
=-\mathbf{H}_{\mathcal{S}\mathcal{S}}^{-(m)}\left[\nabla_\mathcal{S}\ell_m(\mathbf{B}_0,\boldsymbol{\Gamma}_0)+\lambda_1
\begin{pmatrix}
\mathbf{s}_\mathbf{B} \\
\lambda_2/\lambda_1\mathbf{1}
\end{pmatrix}
\right].
\end{equation*}
Because the clusters are iid and satisfy $n_i\equiv n$, each coordinate of $\nabla\ell_m(\mathbf{B}_0,\boldsymbol{\Gamma}_0)$ can be written as an average of iid mean-zero clusterwise contributions with finite variance under the Gaussian structural equation model. Therefore, by Chebyshev's inequality, $\nabla_\mathcal{S}\ell_m(\mathbf{B}_0,\boldsymbol{\Gamma}_0)=O_p(m^{-1/2})$ coordinatewise. Further, $\mathbf{H}_{\mathcal{S}\mathcal{S}}^{-(m)}=O_p(1)$ and, by Assumption~\ref{asmp:penalty}, $m^{-1/2}=o(\lambda_1)$. Hence, it holds
\begin{equation}
\label{eq:expansion2}
\begin{pmatrix}
\tilde{\mathbf{B}}_{\mathcal{S}_\mathbf{B}}-\mathbf{B}_{\mathcal{S}_\mathbf{B},0} \\
\tilde{\boldsymbol{\Gamma}}_{\mathcal{S}_{\boldsymbol{\Gamma}}}-\boldsymbol{\Gamma}_{\mathcal{S}_{\boldsymbol{\Gamma}},0}
\end{pmatrix}
=-\lambda_1\mathbf{H}_{\mathcal{S}\mathcal{S}}^{-(m)}
\begin{pmatrix}
\mathbf{s}_\mathbf{B} \\
\lambda_2/\lambda_1\mathbf{1}
\end{pmatrix}
+o_p(\lambda_1).
\end{equation}

We now perform a Taylor expansion on the zero entries. Take a zero fixed-effect entry $(j,k)\in\mathcal{T}_\mathbf{B}$. Expand its partial derivative around $(\mathbf{B}_0,\boldsymbol{\Gamma}_0)$ as
\begin{equation}
\label{eq:expansion3}
\frac{\partial}{\partial\beta_{jk}}\ell_m(\tilde{\mathbf{B}},\tilde{\boldsymbol{\Gamma}})=\frac{\partial}{\partial\beta_{jk}}\ell_m(\mathbf{B}_0,\boldsymbol{\Gamma}_0)+\mathbf{h}_{jk,\mathcal{S}}^{(m,\mathbf{B})}
\begin{pmatrix}
\tilde{\mathbf{B}}_{\mathcal{S}_\mathbf{B}}-\mathbf{B}_{\mathcal{S}_\mathbf{B},0} \\
\tilde{\boldsymbol{\Gamma}}_{\mathcal{S}_{\boldsymbol{\Gamma}}}-\boldsymbol{\Gamma}_{\mathcal{S}_{\boldsymbol{\Gamma}},0}
\end{pmatrix},
\end{equation}
where $\mathbf{h}_{jk,\mathcal{S}}^{(m,\mathbf{B})}$ is the mean cross-Hessian row between $\beta_{jk}$ and the active entries. Substitute \eqref{eq:expansion2} into \eqref{eq:expansion3} to get
\begin{equation*}
\frac{\partial}{\partial\beta_{jk}}\ell_m(\tilde{\mathbf{B}},\tilde{\boldsymbol{\Gamma}})=\frac{\partial}{\partial\beta_{jk}}\ell_m(\mathbf{B}_0,\boldsymbol{\Gamma}_0)-\lambda_1\mathbf{h}_{jk,\mathcal{S}}^{(m,\mathbf{B})}
\mathbf{H}_{\mathcal{S}\mathcal{S}}^{-(m)}
\begin{pmatrix}
\mathbf{s}_\mathbf{B} \\
\lambda_2/\lambda_1\mathbf{1}
\end{pmatrix}
+o_p(\lambda_1).
\end{equation*}
The first term on the right-hand side can be written as
\begin{equation*}
\frac{\partial}{\partial\beta_{jk}}\ell_m(\mathbf{B}_0,\boldsymbol{\Gamma}_0)=\frac{\partial}{\partial\beta_{jk}}L(\mathbf{B}_0,\boldsymbol{\Gamma}_0)+\left\{\frac{\partial}{\partial\beta_{jk}}\ell_m(\mathbf{B}_0,\boldsymbol{\Gamma}_0)-\frac{\partial}{\partial\beta_{jk}}L(\mathbf{B}_0,\boldsymbol{\Gamma}_0)\right\}.
\end{equation*}
Since $(\mathbf{B}_0,\boldsymbol{\Gamma}_0)$ uniquely minimizes $L$ over $\mathcal{F}$ by Lemma~\ref{lemma:unique}, the population derivative vanishes along any direction that preserves acyclicity of the union graph. The term in brackets is an average of iid mean-zero clusterwise contributions with finite variance and is therefore $O_p(m^{-1/2})$ by Chebyshev's inequality. Hence, we have
\begin{equation*}
\frac{\partial}{\partial\beta_{jk}}\ell_m(\mathbf{B}_0,\boldsymbol{\Gamma}_0)=O_p(m^{-1/2})=o_p(\lambda_1),
\end{equation*}
where the final equality follows from Assumption~\ref{asmp:penalty}. Also $\lambda_2/\lambda_1\to\kappa$ by Assumption~\ref{asmp:penalty}. Then, since $\mathbf{h}_{jk,\mathcal{S}}^{(m,\mathbf{B})}\overset{p}{\to}\mathbf{h}_{jk,\mathcal{S}}^{(\mathbf{B})}$, we have
\begin{equation*}
\mathbf{h}_{jk,\mathcal{S}}^{(m,\mathbf{B})}\mathbf{H}_{\mathcal{S}\mathcal{S}}^{-(m)}
\begin{pmatrix}
\mathbf{s}_\mathbf{B} \\
\lambda_2/\lambda_1\mathbf{1}
\end{pmatrix}
\overset{p}{\to}\mathbf{h}_{jk,\mathcal{S}}^{(\mathbf{B})}\mathbf{H}_{\mathcal{S}\mathcal{S}}^{-1}
\begin{pmatrix}
\mathbf{s}_\mathbf{B} \\
\kappa\mathbf{1}
\end{pmatrix}.
\end{equation*}
By Assumption~\ref{asmp:hessian}, the right-hand side has absolute value at most $1-\eta$, therefore
\begin{equation*}
\mathrm{Pr}\left(\left|\frac{\partial}{\partial\beta_{jk}}\ell_m(\tilde{\mathbf{B}},\tilde{\boldsymbol{\Gamma}})\right|\leq\left(1-\frac{\eta}{2}\right)\lambda_1\right)\to1
\end{equation*}
for each $(j,k)\in\mathcal{T}_\mathbf{B}$. Since $p$ is fixed, $\mathcal{T}_\mathbf{B}$ is finite, so the above convergence holds uniformly over all such $(j,k)\in\mathcal{T}_\mathbf{B}$ by the union bound, proving \eqref{eq:gradbetabound}.

The same line of argument as above applies to a zero random-effect variance entry $(j,k)\in\mathcal{T}_{\boldsymbol{\Gamma}}$, yielding
\begin{equation*}
\frac{\partial}{\partial\gamma_{jk}}\ell_m(\tilde{\mathbf{B}},\tilde{\boldsymbol{\Gamma}})=\frac{\partial}{\partial\gamma_{jk}}\ell_m(\mathbf{B}_0,\boldsymbol{\Gamma}_0)-\lambda_1\mathbf{h}_{jk,\mathcal{S}}^{(m,\boldsymbol{\Gamma})}
\mathbf{H}_{\mathcal{S}\mathcal{S}}^{-(m)}
\begin{pmatrix}
\mathbf{s}_\mathbf{B} \\
\lambda_2/\lambda_1\mathbf{1}
\end{pmatrix}
+o_p(\lambda_1).
\end{equation*}
The first term on the right-hand side can be written as
\begin{equation*}
\frac{\partial}{\partial\gamma_{jk}}\ell_m(\mathbf{B}_0,\boldsymbol{\Gamma}_0)=\frac{\partial}{\partial\gamma_{jk}}L(\mathbf{B}_0,\boldsymbol{\Gamma}_0)+\left\{\frac{\partial}{\partial\gamma_{jk}}\ell_m(\mathbf{B}_0,\boldsymbol{\Gamma}_0)-\frac{\partial}{\partial\gamma_{jk}}L(\mathbf{B}_0,\boldsymbol{\Gamma}_0)\right\}.
\end{equation*}
Since $(\mathbf{B}_0,\boldsymbol{\Gamma}_0)$ uniquely minimizes $L$ over $\mathcal{F}$ by Lemma~\ref{lemma:unique}, the right derivative of $L$ is nonnegative along any direction that preserves acyclicity of the union graph. The term in brackets is an average of iid clusterwise contributions with finite variance and is therefore $O_p(m^{-1/2})$ by Chebyshev's inequality. Hence, we have
\begin{equation*}
\frac{\partial}{\partial\gamma_{jk}}\ell_m(\mathbf{B}_0,\boldsymbol{\Gamma}_0)=\frac{\partial}{\partial\gamma_{jk}}L(\mathbf{B}_0,\boldsymbol{\Gamma}_0)+O_p(m^{-1/2})\geq-o_p(\lambda_2),
\end{equation*}
where the final inequality follows by Assumption~\ref{asmp:penalty}. Then, since $\mathbf{h}_{jk,\mathcal{S}}^{(m,\boldsymbol{\Gamma})}\overset{p}{\to}\mathbf{h}_{jk,\mathcal{S}}^{(\boldsymbol{\Gamma})}$, we have
\begin{equation*}
\mathbf{h}_{jk,\mathcal{S}}^{(m,\boldsymbol{\Gamma})}
\mathbf{H}_{\mathcal{S}\mathcal{S}}^{-(m)}
\begin{pmatrix}
\mathbf{s}_\mathbf{B} \\
\lambda_2/\lambda_1\mathbf{1}
\end{pmatrix}
\overset{p}{\to}
\mathbf{h}_{jk,\mathcal{S}}^{(\boldsymbol{\Gamma})}
\mathbf{H}_{\mathcal{S}\mathcal{S}}^{-1}
\begin{pmatrix}
\mathbf{s}_\mathbf{B} \\
\kappa\mathbf{1}
\end{pmatrix}.
\end{equation*}
Again, Assumption~\ref{asmp:hessian} upper bounds the right-hand side by $\kappa(1-\eta)$, therefore
\begin{equation*}
\mathrm{Pr}\left(\frac{\partial}{\partial\gamma_{jk}}\ell_m(\tilde{\mathbf{B}},\tilde{\boldsymbol{\Gamma}})>-\left(1-\frac{\eta}{2}\right)\lambda_2\right)\to1
\end{equation*}
for each $(j,k)\in\mathcal{T}_{\boldsymbol{\Gamma}}$. As before, the above convergence holds uniformly over all $(j,k)\in\mathcal{T}_{\boldsymbol{\Gamma}}$ by the union bound, proving \eqref{eq:gradgammabound}.

\subparagraph{Part 3(c)}

By Part~1, $(\hat{\mathbf{B}},\hat{\boldsymbol{\Gamma}})\overset{p}{\to}(\mathbf{B}_0,\boldsymbol{\Gamma}_0)$, and as explained in Part~3(b), $(\tilde{\mathbf{B}},\tilde{\boldsymbol{\Gamma}})\overset{p}{\to}(\mathbf{B}_0,\boldsymbol{\Gamma}_0)$. Since $(\mathbf{B}_0,\boldsymbol{\Gamma}_0)$ is the unique minimizer of $L$ over $\mathcal{F}$ by Lemma~\ref{lemma:unique}, it is also the unique minimizer of $L$ over the subset $\mathcal{F}_{\mathcal{S}}$. Let $\delta>0$ and $c>0$ be the constants from Lemma~\ref{lemma:localstrongconvexity}. Hence, with probability tending to one, we have
\begin{equation*}
\|(\hat{\mathbf{B}},\hat{\boldsymbol{\Gamma}})-(\mathbf{B}_0,\boldsymbol{\Gamma}_0)\|_F\leq\delta\quad\text{and}\quad\|(\tilde{\mathbf{B}},\tilde{\boldsymbol{\Gamma}})-(\mathbf{B}_0,\boldsymbol{\Gamma}_0)\|_F\leq\delta.
\end{equation*}
On this event, Lemma~\ref{lemma:localstrongconvexity} with $(\mathbf{B}_1,\boldsymbol{\Gamma}_1)=(\hat{\mathbf{B}},\hat{\boldsymbol{\Gamma}})$ and $(\mathbf{B}_2,\boldsymbol{\Gamma}_2)=(\tilde{\mathbf{B}},\tilde{\boldsymbol{\Gamma}})$ yields
\begin{equation*}
\ell_m(\hat{\mathbf{B}},\hat{\boldsymbol{\Gamma}})-\ell_m(\tilde{\mathbf{B}},\tilde{\boldsymbol{\Gamma}})\geq\left\langle\nabla\ell_m(\tilde{\mathbf{B}},\tilde{\boldsymbol{\Gamma}}),(\hat{\mathbf{B}},\hat{\boldsymbol{\Gamma}})-(\tilde{\mathbf{B}},\tilde{\boldsymbol{\Gamma}})\right\rangle+\frac{c}{2}\|(\hat{\mathbf{B}},\hat{\boldsymbol{\Gamma}})-(\tilde{\mathbf{B}},\tilde{\boldsymbol{\Gamma}})\|_F^2.
\end{equation*}
Adding the difference of $\ell_1$ penalties to both sides gives
\begin{equation*}
\begin{split}
F_m(\hat{\mathbf{B}},\hat{\boldsymbol{\Gamma}})-F_m(\tilde{\mathbf{B}},\tilde{\boldsymbol{\Gamma}})&\geq\sum_{(j,k)\in\mathcal{S}_\mathbf{B}}
\left[\frac{\partial}{\partial\beta_{jk}}\ell_m(\tilde{\mathbf{B}},\tilde{\boldsymbol{\Gamma}})
(\hat{\beta}_{jk}-\tilde{\beta}_{jk})+\lambda_1(|\hat{\beta}_{jk}|-|\tilde{\beta}_{jk}|)
\right] \\
&\quad+\sum_{(j,k)\in\mathcal{S}_\mathbf{B}^c}\left[\frac{\partial}{\partial\beta_{jk}}\ell_m(\tilde{\mathbf{B}},\tilde{\boldsymbol{\Gamma}})\hat{\beta}_{jk}+\lambda_1|\hat{\beta}_{jk}|\right] \\
&\quad+\sum_{(j,k)\in\mathcal{S}_{\boldsymbol{\Gamma}}}\left[\frac{\partial}{\partial\gamma_{jk}}\ell_m(\tilde{\mathbf{B}},\tilde{\boldsymbol{\Gamma}})(\hat{\gamma}_{jk}-\tilde{\gamma}_{jk})+\lambda_2(\hat{\gamma}_{jk}-\tilde{\gamma}_{jk})\right] \\
&\quad+\sum_{(j,k)\in\mathcal{S}_{\boldsymbol{\Gamma}}^c}\left[\frac{\partial}{\partial\gamma_{jk}}\ell_m(\tilde{\mathbf{B}},\tilde{\boldsymbol{\Gamma}})\hat{\gamma}_{jk}+\lambda_2\hat{\gamma}_{jk}\right] \\
&\quad+\frac{c}{2}\|(\hat{\mathbf{B}},\hat{\boldsymbol{\Gamma}})-(\tilde{\mathbf{B}},\tilde{\boldsymbol{\Gamma}})\|_F^2.
\end{split}
\end{equation*}
By \eqref{eq:focbeta} and the inequality
\begin{equation*}
|a|-|b|\geq\operatorname{sign}(b)(a-b),
\end{equation*}
the sum over $\mathcal{S}_\mathbf{B}$ is nonnegative. By \eqref{eq:focgamma}, the sum over $\mathcal{S}_{\boldsymbol{\Gamma}}$ is equal to zero. By \eqref{eq:gradbetabound}, we have
\begin{equation*}
\frac{\partial}{\partial\beta_{jk}}\ell_m(\tilde{\mathbf{B}},\tilde{\boldsymbol{\Gamma}})\hat{\beta}_{jk}+\lambda_1|\hat{\beta}_{jk}|\geq\frac{\eta}{2}\lambda_1|\hat{\beta}_{jk}|
\end{equation*}
for all $(j,k)\in\mathcal{T}_\mathbf{B}$. Likewise, since $\hat{\gamma}_{jk}\geq0$ and \eqref{eq:gradgammabound} holds, we have
\begin{equation*}
\frac{\partial}{\partial\gamma_{jk}}\ell_m(\tilde{\mathbf{B}},\tilde{\boldsymbol{\Gamma}})\hat{\gamma}_{jk}+\lambda_2\hat{\gamma}_{jk}\geq\frac{\eta}{2}\lambda_2\hat{\gamma}_{jk}
\end{equation*}
for all $(j,k)\in\mathcal{T}_{\boldsymbol{\Gamma}}$. On the event from Part~2 (no false negatives), feasibility of $(\hat{\mathbf{B}},\hat{\boldsymbol{\Gamma}})\in\mathcal{F}$ implies
\begin{equation*}
\hat{\beta}_{jk}=0\,\forall\,(j,k)\in\mathcal{S}_\mathbf{B}^c\setminus\mathcal{T}_\mathbf{B}\quad\text{and}\quad\hat{\gamma}_{jk}=0\,\forall\,(j,k)\in\mathcal{S}_{\boldsymbol{\Gamma}}^c\setminus\mathcal{T}_{\boldsymbol{\Gamma}},
\end{equation*}
since turning on any such coordinate would create a directed cycle in the union graph. It follows
\begin{equation*}
F_m(\hat{\mathbf{B}},\hat{\boldsymbol{\Gamma}})-F_m(\tilde{\mathbf{B}},\tilde{\boldsymbol{\Gamma}})\geq\frac{\eta}{2}\lambda_1\sum_{(j,k)\in\mathcal{T}_\mathbf{B}}|\hat{\beta}_{jk}|+\frac{\eta}{2}\lambda_2\sum_{(j,k)\in\mathcal{T}_{\boldsymbol{\Gamma}}}\hat{\gamma}_{jk}+\frac{c}{2}\|(\hat{\mathbf{B}},\hat{\boldsymbol{\Gamma}})-(\tilde{\mathbf{B}},\tilde{\boldsymbol{\Gamma}})\|_F^2.
\end{equation*}
Since $(\hat{\mathbf{B}},\hat{\boldsymbol{\Gamma}})$ is the global minimizer of $F_m$ over $\mathcal{F}$ and $(\tilde{\mathbf{B}},\tilde{\boldsymbol{\Gamma}})\in\mathcal{F}_{\mathcal{S}}\subseteq\mathcal{F}$, it holds
\begin{equation*}
F_m(\hat{\mathbf{B}},\hat{\boldsymbol{\Gamma}})\leq F_m(\tilde{\mathbf{B}},\tilde{\boldsymbol{\Gamma}}).
\end{equation*}
Combining the previous inequalities, we conclude
\begin{equation*}
\sum_{(j,k)\in\mathcal{T}_\mathbf{B}}|\hat{\beta}_{jk}|=0\quad\text{and}\quad\sum_{(j,k)\in\mathcal{T}_{\boldsymbol{\Gamma}}}\hat{\gamma}_{jk}=0,
\end{equation*}
with probability tending to one. Together with the feasibility implications above, this result gives
\begin{equation*}
\hat{\beta}_{jk}=0\,\forall\,(j,k)\in\mathcal{S}_\mathbf{B}^c\quad\text{and}\quad\hat{\gamma}_{jk}=0\,\forall\,(j,k)\in\mathcal{S}_{\boldsymbol{\Gamma}}^c,
\end{equation*}
proving $(\hat{\mathbf{B}},\hat{\boldsymbol{\Gamma}})$ contains no false positives.

\paragraph{Part 4: Consistency of structure recovery} 

Part~2 guarantees $(\hat{\mathbf{B}},\hat{\boldsymbol{\Gamma}})$ contain no false negatives. Part~3 guarantees $(\hat{\mathbf{B}},\hat{\boldsymbol{\Gamma}})$ contain no false positives. It immediately follows
\begin{equation*}
\mathrm{Pr}\left(\mathcal{G}(\hat{\mathbf{B}})=\mathcal{G}(\mathbf{B}_0)\enspace\text{and}\enspace\mathcal{G}(\hat{\boldsymbol{\Gamma}})=\mathcal{G}(\boldsymbol{\Gamma}_0)\right)\to1,
\end{equation*}
thereby proving the second claim of the theorem.

\end{proof}

\section{Implementation Details}
\label{app:implementation}

\subsection{Sparsity parameters}

The sparsity parameters $\lambda_1$ and $\lambda_2$ are taken equal and swept over a grid of 30 values equally spaced on a logarithmic grid between 1 and $10^{-3}$. As with all differentiable approaches, the weighted adjacency matrices produced by \texttt{MixDAG} can contain small nonzero values that violate acyclicity. These violations arise because gradient descent does not produce exact zeros when finite stopping rules are used. We thus follow the common practice of hard thresholding \citep[e.g.,][]{Zheng2018,Bello2022} and set any elements of $\mathbf{B}$ and $\boldsymbol{\Gamma}$ that are below 0.05 in absolute value to zero. All baselines are thresholded similarly.

\subsection{Polishing}

A drawback to $\ell_1$-induced sparsity is that it biases the estimates of $\mathbf{B}$ and $\boldsymbol{\Gamma}$ towards zero. This bias is a consequence of the shrinkage inherent to the $\ell_1$ norm and is a well-documented side effect of $\ell_1$ penalties \citep{Hastie2020}. We correct for this bias in all baselines through a ``polishing'' step. Specifically, after fitting the model, we retain the selected edges and reoptimize over this fixed structure, removing the $\ell_1$ penalties (and the DAG constraint since the solution is already acyclic). Let $\mathbf{M}^\mathbf{B},\mathbf{M}^{\boldsymbol{\Gamma}}\in\{0,1\}^{p\times p}$ denote binary masks with entries
\begin{equation*}
M^\mathbf{B}_{jk}:=1(\hat{\beta}_{jk}\neq0),\qquad M^{\boldsymbol{\Gamma}}_{jk}:=1(\hat{\gamma}_{jk}\neq0).
\end{equation*}
We then solve
\begin{equation*}
\underset{\mathbf{B}\in\mathbb{R}^{p\times p},\,\boldsymbol{\Gamma}\in\mathbb{R}_+^{p\times p}}{\min}\;l(\mathbf{M}^\mathbf{B}\odot\mathbf{B},\mathbf{M}^{\boldsymbol{\Gamma}}\odot\boldsymbol{\Gamma}).
\end{equation*}
This polishing step is typically fast, as it optimizes over a reduced parameter space and involves the negative log-likelihood only and not the acyclicity constraint or $\ell_1$ penalties.

\subsection{Computational Environment}

The experiments are run on a Linux machine with eight NVIDIA RTX 4090 graphics cards. Each experimental run used a single graphics card, with multiple runs performed in parallel. Our implementations are in Python and built on PyTorch \citep{Paszke2019}.

\section{Additional Experiments}
\label{app:additional}

We assess the robustness of \texttt{MixDAG} to a range of alternative simulation designs. In particular, we vary the graph sparsity, the graph type, and the cluster regime. Figures~\ref{fig:synthetic_50_100_inf+inf_erdos_renyi} and \ref{fig:synthetic_50_150_inf+inf_erdos_renyi} show results for denser graphs with $s=2p$ and $s=3p$ edges, respectively, rather than the $s=p$ edges used in the main experiments.
\begin{figure}[t]
\centering
\includegraphics[width=0.8\textwidth]{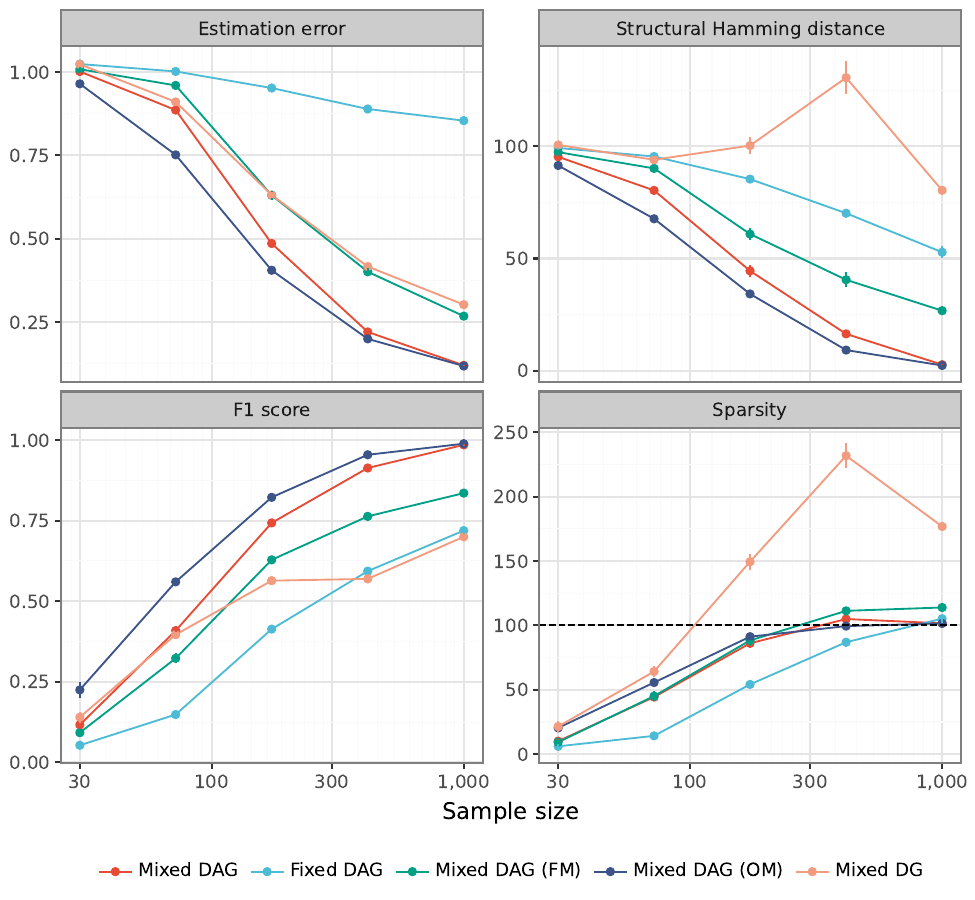}
\caption{Performance on synthetic data generated from Erdős--Rényi DAGs with $p=50$ nodes, $s=100$ edges, and $m=\lceil\sqrt{\ndot}\rceil$ clusters. Averages (solid points) and standard errors (error bars) are measured over 30 datasets. The dashed horizontal line in the bottom right panel indicates the number of edges in the true DAG.}
\label{fig:synthetic_50_100_inf+inf_erdos_renyi}
\end{figure}
\begin{figure}[t]
\centering
\includegraphics[width=0.8\textwidth]{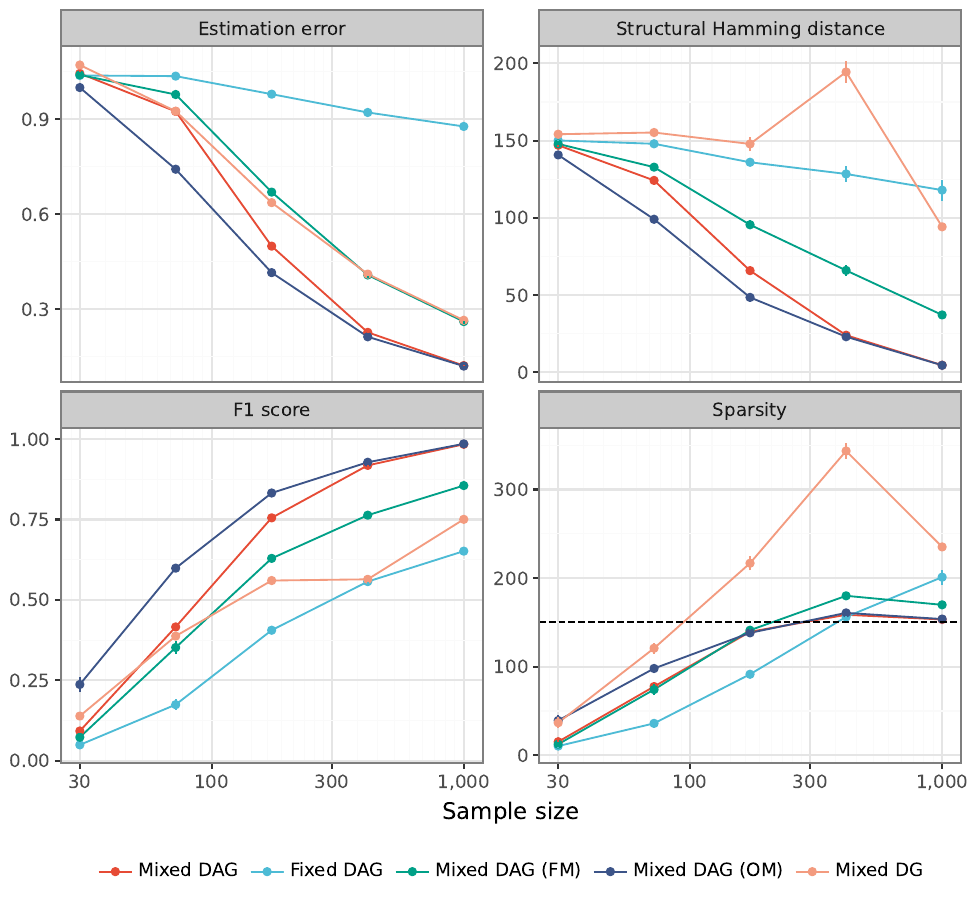}
\caption{Performance on synthetic data generated from Erdős--Rényi DAGs with $p=50$ nodes, $s=150$ edges, and $m=\lceil\sqrt{\ndot}\rceil$ clusters. Averages (solid points) and standard errors (error bars) are measured over 30 datasets. The dashed horizontal line in the bottom right panel indicates the number of edges in the true DAG.}
\label{fig:synthetic_50_150_inf+inf_erdos_renyi}
\end{figure}
Figures~\ref{fig:synthetic_50_50_inf+inf_scale_free_2} and \ref{fig:synthetic_50_50_inf+inf_scale_free_3} show results for scale-free graphs \citep{Barabasi1999} with degree exponents 2 and 3, respectively, in place of the Erdős--Rényi graphs used in the main experiments.
\begin{figure}[t]
\centering
\includegraphics[width=0.8\textwidth]{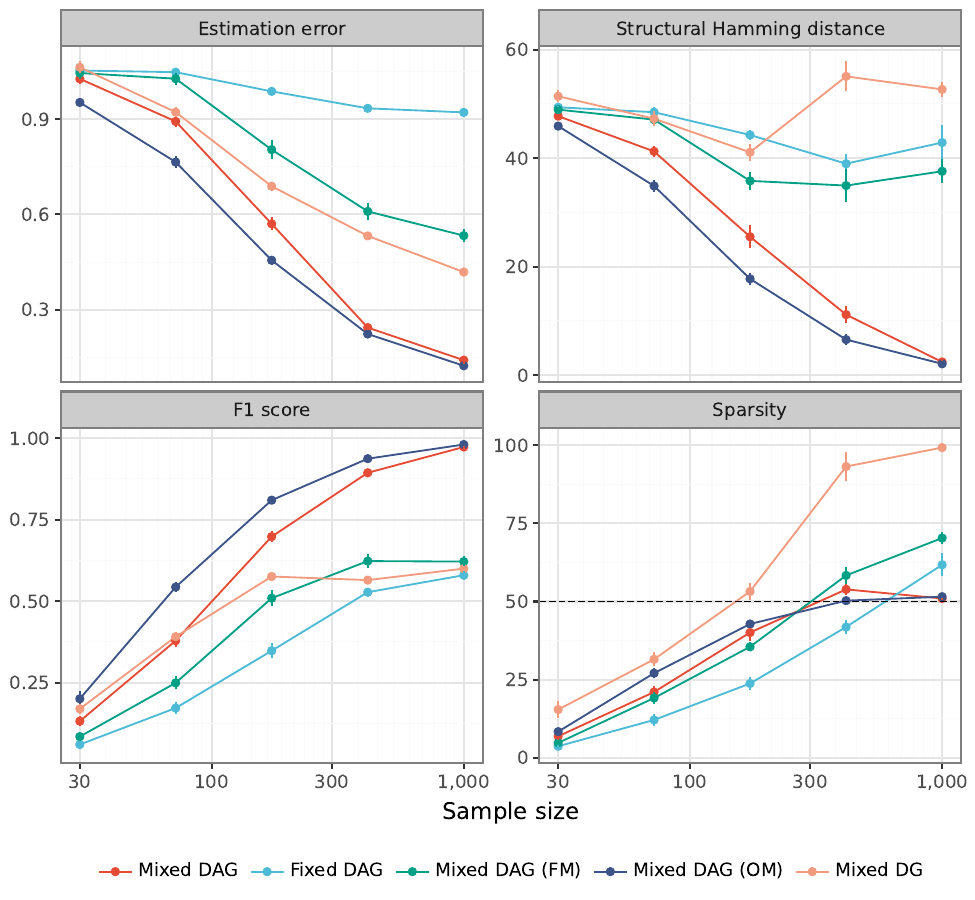}
\caption{Performance on synthetic data generated from scale-free (degree exponent 2) DAGs with $p=50$ nodes, $s=50$ edges, and $m=\lceil\sqrt{\ndot}\rceil$ clusters. Averages (solid points) and standard errors (error bars) are measured over 30 datasets. The dashed horizontal line in the bottom right panel indicates the number of edges in the true DAG.}
\label{fig:synthetic_50_50_inf+inf_scale_free_2}
\end{figure}
\begin{figure}[t]
\centering
\includegraphics[width=0.8\textwidth]{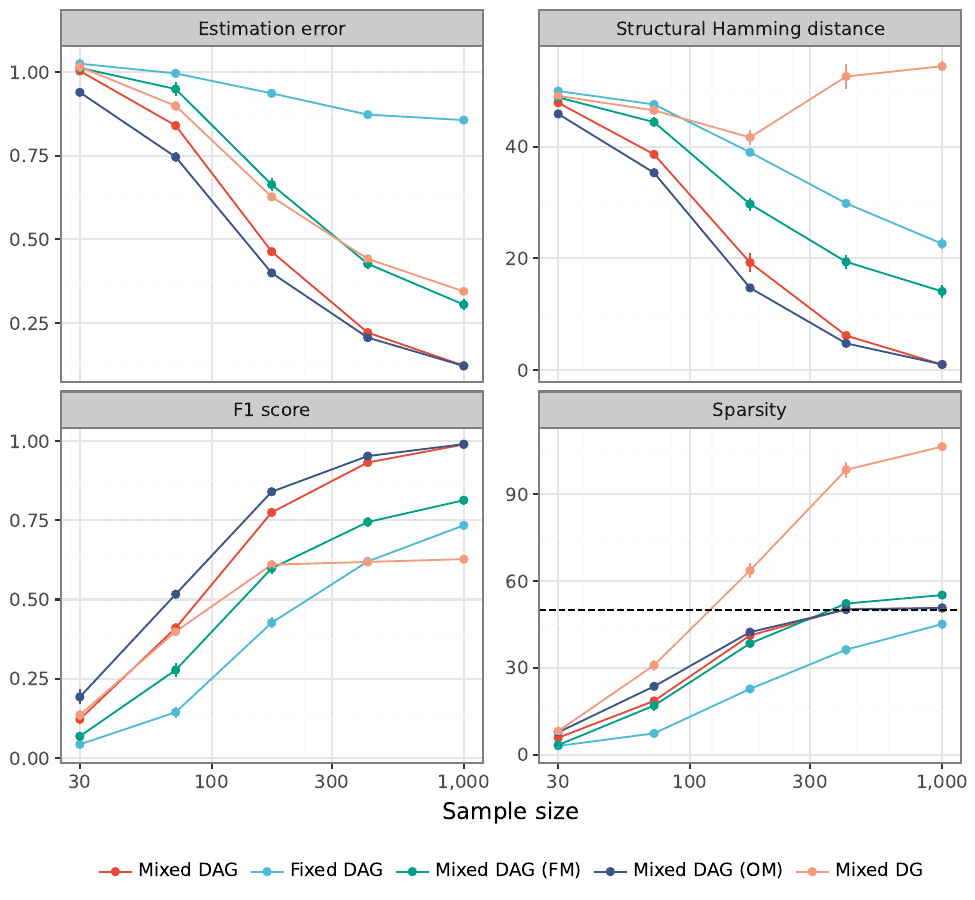}
\caption{Performance on synthetic data generated from scale-free (degree exponent 3) DAGs with $p=50$ nodes, $s=50$ edges, and $m=\lceil\sqrt{\ndot}\rceil$ clusters. Averages (solid points) and standard errors (error bars) are measured over 30 datasets. The dashed horizontal line in the bottom right panel indicates the number of edges in the true DAG.}
\label{fig:synthetic_50_50_inf+inf_scale_free_3}
\end{figure}
Finally, Figures~\ref{fig:synthetic_50_50_inf+fix_erdos_renyi} and \ref{fig:synthetic_50_50_fix+inf_erdos_renyi} show results for alternative cluster regimes with fixed average cluster size ($m=\lceil\ndot/10\rceil$) and fixed number of clusters ($m=10$), respectively, instead of the regime in the main experiments where both the number of clusters and cluster sizes increase with $\ndot$.
\begin{figure}[t]
\centering
\includegraphics[width=0.8\textwidth]{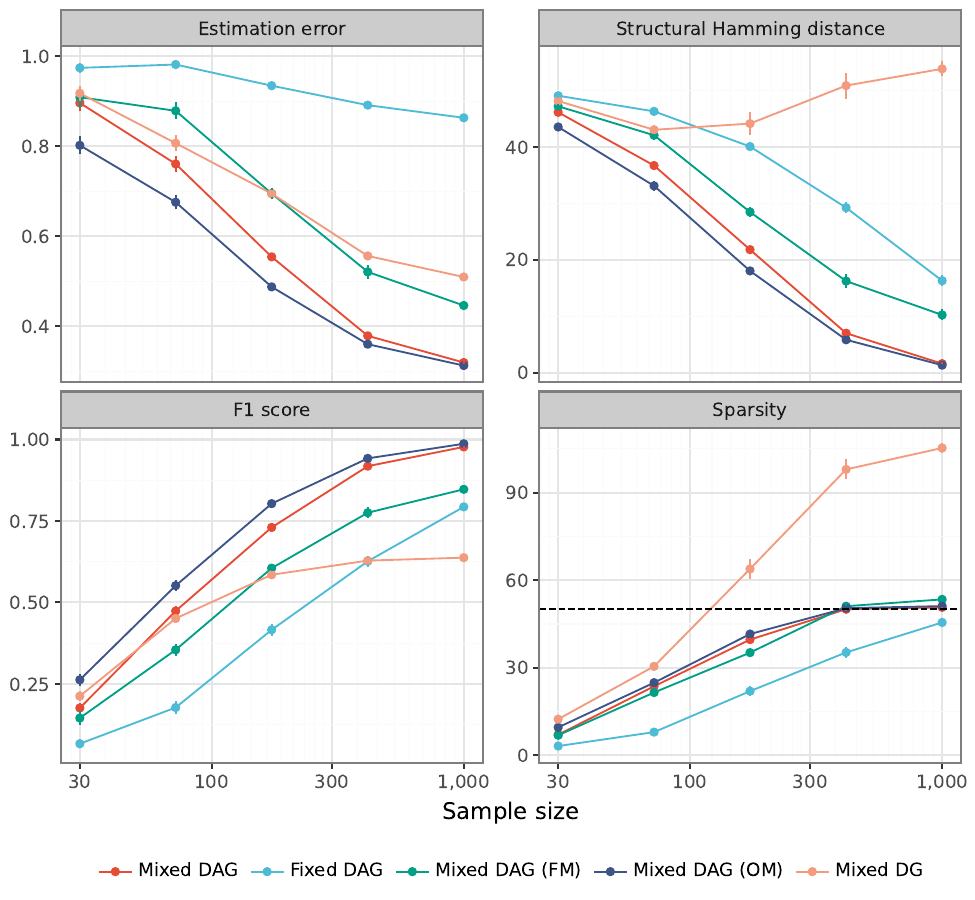}
\caption{Performance on synthetic data generated from Erdős--Rényi DAGs with $p=50$ nodes, $s=50$ edges, and $m=\lceil\ndot/10\rceil$ clusters. Averages (solid points) and standard errors (error bars) are measured over 30 datasets. The dashed horizontal line in the bottom right panel indicates the number of edges in the true DAG.}
\label{fig:synthetic_50_50_inf+fix_erdos_renyi}
\end{figure}
\begin{figure}[t]
\centering
\includegraphics[width=0.8\textwidth]{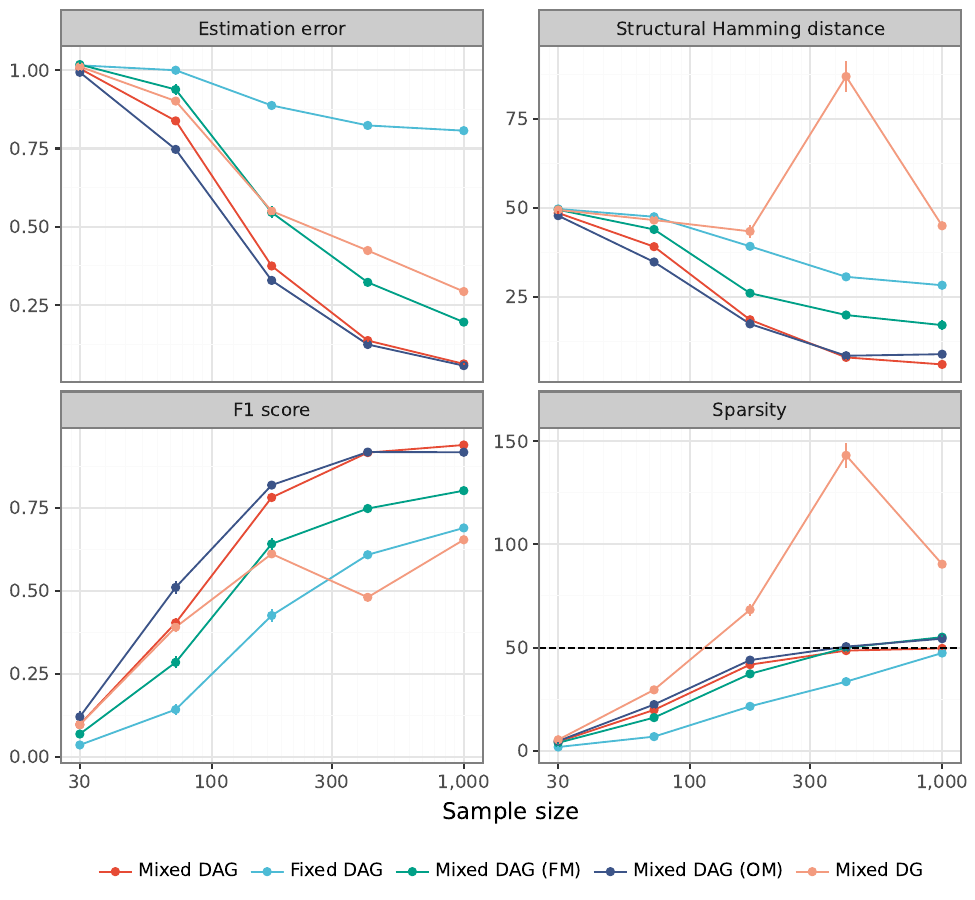}
\caption{Performance on synthetic data generated from Erdős--Rényi DAGs with $p=50$ nodes, $s=50$ edges, and $m=10$ clusters. Averages (solid points) and standard errors (error bars) are measured over 30 datasets. The dashed horizontal line in the bottom right panel indicates the number of edges in the true DAG.}
\label{fig:synthetic_50_50_fix+inf_erdos_renyi}
\end{figure}
Across all designs, \texttt{MixDAG} continues to perform well relative to the competing methods, indicating the gains in the main experiments are not specific to a particular graph sparsity, graph type, or cluster regime.

\clearpage

\bibliography{library}

\end{document}